\documentclass{article}
\usepackage{iclr2026_conference,times}

\usepackage{amsmath,amsfonts,bm}

\def\1{\bm{1}}

\DeclareMathAlphabet{\mathsfit}{\encodingdefault}{\sfdefault}{m}{sl}
\SetMathAlphabet{\mathsfit}{bold}{\encodingdefault}{\sfdefault}{bx}{n}

\usepackage{amsmath}
\usepackage{amssymb}
\usepackage{amsthm}
\usepackage{thmtools}
\usepackage{thm-restate}
\usepackage{booktabs}
\usepackage{array}
\usepackage{graphicx}
\usepackage{microtype}
\usepackage{placeins}
\usepackage{float}
\usepackage{subcaption}
\usepackage{url}
\usepackage{hyperref}
\hypersetup{hidelinks}
\usepackage{wrapfig}
\usepackage{algorithm}
\usepackage{algorithmic}
\usepackage{titletoc}
\titlecontents{section}[1.6em] {\smallskip} {\contentslabel{1.6em}} {\hspace*{-1.6em}} {\titlerule*[0.6pc]{.}\contentspage} \titlecontents{subsection}[3.9em] {} {\contentslabel{2.5em}} {\hspace*{-2.5em}} {\titlerule*[0.6pc]{.}\contentspage}

\makeatletter
\renewcommand{\paragraph}{\@startsection{paragraph}{4}{\z@}{0.45ex plus 0.3ex minus .2ex}{-1em}{\normalsize\bf}}
\def\thm@space@setup{\thm@preskip=2pt plus 1pt minus 1pt \thm@postskip=\thm@preskip} \makeatother

\AtBeginDocument{%
  \setlength{\abovedisplayskip}{4pt plus 2pt minus 2pt}%
  \setlength{\belowdisplayskip}{4pt plus 2pt minus 2pt}%
  \setlength{\abovedisplayshortskip}{3pt plus 2pt minus 2pt}%
  \setlength{\belowdisplayshortskip}{4pt plus 2pt minus 2pt}%
} 

\newcommand{\fitwidth}[1]{%
  \resizebox{\ifdim\width>\linewidth\linewidth\else\width\fi}{!}{#1}%
}

\newtheorem{theorem}{Theorem} \newtheorem{lemma}{Lemma} \newtheorem{assumption}{Assumption} \newtheorem{definition}{Definition}  \newtheorem{proposition}{Proposition}  \newtheorem{remark}{Remark}

\makeatletter
\newcommand{\thmspacingfloor}{%
  \def\th@plain{\thm@preskip 6pt plus 1pt minus 1pt \thm@postskip 5pt plus 1pt minus 1pt \itshape}%
  \def\th@definition{\thm@preskip 6pt plus 1pt minus 1pt \thm@postskip 5pt plus 1pt minus 1pt \normalfont}%
  \theoremstyle{plain}}
\makeatother

\usepackage{tcolorbox}
\tcbset{ colback=gray!10!white,        
    colframe=gray!10!white,       
    boxrule=0mm,                  
    sharp corners,                
    width=0.48\textwidth,             
    before=\vspace{-1mm},         
    after=\vspace{-1mm},          
    boxsep=0.5mm,                 
    left=1mm,                     
    right=1mm,                    
    top=0.5mm,                    
    bottom=0.5mm                  
}

\title{DSReg: Provably Recovering Individual World Latents without Reconstruction}

\author{
  Yujia Zheng\textsuperscript{1}
  David Klindt\textsuperscript{2}
  Randall Balestriero\textsuperscript{3}
  Bernhard Sch\"olkopf\textsuperscript{4,5}\\
  \textsuperscript{1}University of Illinois Urbana-Champaign
  \textsuperscript{2}Cold Spring Harbor Laboratory\\
  \textsuperscript{3}Brown University
  \textsuperscript{4}Max Planck Institute for Intelligent Systems
  \textsuperscript{5}ELLIS Institute T\"ubingen\\
\normalfont Project page: \url{https://dsreg.github.io/}
}

\iclrfinalcopy
\begin{document}
\maketitle

\begin{abstract}
\looseness=-1
Methods that recover individual latent variables of the world, from nonlinear ICA to dictionary learning and causal representation learning, anchor the latents to observations through reconstruction, auxiliary supervision, or distributional asymmetries such as non-Gaussianity. Methods without these anchors, including joint-embedding predictive architectures (JEPAs), identify the latent state only up to a linear transformation, so individual latents remain mixed. We close this gap: individual world latents can be provably recovered with no reconstruction, no decoder, and no labels. The key condition is \emph{Structural Diversity}: different latents leave distinct dependency footprints on observations, just as \textit{no two snowflakes are alike}. Building on the linear identifiability that LeJEPA provides, we prove that under Structural Diversity, \textbf{\emph{DSReg}} (\textbf{\emph{D}}ependency-\textbf{\emph{S}}parsity \textbf{\emph{Reg}}ularization) recovers individual world latents up to signed permutation, without reconstruction or a decoder. It applies post hoc to any linearly identified representation, reusing trained checkpoints at no loss over joint training, and establishes the first fully identifiable JEPA that recovers every world latent. Moreover, as a condition on dependency footprints, Structural Diversity is strictly weaker than all structural conditions of prior identifiable latent variable models. Across synthetic regimes, world model probes, learned visual encoders, and external renderers, DSReg preserves dense prediction while improving individual-latent recovery and downstream use with scales.
\end{abstract}

\vspace{-0.2em}
\section{Introduction}
\vspace{-0.2em}

World models are useful when their learned variables can be treated as variables of the world. A representation may contain the full latent state and still hide every physical factor inside a dense mixture of learned variables. Such mixing can leave predictive performance untouched while taking away the variables needed for intervention, precise control, visual editing, and mechanistic inspection. Many modules built on top of a world model act through a few variables at a time, such as a controller moving one object or a monitor watching one factor, and for such uses a well-spanned mixture may not be enough. A model that represents a scene through position, velocity, contact, and object attributes should isolate those individual latents rather than fold them into one dense state vector.

\begin{figure}[b!]
\centering
\vspace{-2pt}
\includegraphics[width=0.8\linewidth]{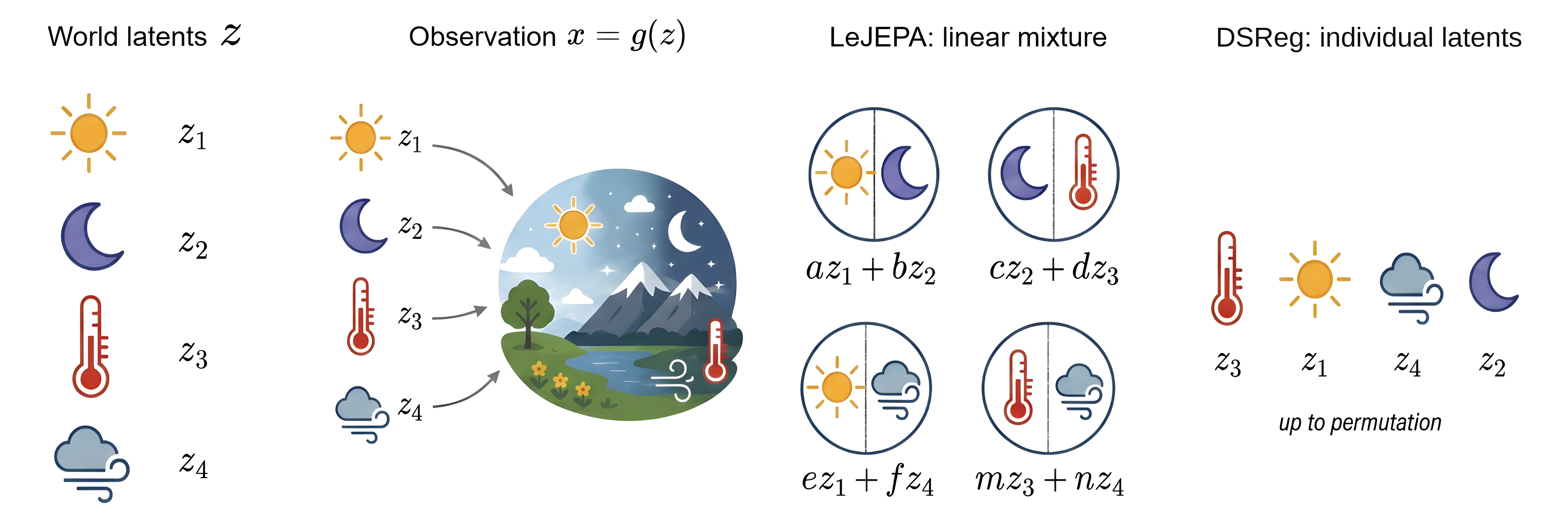}
\caption{\textbf{DSReg turns a mixed latent state into individual latents.} World latents $z$ generate observations $x=g(z)$. LeJEPA identifies the latent state up to a linear transformation, $\hat z = A z$, so each learned variable can remain a mixture of ground-truth latents. DSReg recovers individual latents up to signed permutation.}
\label{fig:intro}
\vspace{-8pt}
\end{figure}

Joint-embedding predictive architectures (JEPAs) learn world models by predicting in representation space instead of reconstructing observations \citep{lecun2022path}. The choice, however, removes the anchor that grounds latents elsewhere in the identifiability literature. No decoder, observation likelihood, or auxiliary signal ties each learned variable to a variable of the world. Collapse, the best-known symptom of this missing anchor, is handled by an established toolkit \citep{grill2020bootstrap,chen2021exploring,zbontar2021barlow,bardes2022vicreg}. At the same time, avoiding collapse does not recover the latents of the world: a non-collapsed representation may still encode them as an invertible mixture. What is needed is identifiability, the guarantee that every representation consistent with the training criterion matches the true latents up to a known ambiguity, such as a rotation.

Theory has begun to catch up. Under Gaussian latent-world assumptions, LeJEPA identifies the latent state up to a linear transformation \citep{klindt2026worldmodel}, separating the representations that merely predict well from those that align with the hidden generative process of the world. Yet each individual latent can still be a mixture of several ground-truth factors (Figure~\ref{fig:intro}), and the mixture does harm where world models are expected to help. Consider a household robot asked to slide a cup to the left while keeping a nearby knife fixed: if one learned latent mixes the cup position with the knife position, changing that latent is no longer a targeted cup movement, and a correction meant for the cup can also shift the knife. Thus, a fundamental question about learning world models remains open:

\begin{center}
\vspace{-3pt}
\emph{Can we learn individual world latents without reconstruction?}
\vspace{-5pt}
\end{center}

The question has deep roots in the identifiability literature. In nonlinear ICA, one line of work uses auxiliary variables as weak supervision \citep{hyvarinen2016unsupervised,hyvarinen2019nonlinear,yao2021learning,halva2021disentangling,lachapelle2021disentanglement}, while another constrains the mixing function itself \citep{taleb1999source,moran2022sparsevae,kivva2022identifiability,zheng2022nonlinearica,buchholz2022function}. A reconstruction-free line reaches component-wise recovery through contrastive objectives, but leans on nonstationarity, non-Gaussian temporal dependence, or a non-Gaussian conditional \citep{hyvarinen2016unsupervised,hyvarinen2017pcl,zimmermann2021contrastive}, none of which a stationary Gaussian world state offers (Appendix~\ref{app:related}). Meanwhile, in causal representation learning, identifiability typically rests on interventional data \citep{von2023nonparametric,jiang2023learning,jin2023learning,zhang2024causal} or counterfactual views \citep{von2021self,brehmer2022weakly}. Each of these routes anchors the latents in reconstruction, an explicit likelihood, or symmetry-breaking side information. In contrast, methods that discard every anchor stop at the linear guarantee above, and recovering individual latents without any of these anchors has remained an open problem.

\paragraph{Contributions.}\looseness=-1 We close this gap. We prove, to the best of our knowledge, the first component-wise identifiability result that requires no reconstruction, labels, or distributional asymmetry. Under \emph{Structural Diversity}, which asks only that different latents leave distinct dependency footprints on observations, together with a standard faithfulness condition, each latent variable of the world is identified up to signed permutation. Therefore, each learned variable is provably a variable of the hidden world. Moreover, we prove that Structural Diversity is strictly weaker than prior structural conditions across the identifiability literature. Our theory further prescribes an actionable objective, named \textbf{\emph{DSReg}} (\textbf{\emph{D}}ependency-\textbf{\emph{S}}parsity \textbf{\emph{Reg}}ularization): it applies post hoc to any linearly identified representation, reusing trained checkpoints at no loss relative to joint training, all without reconstruction or a decoder. Instantiated on a JEPA whose predictive objective supplies the linear-identifiability premise, DSReg resolves the Gaussian rotational invariance and pins down individual latents. We validate the theory and the method across synthetic regimes, world model probes, learned visual encoders, and external renderers: DSReg preserves dense prediction while improving individual-latent recovery and its downstream use, from sparse control and visual editing to few-shot latent readout of world state.

\vspace{-0.2em}
\section{Preliminaries}
\vspace{-0.2em}

\label{sec:preliminaries}
Identifiability asks when the latent variables that generated the observations can be recovered, up to relabeling and sign. Throughout, $z=(z_1,\ldots,z_d)$ is the latent state of the world, $x=g(z)$ an observation, and $h(z)=f_\theta(g(z))$ the estimated latents of a JEPA encoder $f_\theta$. The recovery we target is component-wise. Each estimated variable should be one variable of the world rather than an arbitrary unknown mixture of several of them, as the next definition makes precise.

\begin{definition}[Identifiability of individual world latents]
\label{def:identifiability}
A representation $\hat z=h(z)$ \emph{recovers the individual world latents}, and is called \emph{signed-permutation identifiable}, if $\hat z_i=s_iz_{\pi(i)}$ almost surely for every $i\in[d]$, for some permutation $\pi$ of $[d]$ and signs $s_1,\ldots,s_d\in\{\pm1\}$.
\end{definition}

\paragraph{Identifiability of latent variable models.} Nonlinear Independent Component Analysis obtains component-wise guarantees by adding structure beyond the marginal distribution of $x$: auxiliary variables \citep{hyvarinen2016unsupervised,hyvarinen2019nonlinear,yao2021learning,halva2021disentangling,lachapelle2021disentanglement,hyvarinen2024identifiability} or constraints on the mixing function \citep{taleb1999source,moran2022sparsevae,kivva2022identifiability,zheng2022nonlinearica,buchholz2022function}. Causal representation learning extends such guarantees to dependent latents through interventions \citep{von2023nonparametric,jiang2023learning,jin2023learning,varici2025score}, counterfactual or multi-view observations \citep{von2021self,brehmer2022weakly,yao2024multi,morioka2024causal}, distributional shifts \citep{zhang2024causal,ng2025causal}, or combinations of complementary constraints \citep{li2025synergy,reizinger2025identifiable,yao2025unifying}. Meanwhile, parallel results exist for factor analysis \citep{hu2008identification} and dictionary learning \citep{zheng2026diverse}. Nearly all of these strategies, however, anchor the latents through observation reconstruction or an explicit likelihood. That anchor is exactly what the JEPA objective removes, so the existing techniques do not transfer to settings without reconstruction.

\paragraph{JEPAs and linear identifiability.} A JEPA learns by predicting the embedding of a target observation from the embedding of a related context observation, rather than reconstructing the observation itself. In a world model the pairs are consecutive states of a temporal process, $x_t=g(z_t)$ and $x_{t+1}=g(z_{t+1})$; as in \citet{klindt2026worldmodel} we assume the process is stationary, so all time slices share one law and we drop the subscript, writing $(x,x^+)$. Given such a pair, an encoder $f_\theta$ is trained so that the representation of $x$ aligns with that of $x^+$, while constraints prevent collapse. The objective is therefore deliberately non-reconstructive: no decoder maps representations back to observations, and no reconstruction error or observation likelihood enters training. What it can identify on its own is the weaker notion of linear identifiability.

\begin{definition}[Linear identifiability]
\label{def:linear-identifiability}
A representation $\hat z=h(z)$ is \emph{linearly identifiable} if $\hat z=Az$ almost surely for some invertible $A\in\mathbb R^{d\times d}$.
\end{definition}

LeJEPA \citep{balestriero2025lejepa} instantiates this design with SIGReg, a scalable regularizer that drives the embedding toward an isotropic Gaussian. In a Gaussian latent dynamical system, \citet{klindt2026worldmodel} proves the following characterization of the ambiguity DSReg must resolve.

\begin{theorem}[LeJEPA linear identifiability \citep{klindt2026worldmodel}]
\label{thm:lejepa-prelim}
Let $z\sim\mathcal{N}(0,I_d)$, and let the target latent follow the stationary Gaussian transition $z^+=\rho z+\sqrt{1-\rho^2}\,\eta$ with $\eta\sim\mathcal{N}(0,I_d)$ independent of $z$, and $\rho\in(0,1)$. Let $x=g(z)$, $x^+=g(z^+)$, and let $h=f_\theta\circ g:\mathbb{R}^d\to\mathbb{R}^d$ be measurable with $h(z)\sim\mathcal{N}(0,I_d)$. Then the population alignment loss satisfies
\begin{equation}
    \mathcal{L}_{\mathrm{align}}(h) = \mathbb{E}\big[\|h(z^+)-h(z)\|_2^2\big] \;\ge\; 2(1-\rho)d,
\end{equation}
with equality if and only if $h(z)=Qz$ almost surely for some orthogonal $Q\in O(d)$. At the alignment optimum, $h$ is linearly identifiable in the sense of Definition~\ref{def:linear-identifiability}, with orthogonal $A=Q$.
\end{theorem}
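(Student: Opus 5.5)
The plan is to expand the loss around the cross-covariance between $h(z)$ and $h(z^+)$ and reduce the inequality to a bound on the trace of that cross-covariance. Stationarity gives $z^+\sim\mathcal N(0,I_d)$, so $h(z^+)\sim\mathcal N(0,I_d)$ as well. Hence
\begin{equation*}
\mathcal L_{\mathrm{align}}(h)=\mathbb E\|h(z^+)\|^2+\mathbb E\|h(z)\|^2-2\,\mathbb E\langle h(z),h(z^+)\rangle = 2d-2\sum_{k=1}^d \mathbb E[h_k(z)h_k(z^+)].
\end{equation*}
It therefore suffices to show $\sum_k \mathbb E[h_k(z)h_k(z^+)]\le \rho d$, with equality iff $h$ is orthogonal-linear. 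For this I would use the Ornstein--Uhlenbeck / Mehler structure of the transition. The conditional expectation $\mathbb E[\phi(z^+)\mid z]=(P_\rho\phi)(z)$ is the Mehler operator $P_\rho$, which is diagonal in the multivariate Hermite basis $\{H_\alpha\}$ of $L^2(\mathcal N(0,I_d))$ with eigenvalue $\rho^{|\alpha|}$.

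Expand each coordinate as $h_k=\sum_\alpha c_{k,\alpha}H_\alpha$, with orthonormal $H_\alpha$. Then $\mathbb E[h_k(z)h_k(z^+)]=\sum_\alpha \rho^{|\alpha|}c_{k,\alpha}^2$. The constraint $h(z)\sim\mathcal N(0,I_d)$ gives $c_{k,0}=\mathbb E h_k=0$ and $\sum_\alpha c_{k,\alpha}^2=\mathbb E h_k^2=1$. Since $|\alpha|\ge1$ for every surviving term and $\rho\in(0,1)$, we get $\rho^{|\alpha|}\le\rho$, so each summand is at most $\rho$. This yields $\mathcal L_{\mathrm{align}}\ge 2d-2\rho d=2(1-\rho)d$, which is the claimed bound.

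For the equality case, equality forces $c_{k,\alpha}=0$ whenever $|\alpha|\ge 2$, because $\rho^{|\alpha|}<\rho$ strictly there. Each $h_k$ is then a degree-one Hermite polynomial, i.e.\ linear: $h(z)=Qz$ a.s.\ for some matrix $Q$. Its covariance is $QQ^\top=\mathrm{Cov}(h(z))=I_d$, so $Q\in O(d)$. Conversely, if $h=Qz$ with $Q$ orthogonal, then $h(z)\sim\mathcal N(0,I_d)$ and $\mathbb E\langle Qz,Qz^+\rangle=\mathbb E\langle z,z^+\rangle=\rho d$, so the bound is attained. Invertibility of $Q$ then gives linear identifiability in the sense of Definition~\ref{def:linear-identifiability}.

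The main technical point is justifying the Hermite/Mehler spectral decomposition. Concretely, I need that $\mathbb E[H_\alpha(z)H_\beta(z^+)]=\rho^{|\alpha|}\delta_{\alpha\beta}$ for the correlated Gaussian pair $(z,z^+)$, and that the expansion converges in $L^2$ for merely measurable $h$. The latter holds since $h_k\in L^2$ (it has unit variance) and Hermite polynomials are complete. The former I would verify via the generating function $\exp(\langle t,z\rangle-\|t\|^2/2)=\sum_\alpha H_\alpha(z)t^\alpha/\sqrt{\alpha!}$, computing $\mathbb E[e^{\langle s,z\rangle+\langle t,z^+\rangle-\|s\|^2/2-\|t\|^2/2}]=e^{\rho\langle s,t\rangle}$ and matching coefficients. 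Note that only the marginal Gaussianity of each coordinate and its unit variance are used for the inequality; joint Gaussianity enters only in concluding that $Q$ is orthogonal rather than merely having orthonormal rows.
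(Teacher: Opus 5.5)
Your proposal is correct and follows the paper's own argument essentially step for step: you reduce the loss to $2d-2\sum_k\mathbb E[h_k(z)h_k(z^+)]$, expand in the Hermite basis with Mehler eigenvalues $\rho^{|\alpha|}$, use $\rho^{|\alpha|}<\rho$ for $|\alpha|\ge2$ to force linearity at equality, and get $QQ^\top=I_d$ from the covariance constraint; your generating-function check of the Mehler identity and your explicit converse fill in details the paper's sketch leaves implicit. One small wording fix to your closing remark: the inequality uses only zero mean and unit variance of each $h_k$, and the cross-coordinate constraint $\operatorname{Cov}(h(z))=I_d$ is what upgrades unit-norm rows to $Q\in O(d)$ (your phrase ``orthonormal rows'' already means orthogonal for a square matrix).
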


Under LeJEPA's Gaussian-world conditions, alignment plus the Gaussian constraint identifies the latent process up to a rotation. Each estimated variable may still be a mixture of true latents, and the next section explains why \textit{no distributional criterion can ever do better}.

\vspace{-0.2em}
\section{Identifiability from Structural Diversity}
\vspace{-0.2em}

\label{sec:theory}
The theory develops in three steps: Section~\ref{sec:linear-to-signed} isolates the obstacle, a distributionally undetectable rotation that can mix every latent; Section~\ref{sec:structural-diversity} builds the dependency structure that this rotation cannot hide from and proves the main identifiability theorem; Section~\ref{sec:practical-estimator} turns DSReg into a practical estimator whose recovery guarantee survives estimation error in the Jacobians.

\vspace{-0.1em}
\subsection{From linear to signed-permutation identifiability}
\label{sec:linear-to-signed}
\vspace{-0.1em}

\paragraph{Linear is not component-wise.} Linear identifiability does not imply recovery of individual latents. The counterexample is any rotation. If $\hat z=Az$ is linearly identifiable and $U\in O(d)$ is not a signed permutation, then $U\hat z=(UA)z$ satisfies Definition~\ref{def:linear-identifiability} equally well, yet each of its variables is a mixture of several world latents. The two notions are separated by exactly this freedom. In the Gaussian JEPA setting no per-latent rescaling enters the comparison, because both $z$ and the representation are standardized and the remaining ambiguity from Theorem~\ref{thm:lejepa-prelim} is orthogonal.

\paragraph{The gap matters.} A representation whose variables mix object position with illumination predicts the scene as well as one that separates them, but a controller that is allowed to change only one latent cannot move the object without also changing the lighting. Under the premise $h(z)=Qz$, a generic row of $Qz$ mixes all $d$ latents at once, so a linearly correct representation may still be ill suited to the downstream tasks that motivate learning a world model in the first place.

\paragraph{The obstacle: rotation invariance.} The linear indeterminacy is not unique to LeJEPA. It limits any objective that evaluates representations only through their distribution, because the Gaussian distribution is rotation invariant. An isotropic Gaussian is like a ball. Its density $p(z)\propto\exp(-\|z\|^2/2)$ depends on $z$ only through its norm, so rotating changes nothing and $Uz \sim \mathcal{N}(0, I_d)$ for every $U \in O(d)$. A non-Gaussian law such as Laplace would break this symmetry, which is how contrastive nonlinear ICA separates components. The Gaussian state gives it nothing to work with. Thus, $h$ and $Uh$ are distributionally indistinguishable, even though the rotation $U$ may arbitrarily mix the latent variables. An unknown rotation breaks any correspondence $h$ already achieves, and downstream use can then silently destroy the alignment without any warning signal that it happened.


\paragraph{From obstacle to signal.} The same observation says where a fix must come from. If no function of the latent distribution can detect the rotation, the identifying signal must live outside that distribution, in something observable that does change when the latents are rotated. Fortunately, we find such a signal in how observations depend on the estimated latents, as we elaborate in the next section.

\begin{tcolorbox}[width=\linewidth]
\textbf{Insight.}\ Rotations of a Gaussian latent state are undetectable to every distributional criterion, yet they mix the variables. Recovering individual latents needs a rotation-sensitive signal. DSReg takes that signal from how the observations depend on the estimated latents.
\end{tcolorbox}

\vspace{-0.15em}
\subsection{Dependency supports and Structural Diversity}
\label{sec:structural-diversity}
\vspace{-0.1em}

\looseness=-1 DSReg takes the rotation-sensitive signal from how observed variables depend on latents. Let $x=g(z)\in\mathbb R^p$ and define the dependency Jacobian $D(z)=\partial x/\partial z$. Throughout, $\mu$ is the law of $z$ (standard Gaussian under Theorem~\ref{thm:lejepa-prelim}). The map $g$ is differentiable $\mu$-almost everywhere with measurable Jacobian, so $D(z)$ is defined $\mu$-a.e. The \emph{dependency support} declares an entry active when its partial derivative is nonzero on a set of positive $\mu$ measure. For matrix-valued $M(z)$ we count active entries by
\[
    \|M(\cdot)\|_{0,\mu} = \left| \{(r,i):M_{ri}(z)\ne0 \text{ on a set of positive }\mu\text{ measure}\} \right|,
\]
a count with values in $\{0,1,\ldots,pd\}$. Only finitely many values occur, so some rotation attains the smallest value and minimizers over $O(d)$ exist. Neither $g$ nor $D$ is assumed known. Section~\ref{sec:practical-estimator} estimates the dependency structure from observations and estimated latents alone.

\paragraph{What rotation disturbs.} The support is what rotation changes while the latent distribution stays fixed. If one latent controls position and affects one set of observed variables while another controls color and affects a different set, the rotation $(z_i\pm z_j)/\sqrt2$ leaves the Gaussian law unchanged but makes each rotated latent act on the union of the two sets. The Jacobian support records this mixing. 

\begin{wrapfigure}[16]{r}{0.23\textwidth}
    \vspace{-10pt}
    \centering
    \includegraphics[width=0.90\linewidth]{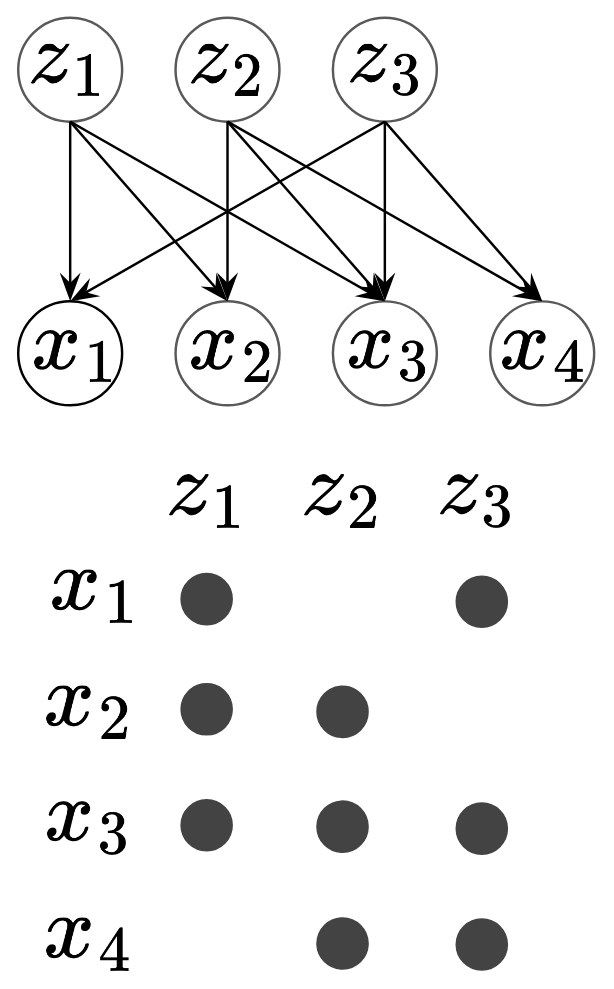}
    \vspace{5pt}
    \captionsetup{justification=raggedright,singlelinecheck=false,skip=2pt} \caption{\textbf{Dependency footprints.} Each column of $\partial x/\partial z$ marks which observed variables one latent affects.}
    \label{fig:dependency-footprints}
\end{wrapfigure}

The same example shows what structure must supply. Define the \emph{dependency footprint} of latent $z_i$ as
\[
    \mathcal S_i=\{\,r:D_{ri}(z)\ne0 \text{ on a set of positive }\mu\text{ measure}\,\},
\]
the set of observed variables that $z_i$ affects. Figure~\ref{fig:dependency-footprints} shows the same object as a support matrix. If two latents affected exactly the same observed variables, structure alone could not distinguish them, since a rotation confined to the pair changes nothing dependency structure can see. Identifiability from structure therefore requires different latents to leave different dependency footprints on the observed variables.

\begin{assumption}[Structural Diversity]
\label{asm:structural-diversity}
The dependency footprints are pairwise distinct: $\mathcal S_i\ne\mathcal S_j$ for all $i\ne j$.
\end{assumption}

\paragraph{A structural condition.} Structural Diversity constrains the dependency structure alone and places no requirement on the latent distribution. It asks only that no two latents affect exactly the same set of observed variables. Footprints may overlap, differ in a single variable, or even be nested.

To connect the dependency structure to the residual ambiguity, recall that training leaves one unknown rotation: $h(z)=Qz$ with $Q\in O(d)$. DSReg searches over a further rotation $R\in O(d)$, producing candidate latents $\tilde z=Rh(z)$. The composite map from true to candidate latents is $RQ$, and the goal is to choose $R$ so that $RQ$ is a signed permutation. The next lemma computes how the dependency Jacobian transforms under this search.

\begin{restatable}[Dependency bridge]{lemma}{dependencybridge}
\label{lem:dependency-bridge}
Suppose a representation is linearly identifiable with orthogonal ambiguity, so that $h(z)=Qz$ for some $Q\in O(d)$, as Theorem~\ref{thm:lejepa-prelim} provides for JEPA representations, and let $x=g(z)$ have dependency Jacobian $D(z)=\partial x/\partial z$ almost everywhere. Then for candidate latents $\tilde z=Rh(z)$ with $R\in O(d)$,
\begin{equation}
    \frac{\partial x}{\partial \tilde z} = D(z)Q^\top R^\top \qquad\text{almost everywhere}.
    \label{eq:theory-chain-rule}
\end{equation}
\end{restatable}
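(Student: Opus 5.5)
The plan is to express the observation as a function of the candidate latents through an explicit invertible linear change of variables, and then apply the chain rule. Since $h(z)=Qz$ almost surely and $\tilde z=Rh(z)$, we have $\tilde z=RQz$ almost surely. Because $R,Q\in O(d)$, the matrix $M:=RQ$ is orthogonal with $M^{-1}=M^\top=Q^\top R^\top$. Hence $z=Q^\top R^\top\tilde z$, and the observation factors as $x=g(z)=g(Q^\top R^\top\tilde z)=:\tilde g(\tilde z)$. The symbol $\partial x/\partial\tilde z$ in \eqref{eq:theory-chain-rule} is to be read as the Jacobian of this composite map $\tilde g$.

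Next, apply the chain rule to $\tilde g=g\circ L$ with the linear map $L(\tilde z)=Q^\top R^\top\tilde z$. Wherever $g$ is differentiable at $L(\tilde z)$, the map $\tilde g$ is differentiable at $\tilde z$. Its Jacobian is $Dg(L(\tilde z))\,DL=D(z)\,Q^\top R^\top$ with $z=L(\tilde z)$, which is exactly \eqref{eq:theory-chain-rule}. Only differentiability of $g$ at the point is needed: $L$ is linear and so everywhere differentiable, and the chain rule for a Fréchet-differentiable outer map composed with a linear inner map requires nothing more.

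The one step needing care is the qualifier ``almost everywhere'', which is where the measure theory enters. Let $N$ be the $\mu$-null set where $g$ fails to be differentiable. We must check that the exceptional set in $\tilde z$-coordinates, $L^{-1}(N)=RQN$, is null for the law of $\tilde z$. That law is the pushforward of $\mu$ under $M=RQ$, and $\mathbb P(\tilde z\in MN)=\mathbb P(z\in N)=0$ because $M$ is a bijection. Since $\mu$ is the standard Gaussian, the rotated law coincides with $\mu$ and is equivalent to Lebesgue measure, so ``a.e.'' has the same meaning in either coordinate system.

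Finally, we need that the a.s. identity $h(z)=Qz$, rather than an everywhere identity, does not disturb the conclusion. The Jacobian of $x$ with respect to $\tilde z$ is computed through the representation $\tilde z=RQz$, which holds off a null set. Modifying $h$ on a null set changes the relation only on a null set, which is absorbed into the a.e. statement. Measurability of $D(z)Q^\top R^\top$ follows from measurability of $D$ and continuity of $L$.
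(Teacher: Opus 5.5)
Your proposal is correct and follows the paper's proof essentially step for step. Like the paper, you replace $h$ by its a.e. representative $Qz$, invert $\tilde z=RQz$ as $z=Q^\top R^\top\tilde z$, apply the chain rule through the constant linear factor, and note that the orthogonal change of variables preserves $\mu$-null sets; your explicit pushforward check that the exceptional set $RQN$ is null is a slightly more careful version of the paper's one-line remark on Gaussian null sets.
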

The identity is nothing but the chain rule, since the premise makes the map between candidate and true latents the linear bijection $z=Q^\top R^\top\tilde z$. It is worth noting that no invertibility of the encoder on observations is needed, in keeping with an encoder that discards low-level detail (Appendix~\ref{app:proofs}). The ambiguity thus enters the dependency structure as right multiplication by the orthogonal $U=Q^\top R^\top$, and DSReg selects the remaining rotation by minimizing the dependency support. Among the representations in the identified orthogonal class, it prefers the one whose latents have the sparsest dependency on the observations,
\begin{equation}
    \min_{R\in O(d)} \Big\|\frac{\partial x}{\partial\tilde z}\Big\|_{0,\mu}, \qquad \tilde z=Rh(z).
    \label{eq:idjepa-population-criterion}
\end{equation}
\paragraph{Why the support sees the rotation.} For a Givens rotation on a fixed generic matrix, the possible effects on the support form a short taxonomy \citep{ghassami2020characterizing}. Apart from column swaps, any rotation that mixes two columns deactivates the targeted entry and activates entries wherever the two column supports differ, so it always leaves a visible trace (Proposition~\ref{prop:support-rotations} in Appendix~\ref{app:proofs} restates the taxonomy in our setting). For the function-valued dependency Jacobian, the support of a mixed candidate latent follows the union-support formula of Lemma~\ref{lem:functional-union-support} (Appendix~\ref{app:proofs}), under Assumption~\ref{asm:functional-no-cancellation} below. Under Structural Diversity, a rotation mixing latents must change the support.

\begin{assumption}[Functional no-cancellation]
\label{asm:functional-no-cancellation}
For each observation row $r$, write
\[
    I_r=\{i:D_{ri}(z)\ne0\text{ on a set of positive }\mu\text{ measure}\}.
\]
Then any constant-coefficient relation among the active derivative functions is trivial:
\[
    \sum_{i\in I_r} c_iD_{ri}(z)=0 \quad \text{for }\mu\text{-a.e. }z \quad\Longrightarrow\quad c_i=0 \text{ for all } i\in I_r .
\]
\end{assumption}

\begin{restatable}[Component-wise identifiability without reconstruction]{theorem}{idjepazheng}
\label{thm:idjepa-zheng}
In the setting of Lemma~\ref{lem:dependency-bridge}, assume Structural Diversity (Assumption~\ref{asm:structural-diversity}) and Functional no-cancellation (Assumption~\ref{asm:functional-no-cancellation}). Then for every minimizer $R$ of the support-sparsity criterion~\eqref{eq:idjepa-population-criterion}, the matrix $RQ$ is a signed permutation: there exist a permutation $\pi$ of $[d]$ and signs $s_1,\ldots,s_d\in\{\pm1\}$ such that the candidate latents $\tilde z=Rh(z)$ satisfy $\tilde z_i=s_iz_{\pi(i)}$ almost surely for every $i\in[d]$.
\end{restatable}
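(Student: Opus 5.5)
By Lemma~\ref{lem:dependency-bridge}, the candidate Jacobian is $D(z)U$ with $U=Q^\top R^\top\in O(d)$. The map $R\mapsto U$ is a bijection of $O(d)$, so the criterion~\eqref{eq:idjepa-population-criterion} is equivalent to minimizing $\|D(\cdot)U\|_{0,\mu}$ over $U\in O(d)$. The claim is then that every minimizer $U$ is a signed permutation; in that case $RQ=U^\top$ is also a signed permutation, and $\tilde z=U^\top z$ gives $\tilde z_i=s_iz_{\pi(i)}$.

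\textbf{Step 1: union-support formula.} Column $j$ of $DU$ has row-$r$ entry $\sum_{i}U_{ij}D_{ri}(z)$. Under Assumption~\ref{asm:functional-no-cancellation}, this function vanishes $\mu$-a.e. exactly when $U_{ij}=0$ for all $i\in I_r$; the coefficients are constants, so the assumption applies directly. Hence the footprint of candidate column $j$ is
\[
\tilde{\mathcal S}_j=\bigcup_{i\in\mathrm{supp}(U_{\cdot j})}\mathcal S_i .
\]
I would state this as the functional union-support lemma. The total count is $\|DU\|_{0,\mu}=\sum_j|\tilde{\mathcal S}_j|$.

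\textbf{Step 2: combinatorial lower bound.} Since $U$ is invertible, its support pattern contains a perfect matching: $\det U\neq0$ forces a nonzero term in the Leibniz expansion, so there is a permutation $\sigma$ with $U_{\sigma(j)j}\ne0$ for all $j$. Then $\tilde{\mathcal S}_j\supseteq\mathcal S_{\sigma(j)}$, and therefore
\[
\|DU\|_{0,\mu}\ge\sum_j|\mathcal S_{\sigma(j)}|=\|D\|_{0,\mu}.
\]
The identity $U=I$ attains this value, so the minimum equals $\|D\|_{0,\mu}$. At any minimizer, equality must hold for every $j$, giving $\tilde{\mathcal S}_j=\mathcal S_{\sigma(j)}$. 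In particular, $\mathcal S_i\subseteq\mathcal S_{\sigma(j)}$ whenever $U_{ij}\neq0$.

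\textbf{Step 3: ruling out extra nonzeros (main obstacle).} Define the directed graph $i\to\sigma(j)$ whenever $U_{ij}\ne0$ with $i\ne\sigma(j)$. Each such edge gives $\mathcal S_i\subseteq\mathcal S_{\sigma(j)}$, and by Structural Diversity the inclusion is strict. Strict inclusion is a strict partial order, so this graph has no cycles.

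Next, relabel the columns by $\sigma$, i.e.\ set $V=UP_\sigma^{-1}$, so that $V$ has a nonzero diagonal. The off-diagonal nonzero pattern of $V$ is acyclic. Ordering indices topologically by footprint inclusion, with larger footprints first, makes $V$ triangular.

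Finally, an orthogonal triangular matrix is diagonal. Its first column has a single nonzero entry and unit norm, and orthogonality of the remaining columns to it kills the rest of that row; induction finishes the argument. Hence $V$ is a diagonal matrix with $\pm1$ entries, and $U$ is a signed permutation.

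The delicate points are that the acyclicity argument needs the inclusions to be strict, which is exactly where Assumption~\ref{asm:structural-diversity} enters, and that Step~1 must be checked carefully for positive-measure supports.
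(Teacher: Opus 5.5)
Your proof is correct. It shares the paper's outer skeleton: the reduction $R\mapsto U=Q^\top R^\top$, the union-support formula (the paper's Lemma~\ref{lem:functional-union-support}), and a determinant-expansion matching. The core minimality argument, however, is genuinely different.

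The paper (Lemma~\ref{lem:functional-support-minimality}) uses orthogonality from the start. Singleton columns exhaust their rows, so the mixed columns and the rows they touch form a square orthogonal block. A matching inside that block gives a signed-permutation competitor that is never less sparse. Structural Diversity then enters through a contradiction: an inclusion-minimal footprint among the mixed rows makes the competitor strictly sparser.

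You instead take the matching on all of $U$, which needs only invertibility. You then characterize the equality case at a minimizer ($\tilde{\mathcal S}_j=\mathcal S_{\sigma(j)}$, so $\mathcal S_i\subseteq\mathcal S_{\sigma(j)}$ whenever $U_{ij}\neq0$). Structural Diversity makes every such inclusion strict, so the off-diagonal pattern is acyclic, and orthogonality is used only at the last step to collapse the resulting triangular matrix to a diagonal one.

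Your route is shorter, avoids the singleton/mixed bookkeeping, and is more informative. Everything up to the triangular form holds for any invertible $U$, so it describes the minimizers over $\mathrm{GL}(d)$ as column-permuted triangular matrices ordered by footprint inclusion. This shows exactly what the orthogonal premise buys. With $\mathcal S_1\subsetneq\mathcal S_2$, the invertible $U$ with columns $e_1$ and $e_2+c\,e_1$ keeps the count at its minimum. So without orthogonality one would need Non-Inclusion (no edges at all) rather than Structural Diversity.

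The paper's exchange argument, for its part, needs only one strict improvement rather than the full equality structure. Its SD-free steps yield the global minimum $\sum_i|\mathcal S_i|$ of Remark~\ref{rem:global-minimum}. Your lower bound together with $U=I$ gives the same value, so nothing used later (e.g.\ in Lemma~\ref{lem:pattern-margin}) is lost.

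One cosmetic point: ordering larger footprints first makes the permuted matrix lower rather than upper triangular. So start your induction from the first row instead of the first column; orthogonality of the rows makes this symmetric.
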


After the DSReg rotation, each estimated latent is therefore one individual world latent up to sign and relabeling, the only ambiguity left, and one that keeps each variable individually meaningful.

\begin{tcolorbox}[width=\linewidth]
\textbf{Insight.}\ Theorem \ref{thm:idjepa-zheng} takes reconstruction-free world models beyond the linear indeterminacy. A representation that spans the world state becomes one whose variables are the world's individual latents up to signed permutation, without decoder, likelihood, or labels. This is the difference between holding all the latent information and being able to reach any one piece of it.
\end{tcolorbox}

Functional no-cancellation is a faithfulness-type genericity requirement. It rules out constant-coefficient cancellations that would hide genuine Jacobian edges. It plays the same role as sufficient nonlinearity and sufficient variability in nonlinear identifiability, where Jacobian variation across samples must span each active support \citep{hyvarinen2016unsupervised,khemakhem2020ivae,sorrenson2020disentanglement,lachapelle2021disentanglement,zheng2022nonlinearica,kong2023identification,yan2023counterfactual,zhang2024causal,lachapelle2024additive}. Moreover, within smooth observation families with fixed supports and analytic Gram determinants, and provided each row's determinant is nonzero somewhere in the family, the condition holds for Lebesgue-almost every parameter value (Proposition~\ref{prop:no-cancel-generic}).

\paragraph{Strictly weaker than prior conditions.} Structure-based identifiability rests on sparsity conditions on the mixing Jacobian in nonlinear ICA \citep{zheng2022nonlinearica} and on column supports in linear ICA \citep{ng2023sparseica}. Proposition~\ref{prop:diversity-weaker} states these conditions directly on the footprints and orders them. Every one forces pairwise-distinct footprints, and no converse implication holds.

\begin{restatable}[Structural Diversity is strictly weaker]{proposition}{diversityweaker}
\label{prop:diversity-weaker}
Stated on the dependency footprints $\mathcal S_1,\ldots,\mathcal S_d$, each of the following conditions implies Structural Diversity (Assumption~\ref{asm:structural-diversity}), and none of the converse implications holds:
\begin{enumerate}
\vspace{-0.2em}
\item \emph{Structural Sparsity, intersection form} \citep{zheng2022nonlinearica}: for every $k\in[d]$ there is a nonempty set $\mathcal C_k$ of observed variables with $\bigcap_{r\in\mathcal C_k}\{\,i\in[d]: r\in\mathcal S_i\,\}=\{k\}$;
\vspace{-0.2em}
\item \emph{Structural Sparsity, overlap-rank form} \citep{zheng2022nonlinearica}: for every $\mathcal C\subseteq[d]$ with $|\mathcal C|\ge2$ and every $k\in\mathcal C$, $\big|\bigcup_{j\in\mathcal C}\mathcal S_j\big|-\operatorname{rank}(M^{\mathcal C})>|\mathcal S_k|$, where $M^{\mathcal C}$ is the binary matrix with $M^{\mathcal C}_{rj}=1$ exactly when $j\in\mathcal C$ and $r$ lies in $\mathcal S_j$ and in $\mathcal S_{j'}$ for some other $j'\in\mathcal C$;
\vspace{-0.2em}
\item \emph{Non-Inclusion}, a common weakening of both forms above: $\mathcal S_i\not\subseteq\mathcal S_j$ for all $i\ne j$;
\vspace{-0.2em}
\item \emph{Structural Variability} \citep{ng2023sparseica}: $|\mathcal S_i\,\triangle\,\mathcal S_j|\ge2$ for all $i\ne j$.
\end{enumerate}
\vspace{-0.2em}
\end{restatable}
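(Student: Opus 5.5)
The plan is to prove the four forward implications by contraposition and to refute each converse with one small family of footprints that is pairwise distinct but nested. The forward direction has a common pattern. If $\mathcal S_i=\mathcal S_j$ for some $i\neq j$, then every observed variable touched by $z_i$ is touched by $z_j$ and vice versa. So $\mathcal S_i\subseteq\mathcal S_j$, which contradicts Non-Inclusion (item 3). Also $|\mathcal S_i\triangle\mathcal S_j|=0<2$, which contradicts Structural Variability (item 4).

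For the intersection form (item 1), take $k=i$ and any nonempty $\mathcal C_i$. Every $r\in\mathcal C_i$ satisfies $r\in\mathcal S_i=\mathcal S_j$, so $j$ lies in $\bigcap_{r\in\mathcal C_i}\{i': r\in\mathcal S_{i'}\}$. The intersection is therefore not $\{i\}$, a contradiction. Nonemptiness of $\mathcal C_k$ is used here: it guarantees that some $r$ exists, and in fact $\mathcal C_k\subseteq\mathcal S_k$.

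For the overlap-rank form (item 2), I take $\mathcal C=\{i,j\}$ and $k=i$. Every row of $\mathcal S_i=\mathcal S_j$ lies in both footprints, so $M^{\mathcal C}$ has both columns equal to the indicator of $\mathcal S_i$. Its rank is $1$ if $\mathcal S_i\ne\emptyset$ and $0$ otherwise. The left-hand side is then $|\mathcal S_i|-\operatorname{rank}\le|\mathcal S_i|=|\mathcal S_k|$, which violates the strict inequality.

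For the converses, a single example should refute all four at once. I would use $d=2$ with $\mathcal S_1=\{1\}$ and $\mathcal S_2=\{1,2\}$. These footprints are distinct, so Structural Diversity holds.
\begin{itemize}
\item Non-Inclusion fails because $\mathcal S_1\subseteq\mathcal S_2$.
\item Structural Variability fails because $|\mathcal S_1\triangle\mathcal S_2|=1$.
\item The intersection form fails for $k=1$: any nonempty $\mathcal C_1$ must lie in $\mathcal S_1=\{1\}$, and row $1$ is touched by both latents, so the intersection is $\{1,2\}$.
\item For the overlap-rank form, take $\mathcal C=\{1,2\}$ and $k=2$. The union has size $2$, $M^{\mathcal C}$ has a single nonzero row (row $1$) of rank $1$, and $2-1=1\not>2=|\mathcal S_2|$.
\end{itemize}

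The text also calls Non-Inclusion a common weakening of both Structural Sparsity forms. I will note that this claim is not part of the proposition's statement, so it need not be proved here. If needed, it follows by the same two arguments with $\mathcal S_i\subseteq\mathcal S_j$ in place of equality: in the intersection form, $j$ survives in the intersection; in the overlap-rank form, the overlap rows are exactly $\mathcal S_i$.

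The main obstacle is the overlap-rank computation. I need to define $M^{\mathcal C}$ carefully as a matrix over rows in the union, and I must handle the edge case of empty footprints, where the rank is $0$ but the strict inequality still fails. The remaining steps are bookkeeping.
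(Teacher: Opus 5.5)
Your proof is correct, but it reaches the result by a more self-contained route than the paper. The paper proves the forward direction through a chain. It cites \citet[Theorems~2--3]{ng2023sparseica} for Structural Sparsity (either form) $\Rightarrow$ Non-Inclusion $\Rightarrow$ Structural Variability, and proves only the last link, Structural Variability $\Rightarrow$ Structural Diversity, itself. On the witness $\mathcal S_1=\{1\}$, $\mathcal S_2=\{1,2\}$ it checks only that Non-Inclusion and Structural Variability fail. It then infers that both sparsity forms fail by contraposition of the cited chain.

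You instead verify each of the four forward implications, and each of the four failures, directly from the footprint-level definitions. Your explicit $\mathcal C=\{i,j\}$ computation of $M^{\mathcal C}$ is what replaces the citation for the overlap-rank form. This makes the argument independent of whether the cited theorems, originally phrased for Jacobian supports in their own settings, transfer verbatim to the footprint restatements used here.

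What the paper's chain buys is the finer ordering Structural Sparsity $\Rightarrow$ Non-Inclusion $\Rightarrow$ Structural Variability. You only sketch this as an aside; in the overlap-rank case you should take $k=j$, the larger footprint.

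The paper also does one thing beyond the combinatorial statement that your plan omits. It realizes the witness footprints by the smooth map $F_1=z_1+\sin z_2$, $F_2=z_2+\sin z_2$ and checks Functional no-cancellation on it. That shows Theorem~\ref{thm:idjepa-zheng} provably applies to an example excluded by all prior conditions. This is not needed for the proposition as stated, but it is what gives the ``strictly weaker'' claim its force for the identifiability theorem.
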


\paragraph{The difference matters in practice.} A global latent that touches every observed variable (illumination, camera gain, a global style factor) has a footprint that strictly contains those of localized latents. Nested footprints of this kind violate Non-Inclusion, and hence both forms of Structural Sparsity, yet satisfy Structural Diversity, so Theorem~\ref{thm:idjepa-zheng} still applies. The boundary is equally sharp in the other direction. If $\mathcal S_i=\mathcal S_j$, every rotation in the $(i,j)$ plane leaves both rotated columns active on the same footprint, the support count acquires non-permutation minimizers, and distinguishing the pair would then require information beyond the dependency support pattern itself.

\begin{tcolorbox}[width=\linewidth]
\textbf{Insight.}\ Prior structural conditions were built for reconstruction-based models, where a decoder anchors the latents. Proposition~\ref{prop:diversity-weaker} brings structure-based identifiability to the reconstruction-free setting while even weakening the condition, suggesting that Structural Diversity is the weakest among all structural conditions, possibly even a necessary one for recovering individual latents.
\end{tcolorbox}

\vspace{-0.1em}
\subsection{The DSReg estimator}
\label{sec:practical-estimator}
\vspace{-0.1em}

\looseness=-1 The full method, LeJEPA + DSReg, has two objectives: predictive Gaussian learning identifies the orthogonal solution class, and DSReg selects the sparsest representative within that class. This decomposition is exact at the population level. For every $R\in O(d)$, alignment obeys $\mathcal L_{\mathrm{align}}(Rh)=\mathcal L_{\mathrm{align}}(h)$ and the Gaussian constraint is preserved. The orthogonal DSReg head may therefore be optimized alongside predictive learning or after it. We recommend the decoupled schedule, which is simpler, cheaper, and reuses existing checkpoints, and an ablation confirms the convenience costs nothing. The schedules recover equally well, with the joint head marginally lower until the decoupled fit completes it (Appendix Table~\ref{tab:joint-ablation}). Specifically, on the whitened representation, DSReg estimates local maps $B_a\approx \partial x/\partial h$ by ridge regression at $m$ anchors and minimizes the anchor-averaged $\ell_1$ relaxation of the population support criterion~\eqref{eq:idjepa-population-criterion}:
\begin{equation}
    \mathcal L_{\mathrm{DSReg}}(R) =\frac{1}{m}\sum_{a=1}^{m}\|B_aR^\top\|_1, \qquad R\in O(d) .
    \label{eq:idjepa-objective}
\end{equation}
\looseness=-1 Here $B_aR^\top$ estimates how observations depend on the candidate latents $Rh$. The local regressions read the observations at the anchors, as any identifying signal ultimately must, but they provide no global observation model, train no decoder, and never optimize reconstruction. Theorems~\ref{thm:idjepa-zheng} and~\ref{thm:approx-recovery} analyze the support criterion and its thresholded perturbation, while Equation~\eqref{eq:idjepa-objective} is the continuous finite-sample relaxation used in every experiment. Because the $\ell_1$ norm weighs magnitude as well as count, the relaxation and the support criterion can in principle select different rotations. On the benchmarks of Figure~\ref{fig:optimizer-ablation} an exact support count recovers at least as well as the $\ell_1$ default, a refinement of the $\ell_1$ solution by the exact count closes the remaining gap, and differentiability is what lets the objective scale to large $d$ and $p$ (Appendix~\ref{sec:app-scaling}). Ground-truth latents are never accessible during estimation and enter only through benchmark construction and evaluation.

\begin{algorithm}[h]
\footnotesize \caption{DSReg: recovering individual latents without reconstruction}
\label{alg:idjepa}
\begin{algorithmic}[1]
\STATE Train the predictive Gaussian objective to identify the orthogonal solution class; whiten and freeze the representation $h$.%
\STATE Choose the observed vector used for dependency fitting; sample $m$ anchor points $h_1,\ldots,h_m$ from the data.%
\STATE For each anchor, estimate $B_a\approx\partial x/\partial h$ by ridge regression of observations on $h$ over the $k$ nearest neighbors of $h_a$ in representation space.%
\STATE Select $R=\arg\min_{R\in O(d)}\mathcal L_{\mathrm{DSReg}}(R)$.%
\STATE \textbf{return} the DSReg representation $\tilde z=Rh$.
\end{algorithmic}
\vspace{1pt}
{\footnotesize\emph{Joint schedule:} the rotation may equally be trained alongside step~1, as a head updated during training that never feeds back into the encoder, with steps~2--4 as a final refit; recovery matches the decoupled schedule.}
\end{algorithm}

\paragraph{Scaling.}\looseness=-1 Equation~\eqref{eq:idjepa-objective} appears to need $m$ matrices of size $p\times d$, which would explode as soon as the observed dimension grows. Fortunately, it does not. Each $B_a$ is a ridge fit over the $k$ neighbors of its anchor, so it factors as $B_a=\Delta X_a^\top P_a$ with $P_a\in\mathbb R^{k\times d}$, and the observations $\Delta X_a$ are already in memory as the dataset. The criterion is therefore determined by factors costing $O(mkd)$ rather than $O(mpd)$, and can be evaluated without the large array ever existing. Appendix~\ref{sec:app-scaling} turns this one fact into three interchangeable ways to evaluate the same objective, materialized, exactly factored, and an unbiased row-sampled estimator whose per-step cost is independent of $p$ and $m$, and says which to use when. The full estimated procedure reaches $d=8192$ on one $48$\,GB GPU, with samples to $10^6$ and observed dimensions to $2^{17}$ leaving the per-step cost essentially flat.

\paragraph{Robustness to estimation error.} Whitening reduces any invertible linear ambiguity to an orthogonal one: $\operatorname{Cov}(h)^{-1/2}h(z)=\tilde Qz$ with $\tilde Q\in O(d)$ (Lemma~\ref{lem:whitening-reduction}, Appendix~\ref{app:robustness}). One may then wonder how estimated-Jacobian error affects the support target that DSReg relaxes.

Recovery degrades gracefully, governed by two constants with mechanical meanings. The first, $\sigma>0$, measures how strongly an active dependency registers in the data: the smallest $\lambda_{\min}(G_r)^{1/2}$ over the row Gram matrices $G_r=[\langle D_{ri},D_{ri'}\rangle_{L^2(\mu)}]_{i,i'\in I_r}$, positive under Assumption~\ref{asm:functional-no-cancellation}. The second, the pattern margin $\rho^*(\delta)>0$, measures how visibly a rotation $\delta$-far from every signed permutation must disturb the support pattern: the $(s^*{+}1)$-th largest of the sub-vector norms $\|U_{I_r,j}\|_2$ over such rotations, with $s^*=\sum_i|\mathcal S_i|$ the population minimum (Lemma~\ref{lem:pattern-margin}). A strong signal cannot be hidden by weak noise. Whenever the sum of the threshold and the noise level stays below $\sigma\rho^*(\delta)$, every minimizer lands within $\delta$ of a signed permutation.

\begin{restatable}[Approximate recovery under Jacobian perturbation]{theorem}{robusttheorem}
\label{thm:approx-recovery}
Let the assumptions of Theorem~\ref{thm:idjepa-zheng} hold with $d\ge2$, together with (essential supremum)
\[
\operatorname{ess\,sup}\nolimits_z\max\nolimits_r\|D_{r,:}(z)\|_2\le M<\infty .
\]
Let $\widehat D(z)=D(z)+E(z)$ with $E$ measurable and $\operatorname{ess\,sup}_z\max_r\|E_{r,:}(z)\|_2\le\varepsilon$, and for a threshold $\tau>\varepsilon$ define
\[
\widehat N_\tau(R) =\Big|\big\{(r,j):\mu\big(|(\widehat D(z)\,Q^\top R^\top)_{rj}|>\tau\big)>0\big\}\Big|, \qquad R\in O(d).
\]
If $\delta>0$ satisfies $\tau+\varepsilon<\sigma\,\rho^*(\delta)$, then minimizers of $\widehat N_\tau$ over $O(d)$ exist and every minimizer $\widehat R$ satisfies $\min_{P}\|\widehat RQ-P\|_F<\delta$ over signed permutations $P$.
\end{restatable}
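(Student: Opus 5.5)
The plan is to reduce the thresholded count $\widehat N_\tau$ to the exact support count and then invoke the pattern margin. Write $U=Q^\top R^\top\in O(d)$. The relevant matrix is then $\widehat D U = DU + EU$. Since $\|(EU)_{r,:}\|_2=\|E_{r,:}\|_2\le\varepsilon$, each entry satisfies $|(EU)_{rj}|\le\varepsilon$ almost everywhere. Existence of minimizers is immediate because $\widehat N_\tau$ takes finitely many values in $\{0,\dots,pd\}$. What remains is two comparisons: an upper bound on $\min\widehat N_\tau$, and a lower bound on $\widehat N_\tau(R)$ for every $R$ with $RQ$ at distance at least $\delta$ from the signed permutations.

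\emph{Upper bound at the truth.} Take $R_0=Q^\top$, so $U=I$. For $(r,j)$ with $j\notin I_r$ we have $D_{rj}=0$ almost everywhere, hence $|\widehat D_{rj}|\le\varepsilon<\tau$ almost everywhere and the entry is not counted. Therefore $\widehat N_\tau(R_0)\le s^*$, and every minimizer $\widehat R$ satisfies $\widehat N_\tau(\widehat R)\le s^*$.

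\emph{Lower bound away from signed permutations.} Fix $U$ with $\min_P\|U^\top - P\|_F\ge\delta$; this is equivalent to the stated distance condition, because $\|\widehat RQ-P\|_F=\|U^\top-P\|_F$. The entry is
\[
(DU)_{rj}=\sum_{i\in I_r}D_{ri}U_{ij}.
\]
Its $L^2(\mu)$ norm squared equals $U_{I_r,j}^\top G_r U_{I_r,j}\ge\sigma^2\|U_{I_r,j}\|_2^2$. This is where the row Gram matrices and Assumption~\ref{asm:functional-no-cancellation} enter.

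By the definition of $\rho^*(\delta)$ in Lemma~\ref{lem:pattern-margin}, at least $s^*+1$ pairs $(r,j)$ have $\|U_{I_r,j}\|_2\ge\rho^*(\delta)$. For each such pair, $\|(DU)_{rj}\|_{L^2(\mu)}\ge\sigma\rho^*>\tau+\varepsilon$. An $L^2$ lower bound does not by itself give a pointwise exceedance on a positive-measure set, but the contrapositive does: if $|(DU)_{rj}|\le\tau+\varepsilon$ almost everywhere, then its $L^2(\mu)$ norm would be at most $\tau+\varepsilon$, using $\mu(\mathbb{R}^d)=1$. Hence $|(DU)_{rj}|>\tau+\varepsilon$ on a set of positive measure. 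On that set $|(\widehat DU)_{rj}|>\tau$, so the pair is counted. This gives $\widehat N_\tau(R)\ge s^*+1$, and such $R$ cannot be minimizers.

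\emph{Main obstacle.} The delicate step is ensuring that $\rho^*(\delta)$ really guarantees $s^*+1$ large sub-vectors, uniformly over the compact set $\{U:\operatorname{dist}(U^\top,\text{signed perms})\ge\delta\}$, and that it is tied to $L^2(\mu)$ rather than to sup norms. I take this from Lemma~\ref{lem:pattern-margin}. Its content is that the exact-count minimum $s^*$ is attained only at signed permutations (Theorem~\ref{thm:idjepa-zheng} together with Lemma~\ref{lem:functional-union-support}, which identifies the support of column $j$ of $DU$ with $\{r:U_{I_r,j}\ne0\}$), so every such $U$ has at least $s^*+1$ nonzero sub-vectors. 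The minimum over that compact set of the $(s^*+1)$-th largest sub-vector norm is continuous and hence positive.

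One further point needs a check: the pattern margin is phrased in terms of $U$, while the distance is phrased in terms of $\widehat RQ=U^\top$. The transpose is harmless, since signed permutations are closed under transposition and the Frobenius norm is transpose-invariant. The hypothesis $d\ge2$ only ensures that the set of non-permutation rotations at distance $\delta$ is nonempty, so that $\rho^*$ is well defined. Otherwise the claim holds vacuously.
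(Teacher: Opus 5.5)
Your proof is correct and follows the paper's skeleton. You pass to $U=(RQ)^\top$. You get $\widehat N_\tau\le s^*$ at a signed permutation, because inactive entries stay below $\tau$ when $|(EU)_{rj}|\le\varepsilon<\tau$. You get $\widehat N_\tau\ge s^*+1$ on $K_\delta$ from the $s^*+1$ sub-vectors with $\|U_{I_r,j}\|_2\ge\rho^*(\delta)$.

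The one real difference is the exceedance step. The paper invokes Lemma~\ref{lem:quantitative-activity}, which splits $\int|f_c|^2\,d\mu$ at level $t$ and uses the sup bound $M$ to obtain $\mu(|f_c|>t)\ge(\sigma_r^2\|c\|_2^2-t^2)/M^2$. You use only $\|f\|_{L^2(\mu)}\le\|f\|_{L^\infty(\mu)}$ on a probability space. The theorem needs nothing beyond positivity of that measure, so your route is shorter. It also shows that $M$ is used only to place each $D_{ri}$ in $L^2(\mu)$, so that $G_r$ and $\sigma$ are defined. What the paper's lemma buys is an explicit lower bound on the mass of the exceedance set. That is the quantity you would need to argue that finitely many sampled anchors actually witness the extra active entries.

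Two minor remarks. First, positivity of $\rho^*(\delta)$ already follows from the hypothesis $\tau+\varepsilon<\sigma\rho^*(\delta)$, so the proof itself needs only the definition of $\rho^*(\delta)$. Lemma~\ref{lem:pattern-margin} matters only for knowing the hypothesis is satisfiable. Second, $d\ge2$ is not what makes $K_\delta$ nonempty: $K_\delta$ is empty for large $\delta$ in any dimension, and the paper then sets $\rho^*(\delta)=+\infty$. In the paper, $d\ge2$ together with Structural Diversity guarantees that some $I_r$ is nonempty, so $\sigma$ is well defined. It also guarantees that not every footprint is all of $[p]$, so $s^*+1\le pd$ and the $(s^*{+}1)$-th largest value $\rho^*(U)$ exists.
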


Exact recovery is impossible under noise, since thresholding at any $\tau>0$ ties the count on rotations $O(\tau)$-close to a signed permutation, so $\delta$-closeness is the right notion. The statement concerns the thresholded population criterion. Appendix Figure~\ref{fig:structdiv-noise-robustness} stress-tests the $\ell_1$ estimator under noise.

\begin{tcolorbox}[width=\linewidth]
\textbf{Insight.}\ DSReg needs no retraining or decoder. Predictive learning fixes the orthogonal class and DSReg, fit jointly with it or afterwards, selects individual latents. Theorem~\ref{thm:approx-recovery} shows the thresholded population target is stable to Jacobian estimation error, and the $\ell_1$ implementation is exercised directly in ablations. Dense performance is untouched because a rotation loses no information.
\end{tcolorbox}%

\vspace{-0.2em}
\section{Experiments}
\vspace{-0.2em}

\label{sec:experiments}
The experiments validate the theory and support downstream applications. Encoders trained from scratch recover individual latents wherever Structural Diversity holds (Section~\ref{sec:exp-synthetic}). The regime map, failure included, is exactly the theory's (Section~\ref{sec:exp-regimes}). The recovered latents pay off in sparse control, editing, prediction, and monitoring (Section~\ref{sec:exp-sparse-use}). The gains survive encoders trained from pixels and renderers we did not design, reaching a supervised oracle (Section~\ref{sec:exp-visual-encoders}). Dense $R^2(h\to z)$, the best linear readout from estimated to ground-truth latents, measures whether the state is retained as a span; the mean correlation coefficient (MCC) after maximum-weight matching, the standard identifiability metric in nonlinear ICA \citep{hyvarinen2016unsupervised,khemakhem2020ivae}, measures one-to-one recovery; and sparse-use probes ask whether a module can act through a few estimated latents without a dense unmixing. Ground-truth latents define benchmark pairs and evaluation scores. DSReg is fit from estimated latents with raw observations or a fixed preprocessing chosen before fitting. Throughout, LeJEPA labels the frozen representation before the DSReg rotation, so each pair isolates what the rotation does to one representation. Below, $N$ denotes the benchmark's latent dimension, and results report mean~$\pm$~std over all runs with random seeds, five to fifteen for the learned visual encoders, and ten to twenty for the remaining families, with the exact count listed in every figure and table caption. Full setups and the supporting suite, led by the scaling study, are in Appendix~\ref{app:experiments}, which every claim in the subsections that follow points into.

\vspace{-0.1em}
\subsection{The full method recovers individual latents end to end}
\label{sec:exp-synthetic}
\vspace{-0.1em}

\begin{figure}[tbp]
    \centering
    \begin{subfigure}[b]{0.270\textwidth}
        \centering
        \includegraphics[width=\linewidth]{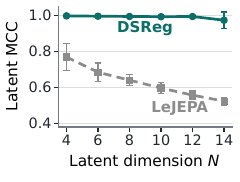}
        \caption{Learned encoders}
        \label{fig:synthetic-mcc}
    \end{subfigure}\hspace{0.03\textwidth}
    \begin{subfigure}[b]{0.610\textwidth}
        \centering
        \includegraphics[width=\linewidth]{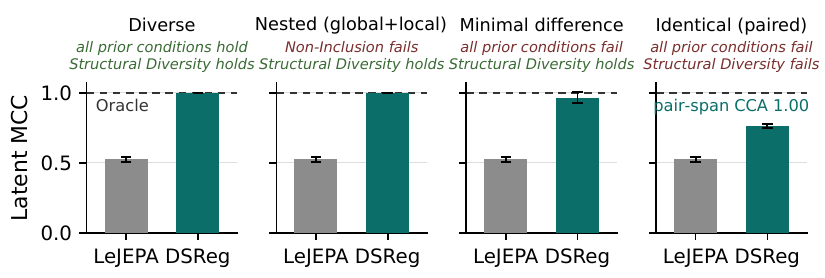}
        \caption{Footprint regimes ($N=16$; full sweep in Figure~\ref{fig:regime-scaling})}
        \label{fig:structdiv-regimes}
    \end{subfigure}
    \caption{\textbf{DSReg recovers individual latents wherever Structural Diversity holds, and only there.} (a)~DSReg (solid) is near ceiling at every $N$, twenty runs. LeJEPA (dashed) stays mixed. (b)~On analytic orbits $h=Qz$, recovery succeeds wherever the condition holds and falls back to the shared pair subspace where it fails.}
    \label{fig:synthetic-panel}
    \vspace{-4pt}
\end{figure}

The first question is whether the whole method works end to end. We train a generic encoder from scratch, fit the rotation from observations alone, and check whether individual latents come out. The synthetic benchmark makes Structural Diversity hold by construction, in the hardest form we consider. An MLP observation map $x_j=z_j+f_j(z_{<j})$ gives $z_k$ the footprint $\{x_k,\ldots,x_N\}$, so all footprints are distinct yet nested, the regime that prior conditions exclude. Figure~\ref{fig:synthetic-mcc} traces recovery across latent dimension. LeJEPA retains the predictive span but leaves its variables mixed, while DSReg lifts them to near-ceiling recovery at every dimension tested. Recovery tracks the linear-identifiability premise of Theorem~\ref{thm:lejepa-prelim}, which DSReg inherits rather than establishes. Once the encoder reaches the premise, the rotation does the rest. Latent-only rotations (PCA, Varimax \citep{kaiser1958varimax,rohe2020vintage}, FastICA) fail on the same estimated latents, so the Jacobian dependency signal rather than the optimizer is what does the work, on exactly the same inputs (Appendix~\ref{sec:app-baselines}, Table~\ref{tab:rotation-baselines}).

\vspace{-0.1em}
\subsection{Every regime lands where the theory puts it}
\label{sec:exp-regimes}
\vspace{-0.1em}

The theory makes a sharp and testable prediction. Recovery must succeed whenever footprints differ, however slightly, and may fail only when they coincide. Each panel of Figure~\ref{fig:structdiv-regimes} is an analytic orbit ($h=Qz$) realizing one regime of Proposition~\ref{prop:diversity-weaker}: \emph{diverse} footprints satisfy the prior conditions, \emph{nested} footprints add a global latent containing all local ones, \emph{minimal-difference} footprints differ in one observed feature, and \emph{identical} footprints violate Structural Diversity outright.

Wherever Structural Diversity is satisfied, recovery is near-perfect at every scale while unrotated baselines decay with dimension. The nested regime, which violates Non-Inclusion and both sparsity certificates, is recovered as cleanly as the easy one. Even the failure is the one the theory dictates. With identical footprints, individual recovery is impossible from support alone, yet DSReg still recovers the pair's span almost perfectly. The map is the theory's (Appendix Figures~\ref{fig:regime-scaling} and~\ref{fig:optimizer-ablation}).

\vspace{-0.1em}
\subsection{Sparse modules can act through recovered latents}
\label{sec:exp-sparse-use}
\vspace{-0.1em}

One may wonder why individual latents should be preferred to a well-spanned mixture. The modules built on a world model are sparse, and a rotation breaks their interface, so one estimated latent moves several physical factors at once. We therefore evaluate four sparse-use settings on states $h=Qz$ for a random orthogonal $Q$: visual edits in three environments, sparse model predictive control (MPC) fit from few transitions, short horizon rollout prediction, and transition-surprise detection, six probes in total, with environments and probe procedures detailed in Appendix~\ref{sec:app-sparse-use-details}.

DSReg improves every probe and wins in nearly every run (Figure~\ref{fig:sparse-use-summary}), while dense-span checks stay matched (Appendix Table~\ref{tab:dense-sparse-sanity}). The same signature persists on encoders trained from pixels, where DSReg is ahead in every single run of all three probes (Appendix~\ref{sec:app-learned-details}).

\vspace{-0.1em}
\subsection{The gains survive learned encoders and external renderers}
\label{sec:exp-visual-encoders}
\vspace{-0.1em}

\begin{wrapfigure}{r}{0.51\textwidth}
    \vspace{-12pt}
    \centering
    \includegraphics[width=\linewidth]{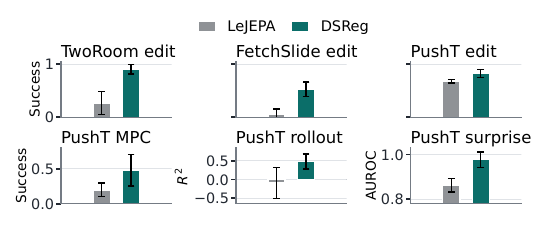}
    \vspace{-14pt}
    \captionsetup{skip=3pt} \caption{\textbf{DSReg improves all six sparse-use probes.} DSReg in teal, LeJEPA in gray. Tasks are (1) editing (top) and (2) control, prediction, and monitoring (bottom).}
    \label{fig:sparse-use-summary}
    \vspace{-5pt}
\end{wrapfigure}

Nothing in the method depends on the analytic setting, so we move to encoders trained from pixels and renderers we did not design (Appendix~\ref{sec:app-learned-details}). Figure~\ref{fig:learned-baselines} shows the outcome on the visual-factor and object-scene benchmarks. DSReg lifts convolutional encoders trained from scratch to the supervised Procrustes oracle that sees labels, while PCA, Varimax, and FastICA leave the latents nearly as mixed as no rotation, the rotational symmetry at work (Appendix~\ref{sec:app-baselines}). Figure~\ref{fig:learned-width} widens the representation. The top-$N$ right-singular subspace of the estimated Jacobians selects the directions the observations depend on, and the match holds at every width, so one can start wide and let the criterion find the subspace it needs.

\looseness=-1 On external renderers the same signature holds. Gaussian 3DShapes \citep{kim2018disentangling} rises from mixed to near-ceiling recovery at matched dense $R^2$ (Appendix Table~\ref{tab:shapes3d-gaussian}), and on quarter-orientation dSprites \citep{matthey2017dsprites}, where the premise is only partially attained, DSReg still beats swept $\beta$-VAE and $\beta$-TCVAE baselines and the premise sets the ceiling (Appendix Fig.~\ref{fig:dsprites-quarter}).

\begin{figure}[H]
    \centering
    \begin{subfigure}[b]{0.475\textwidth}
        \centering
        \includegraphics[width=\linewidth]{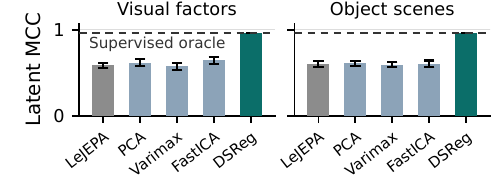}
        \caption{Rotation baselines on learned encoders}
        \label{fig:learned-baselines}
    \end{subfigure}\hspace{0.03\textwidth}
    \begin{subfigure}[b]{0.330\textwidth}
        \centering
        \includegraphics[width=\linewidth]{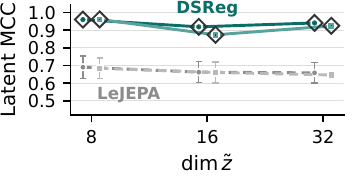}
        \caption{Overparameterized: $\dim\tilde z$ grows}
        \label{fig:learned-width}
    \end{subfigure}
    \caption{\textbf{DSReg matches the supervised Procrustes oracle on encoders trained from pixels.} (a)~Latent-only rotations leave the variables mixed. DSReg closes the gap to the label-fitted oracle (fifteen seeds). (b)~With $\dim z=8$ fixed, the match holds within $0.001$ as the estimate widens to $\dim\tilde z=32$ (five seeds).}
    \label{fig:learned-skyline}
    \vspace{-4pt}
\end{figure}

\FloatBarrier

\vspace{-0.3em}
\section{Discussion}
\vspace{-0.2em}

\label{sec:discussion}
Individual world latents can be recovered without reconstruction. To the best of our knowledge, prior guarantees at this resolution ran through a decoder or auxiliary access, and objectives free of any such anchor stopped at a linear class. This paper closes that gap: under the LeJEPA premise and Structural Diversity, \textbf{\emph{DSReg}} provably recovers each individual world latent through a sparsity regularization on the dependency Jacobian. The condition is strictly weaker than prior structural conditions, and the guarantee is stable to estimation error. The experiments confirm the theory where it is testable and show why it matters. After the DSReg rotation, a controller, a dynamics model, or a monitor acts through individual variables, while dense readouts remain unchanged on every benchmark.

A world model whose variables are the variables of the world is not a predictive black box. Its latents can be intervened on and inspected one at a time, and all of it comes from structure the observations already carry. Structural Diversity is a mild requirement: factors with identical footprints cannot be distinguished without extra information, and the real world might not produce them. Other conditions may also be leveraged to conduct alternative strategies to recover individual world latents, such as the orthogonality of the Jacobian in independent mechanism analysis \citep{gresele2021independent}.

The clearest limitation is scope: our benchmarks are simulated or rendered, and DSReg has not yet been tested on a physical robot. That gap is the future work that excites us most. How does a robot behave when each factor of its world model is its own variable, and how far does that carry planning? Identical footprints, the one boundary that support alone cannot cross, invite signals such as temporal structure or cheap interventions. Moreover, the Gaussian latent-world premise is a foothold rather than a ceiling. Every extension of linear identifiability to richer worlds widens the guarantee.

\section*{AI Disclosure}

In this paper, we used generative AI tools to assist polishing and plotting. We have reviewed all AI-assisted work. We take responsibility for the final content of this work, including text, claims or artifacts produced with the aid of generative AI.

\begingroup
\small
\setlength{\bibsep}{1pt plus 0.3pt}
\bibliography{references}
\bibliographystyle{iclr2026_conference}
\endgroup

\newpage
\appendix
\thmspacingfloor
\startcontents[appendix]
\printcontents[appendix]{}{1}{%
  \section*{Contents of the appendix}%
  \setcounter{tocdepth}{2}%
} \bigskip

\section{Proofs}
\label{app:proofs}

\paragraph{The linear-identifiability premise.} The proofs take the linear-identifiability theorem of \citet{klindt2026worldmodel} as their starting point. Its assumptions, in our notation, are: (i)~the world latents are independent Gaussians, $z\sim\mathcal N(0,I_d)$; (ii)~positive pairs are generated by a stationary additive-noise transition that preserves this marginal, $z'=\rho z+\sqrt{1-\rho^2}\,\eta$ with $\eta\sim\mathcal N(0,I_d)$ independent of $z$ and $\rho\in(0,1)$; (iii)~observations $x=g(z)$ come from an unknown generating process, and the estimated latents $h(z)=f_\theta(g(z))$ are given by a measurable map $h:\mathbb R^d\to\mathbb R^d$; (iv)~the learned representation satisfies the Gaussian constraint $h(z)\sim\mathcal N(0,I_d)$, as enforced by SIGReg \citep{balestriero2025lejepa}, and attains the alignment lower bound $2(1-\rho)d$ of Theorem~\ref{thm:lejepa-prelim}. Under (i)--(iv), the equality case of Theorem~\ref{thm:lejepa-prelim} gives $h(z)=Qz$ for some $Q\in O(d)$. Every statement below operates inside this Gaussian latent-world premise: it fixes the orthogonal equivalence class, and DSReg selects the signed permutation within it, from the observed dependency structure alone.

\paragraph{Proof sketch for linear identifiability in \cite{klindt2026worldmodel}.} So that the paper is self-contained on its premise, we sketch why the alignment bound holds with equality exactly at linear maps; the full proof is in \citet{klindt2026worldmodel}. Since $h(z)\sim\mathcal N(0,I_d)$ and the pair $(z,z^+)$ is stationary,
\[
\mathcal L_{\mathrm{align}}(h) = \mathbb E\|h(z^+)\|_2^2 + \mathbb E\|h(z)\|_2^2 - 2\sum_{i=1}^d \mathbb E\big[h_i(z^+)\,h_i(z)\big] = 2d - 2\sum_{i=1}^d \mathbb E\big[h_i(z^+)\,h_i(z)\big],
\]
so minimizing alignment maximizes the summed cross-correlations. Expand each component in the multivariate Hermite basis $\{H_\alpha\}$, orthonormal for the standard Gaussian: $h_i=\sum_{|\alpha|\ge1}c_{i\alpha}H_\alpha$ with $\sum_\alpha c_{i\alpha}^2=1$, where the Gaussian constraint supplies the zero mean and unit variance. The additive-noise transition acts diagonally on this basis (the Mehler kernel): $\mathbb E[H_\alpha(z^+)H_\beta(z)]=\rho^{|\alpha|}\,\mathbf 1\{\alpha=\beta\}$. Therefore
\[
\mathbb E\big[h_i(z^+)\,h_i(z)\big] = \sum_{|\alpha|\ge1}\rho^{|\alpha|}\,c_{i\alpha}^2 \;\le\; \rho ,
\]
with equality exactly when all coefficient mass sits at degree one, since $\rho^{k}<\rho$ for every $k\ge2$ and $\rho\in(0,1)$. Summing over $i$ gives $\mathcal L_{\mathrm{align}}(h)\ge 2(1-\rho)d$, and equality forces each $h_i$ to be a degree-one polynomial, so $h(z)=Az$ almost surely for some matrix $A$; the constraint $h(z)\sim\mathcal N(0,I_d)$ then gives $AA^\top=I_d$, hence $A\in O(d)$. To summarize, among unit-variance functions of a Gaussian, linear functions are the most predictable across the stationary transition, and the Gaussian constraint turns the optimal linear map into a rotation, exactly the premise our proofs consume.

We now give proof details for the identifiability statements of the main text. The JEPA-specific step is a bridge from linear identifiability to a concrete matrix problem: once a JEPA representation is identified up to an orthogonal transformation, DSReg must resolve the remaining orthogonal ambiguity in the dependency Jacobian. The structural steps below are stated directly in that matrix language, so the proofs are self-contained and do not revisit the JEPA objective itself.

\paragraph{Proof notation.} For an orthogonal matrix $U\in O(d)$, write
\[
    \mathcal I_j(U)=\{i:U_{ij}\ne0\}
\]
for the row support of column $j$. When the matrix is clear, we write $\mathcal I_j$. A column is a \emph{singleton} if $|\mathcal I_j|=1$ and \emph{mixed} if $|\mathcal I_j|\ge2$. In the main minimality proof, the set of mixed columns is
\[
    \mathcal J=\{j:|\mathcal I_j|\ge2\}, \qquad \mathcal I=\bigcup_{j\in\mathcal J}\mathcal I_j ,
\]
so $\mathcal I$ is the set of true-latent rows that participate in at least one mixed candidate latent.

\subsection{Dependency supports under the LeJEPA result}

\dependencybridge*

\begin{proof}
We prove the identity by writing the change of variables explicitly.

\emph{Step 1: replace the almost-sure identity by an a.e. representative.} The LeJEPA premise gives
\[
    h(z)=Qz \qquad \mu\text{-a.s.}
\]
for some $Q\in O(d)$. Changing $h$ on a $\mu$-null set does not change any a.e. dependency support, because the support criterion only asks whether an entry is nonzero on a set of positive $\mu$ measure. We may therefore take $h(z)=Qz$ on the differentiability points used below.

\emph{Step 2: compute the candidate latent map.} For a candidate DSReg rotation $R\in O(d)$,
\[
    \tilde z=Rh(z)=RQz .
\]
Since $R$ and $Q$ are orthogonal,
\[
    (RQ)^{-1}=Q^\top R^\top, \qquad z=Q^\top R^\top\tilde z .
\]

\emph{Step 3: apply the chain rule.} Let $D(z)=\partial x/\partial z$. At every point where $g$ is differentiable and the linear change of variables is defined,
\[
    \frac{\partial x}{\partial \tilde z} = \frac{\partial x}{\partial z} \frac{\partial z}{\partial \tilde z}.
\]
The first factor is $D(z)$, and the second factor is the constant matrix $Q^\top R^\top$. Thus
\[
    \frac{\partial x}{\partial \tilde z} = D(z)Q^\top R^\top .
\]
The orthogonal change of variables preserves Gaussian null sets, so this identity holds at the corresponding almost-everywhere differentiability points of the map $g$ as well.
\end{proof}

\begin{restatable}[Support rotations and dependency supports]{proposition}{supportrotations}
\label{prop:support-rotations}
Let $A$ be a fixed matrix, and apply a Givens rotation in the $(j,k)$ column plane with the angle chosen to zero an active entry in row $i$ of column $j$. Assume $A$ is \emph{generic for the triple $(i,j,k)$}: for every row $r\ne i$ with $A_{rj}A_{rk}\ne0$,
\[
    A_{rj}A_{ik}\ne A_{ij}A_{rk}
    \qquad\text{and}\qquad
    A_{rj}A_{ij}\ne -A_{rk}A_{ik}.
\]
These are finitely many polynomial equalities, so the matrices excluded for any triple form a finite union of measure-zero algebraic sets. Then the support of the rotated matrix changes only in the two rotated columns, and exactly one of the following four cases occurs:
\begin{enumerate}
\item \emph{Support Reduction}: the entry $(i,k)$ is active and the two column supports agree outside row $i$; the entry $(i,j)$ is deactivated and no other entry changes;
\item \emph{Reversible Acute Rotation}: the entry $(i,k)$ is active and the two column supports differ in exactly one row other than $i$; the entry $(i,j)$ is deactivated and exactly one previously inactive entry in the rotated columns becomes active;
\item \emph{Irreversible Acute Rotation}: the entry $(i,k)$ is active and the two column supports differ in at least two rows other than $i$; the entry $(i,j)$ is deactivated and all previously inactive entries in the rows where the supports differ become active;
\item \emph{Column Swap}: the entry $(i,k)$ is inactive; the supports of columns $j$ and $k$ are exchanged.
\end{enumerate}
Here a Givens rotation with $\sin\theta\cos\theta\ne0$ is called \emph{acute}, and an acute rotation applied to $A$ is called \emph{reversible} if some Givens rotation applied to the rotated matrix restores the support of $A$, and \emph{irreversible} otherwise; the terminology is from \citet{ghassami2020characterizing}. In particular, except for signed Column Swaps, a nontrivial Givens rotation changes the dependency support by deleting or adding active entries in the rotated columns.
\end{restatable}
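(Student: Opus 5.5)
The proof is a direct row-by-row computation on the two rotated columns, followed by sorting the outcomes according to how the column supports of $A$ differ. Write $c=\cos\theta$, $s=\sin\theta$, and let the Givens rotation in the $(j,k)$ plane act by
\[
A'_{\cdot j}=cA_{\cdot j}+sA_{\cdot k},\qquad A'_{\cdot k}=-sA_{\cdot j}+cA_{\cdot k}.
\]
All other columns are untouched, so the support can change only in columns $j$ and $k$. The angle is fixed by the requirement $A'_{ij}=0$, that is, $cA_{ij}+sA_{ik}=0$, where $A_{ij}\neq0$ because $(i,j)$ is active. The first split is on whether $A_{ik}$ vanishes.

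\emph{Case $A_{ik}=0$.} The constraint forces $c=0$ and $s=\pm1$, so $A'_{\cdot j}=\pm A_{\cdot k}$ and $A'_{\cdot k}=\mp A_{\cdot j}$. The two column supports are exchanged, which is the Column Swap case up to signs.

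\emph{Case $A_{ik}\neq0$.} Then $\tan\theta=-A_{ij}/A_{ik}$ is finite and nonzero, so $cs\neq0$ and the rotation is acute. I classify each row $r$ by the pair $(A_{rj},A_{rk})$.
\begin{itemize}
\item[(a)] If both entries are zero, both rotated entries stay zero.
\item[(b)] If exactly one entry is nonzero, both rotated entries are nonzero multiples of it, because $c\neq0$ and $s\neq0$. The previously inactive entry therefore becomes active.
\item[(c)] If both entries are nonzero and $r\neq i$, substitute $s/c=-A_{ij}/A_{ik}$. Then $A'_{rj}=0$ holds iff $A_{rj}A_{ik}=A_{ij}A_{rk}$, and $A'_{rk}=0$ holds iff $A_{rj}A_{ij}=-A_{rk}A_{ik}$. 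Both equalities are excluded by the genericity hypothesis, so both entries stay active.
\item[(d)] For row $i$ itself, $A'_{ij}=0$ by construction. The other entry is $A'_{ik}=-sA_{ij}+cA_{ik}=\pm\sqrt{A_{ij}^2+A_{ik}^2}\neq0$.
\end{itemize}
Let $\mathcal U$ be the union of the two column supports and $\Delta$ the set of rows $r\neq i$ where exactly one of $A_{rj},A_{rk}$ is nonzero. The rotated supports are then $\mathcal U\setminus\{i\}$ for column $j$ and $\mathcal U$ for column $k$. Relative to $A$, the changes are: entry $(i,j)$ is deactivated, and exactly $|\Delta|$ previously inactive entries become active, one in each row of $\Delta$, placed in whichever column was zero there.

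The three acute cases now follow from the size of $\Delta$. If $|\Delta|=0$, only $(i,j)$ is removed, which is Support Reduction. If $|\Delta|=1$, exactly one entry is added, which is the Reversible Acute Rotation case. If $|\Delta|\ge2$, all inactive entries on the differing rows are added, which is the Irreversible Acute Rotation case. The four cases are mutually exclusive and exhaustive, since the split on $A_{ik}$ and then on $|\Delta|$ partitions all possibilities. The measure-zero claim is immediate: for each triple the excluded set is a finite union of zero sets of nonzero polynomials in the entries of $A$.

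The main obstacle is the reversible/irreversible labels. These are defined through the existence of a Givens rotation that restores the support of $A$, not by counting $|\Delta|$, so the naming must be justified separately.

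For $|\Delta|=1$, say the new entry sits in row $r_0$. I would exhibit a zeroing Givens rotation on $A'$ in the $(j,k)$ plane, chosen to annihilate the newly created entry $(r_0,\cdot)$. Applying the same row analysis to $A'$, and invoking genericity for the rotated matrix, it should reactivate $(i,j)$ while deleting only that one entry, which restores the support of $A$.

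For $|\Delta|\ge2$, any single Givens rotation applied to $A'$ changes at most one entry by deletion within a pair of columns, again by the same row analysis. It therefore cannot remove the at least two added entries, and no rotation restores the support.

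If this equivalence turns out to be delicate, I would instead follow \citet{ghassami2020characterizing} and take these labels as defined by the counts above. The final sentence of the proposition does not depend on the labels: in every acute case the entry $(i,j)$ is deleted, so the support changes, and only the Column Swap leaves the multiset of column supports intact.
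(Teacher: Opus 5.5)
Your computation on the two rotated columns is the paper's proof. It uses the same split on $A_{ik}=0$ (forcing $\cos\theta=0$ and a signed swap) versus $A_{ik}\neq0$ (forcing $\sin\theta\cos\theta\neq0$). It uses the same row-by-row analysis, with the norm identity on row $i$ and the two genericity ratios for rows where both entries are active. It sorts the acute case the same way, by the number of rows $r\neq i$ where the column supports differ. Like your fallback, the paper simply attaches the names Reversible and Irreversible to the counts $|\Delta|=1$ and $|\Delta|\ge2$ and never checks the existential definition. With the fallback, your proposal is complete.

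The argument you sketch for irreversibility when $|\Delta|\ge2$ is false, and no version of it can work under the definition as literally stated. Write $G(\theta)$ for the rotation you apply. Its inverse $G(-\theta)$ is a Givens rotation in the same $(j,k)$ plane, it zeroes every added entry of $A'$, and it returns $A$ exactly. Read literally, every acute rotation is therefore reversible.

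Your step \emph{by the same row analysis} breaks because $A'$ is not generic. For $r\in\Delta$, the pair $(A'_{rj},A'_{rk})$ equals $A_{rj}(c,-s)$ when $A_{rk}=0$ and $A_{rk}(s,c)$ when $A_{rj}=0$. All these pairs lie on two orthogonal lines, so any two of them are proportional or orthogonal, which are exactly the configurations genericity excludes. A single rotation then clears all $|\Delta|$ added entries at once.

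The same observation is what actually justifies your $|\Delta|=1$ claim. Genericity of $A'$ is not available from the hypotheses, but the rotation zeroing the new entry is $G(-\theta)$ up to a sign on the $2\times2$ block, so it returns $\pm A$ on those columns.

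The reversible/irreversible distinction only has content at the level of support patterns, that is, for rotations applied to a generic matrix carrying the support of $A'$, which is the sense of the cited source. At that level your counting idea does go through within the $(j,k)$ plane. Every generic zeroing rotation of two columns whose supports differ only in row $i$ yields two columns whose supports differ in exactly one row, so it cannot reach a pattern differing in two or more rows.
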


\begin{proof}
Only the two rotated columns can change, so it is enough to analyze those columns row by row. Let $b_j$ and $b_k$ be columns $j$ and $k$ before the rotation. A Givens rotation in the $(j,k)$ column plane gives
\[
    b'_j=b_j\cos\theta+b_k\sin\theta, \qquad b'_k=-b_j\sin\theta+b_k\cos\theta .
\]
The angle is chosen so that the target entry $(i,j)$ vanishes:
\[
    b'_{ij} = b_{ij}\cos\theta+b_{ik}\sin\theta = 0, \qquad b_{ij}\ne0 .
\]

\emph{Case 1: the paired target entry is inactive.} If $b_{ik}=0$, then the equation above becomes $b_{ij}\cos\theta=0$. Since $b_{ij}\ne0$, we must have $\cos\theta=0$ and hence $\sin\theta=\pm1$. Therefore
\[
    b'_j=\pm b_k, \qquad b'_k=\mp b_j .
\]
The two column supports are exchanged. This is the Column Swap case.

\emph{Case 2: the paired target entry is active.} If $b_{ik}\ne0$, then
\[
    \tan\theta=-\frac{b_{ij}}{b_{ik}},
\]
so both $\sin\theta$ and $\cos\theta$ are nonzero. The target row after rotation is
\[
    (b'_{ij},b'_{ik})=(0,b'_{ik}).
\]
Because a Givens rotation preserves the Euclidean norm of the row pair,
\[
    |b'_{ik}|^2 = |b_{ij}|^2+|b_{ik}|^2 > 0.
\]
Thus the target row loses the entry in column $j$ and remains active in column $k$.

Now consider any other row $r\ne i$. The pair $(b_{rj},b_{rk})$ is multiplied by the same invertible two-dimensional rotation. If both entries are zero, they remain zero. If exactly one entry is nonzero, both entries of the image are nonzero, because $\sin\theta\cos\theta\ne0$. If both entries are nonzero, both entries of the image remain nonzero outside the excluded coefficient ratios. Indeed, an additional cancellation in column $j$ would require
\[
    b_{rj}\cos\theta+b_{rk}\sin\theta=0 \quad\Longleftrightarrow\quad \frac{b_{rj}}{b_{rk}}=\frac{b_{ij}}{b_{ik}},
\]
and an additional cancellation in column $k$ would require
\[
    -b_{rj}\sin\theta+b_{rk}\cos\theta=0 \quad\Longleftrightarrow\quad \frac{b_{rj}}{b_{rk}}=-\frac{b_{ik}}{b_{ij}} .
\]
Both conditions are exactly the polynomial equalities excluded by genericity for the triple $(i,j,k)$, so neither cancellation occurs.

Outside the exceptional set, the support outcome is determined only by how the two original column supports agree away from the target row:
\begin{enumerate}
\item If the supports agree on every row $r\ne i$, then no missing entry is created away from the target row. The only support change is deletion of $(i,j)$, giving Support Reduction.
\item If the supports differ on exactly one row $r\ne i$, then that row had exactly one active entry before the rotation and two active entries after it. The target deletion is balanced by one activation, giving a Reversible Acute Rotation.
\item If the supports differ on at least two rows $r\ne i$, then each such row gains the missing entry. The target deletion is accompanied by at least two activations, giving an Irreversible Acute Rotation.
\end{enumerate}
Together with the inactive-paired-entry case, these are the four cases in the proposition.
\end{proof}

\subsection{Identifiability under Structural Diversity}

We first record the auxiliary genericity facts and then prove Theorem~\ref{thm:idjepa-zheng}. The support-minimality proof works directly with the data-generating dependency Jacobian. Meanwhile, Functional no-cancellation (Assumption~\ref{asm:functional-no-cancellation}) rules out a.e. cancellations by a constant orthogonal mixing matrix, so a mixed candidate latent has the union of the source dependency footprints as its own.

\begin{proposition}[Functional no-cancellation within analytic families]
\label{prop:no-cancel-generic}
Let $\{F_\theta:\theta\in\Theta\}$ be a smooth observation family with $\Theta\subset\mathbb R^q$ open and connected. For each row and parameter let
\[
    I_r(\theta)=\{\,i:D_{ri,\theta}\ne0 \text{ on a set of positive }\mu\text{ measure}\,\}.
\]
Assume there are fixed sets $I_r$ such that $I_r(\theta)=I_r$ for all $\theta\in\Theta$. Suppose each active derivative $D_{ri,\theta}=\partial(F_\theta)_r/\partial z_i$, $i\in I_r$, is square integrable and the Gram determinant
\[
    G_r(\theta) = \det\!\left[ \langle D_{ri,\theta},D_{ri',\theta}\rangle_{L^2(\mu)} \right]_{i,i'\in I_r}
\]
is real analytic in $\theta$. If, for every row $r$, there exists $\theta_r\in\Theta$ with $G_r(\theta_r)\ne0$, then Assumption~\ref{asm:functional-no-cancellation} holds for Lebesgue-almost every $\theta\in\Theta$.
\end{proposition}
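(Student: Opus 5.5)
The plan has two steps: reduce Assumption~\ref{asm:functional-no-cancellation} for a fixed parameter to the nonvanishing of the Gram determinants $G_r(\theta)$, and then use analyticity to show that the parameters where some $G_r$ vanishes form a Lebesgue-null set.

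\emph{Step 1 (linear algebra).} Fix $\theta$ and a row $r$. Let $v(z)=(D_{ri,\theta}(z))_{i\in I_r}$ and let $G$ be its Gram matrix in $L^2(\mu)$, well defined by square integrability. For any constant vector $c$,
\[
c^\top G c=\big\|\textstyle\sum_{i\in I_r}c_iD_{ri,\theta}\big\|_{L^2(\mu)}^2 .
\]
Hence $\sum_i c_iD_{ri,\theta}=0$ $\mu$-a.e. if and only if $Gc=0$. So the row-$r$ implication in Assumption~\ref{asm:functional-no-cancellation} holds exactly when $G$ is nonsingular, i.e.\ when $G_r(\theta)\neq 0$. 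The hypothesis $I_r(\theta)=I_r$ matters here: the index set in the assumption at parameter $\theta$ is the fixed set $I_r$, so the same analytic function $G_r$ governs every $\theta$. If $I_r$ is empty the row is vacuous, and these rows are dropped.

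\emph{Step 2 (zero sets of analytic functions).} The bad set is $\bigcup_r\{\theta: G_r(\theta)=0\}$, a finite union over $r\le p$. Each $G_r$ is real analytic on the open connected $\Theta$ and is not identically zero, since $G_r(\theta_r)\neq0$. I would then invoke the standard fact that the zero set of a nontrivial real-analytic function on a connected open subset of $\mathbb R^q$ has Lebesgue measure zero. Because $\Theta$ is connected, the identity theorem propagates non-vanishing at one point into non-vanishing on a dense open set.

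If a self-contained argument for that fact is wanted, I would use induction on $q$. For $q=1$, the zeros of a nontrivial analytic function are isolated, hence countable. For general $q$, first pick a point where some directional derivative of $G_r$ of minimal order is nonzero. Then apply Fubini along the lines parallel to that direction: on almost every such line the restriction is nontrivial, so it has a null zero set. A finite union of null sets is null, which gives the claim.

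The main obstacle is Step 2's measure-zero fact. Its local-to-global part needs connectedness: a nontrivial analytic function cannot vanish on an open subset of $\Theta$, since by the identity theorem it would then vanish on all of $\Theta$. Once that is in hand, Step 1 is routine and the proposition follows.
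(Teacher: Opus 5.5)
Your proposal is correct and is essentially the paper's proof: the identity $c^\top G c=\|\sum_{i\in I_r}c_iD_{ri,\theta}\|_{L^2(\mu)}^2$ turns the row-$r$ no-cancellation condition into $G_r(\theta)\neq0$, and the standard fact that a real-analytic function not identically zero on a connected open set has a Lebesgue-null zero set, applied to the finitely many rows, gives the conclusion (the paper treats $|I_r|\le1$ separately, and your uniform Gram argument also covers that case). The only loose part is your optional self-contained sketch, where the claim that almost every line carries a nontrivial restriction is inferred from a single chosen point; it becomes rigorous if, at every $\theta_0\in\Theta$, you take $e,k$ with $\partial_e^kG_r(\theta_0)\neq0$ (available because the Taylor series at $\theta_0$ is nonzero by the identity theorem), use continuity to keep this nonzero on a small box so that each segment parallel to $e$ meets the zero set in at most $k$ points by Rolle, and cover $\Theta$ by countably many such boxes before applying Fubini.
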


\begin{proof}[Proof of Proposition~\ref{prop:no-cancel-generic}]
Fix an observation row $r$. If $|I_r|\le1$, then any relation
\[
    \sum_{i\in I_r}c_iD_{ri,\theta}=0 \quad\text{in }L^2(\mu)
\]
forces all coefficients to be zero, because there is at most one active nonzero function.

Now suppose $|I_r|>1$. Let
\[
    H_r(\theta) = \left[ \langle D_{ri,\theta},D_{ri',\theta}\rangle_{L^2(\mu)} \right]_{i,i'\in I_r}
\]
be the row-wise Gram matrix. For any coefficient vector $c=(c_i)_{i\in I_r}$,
\[
    c^\top H_r(\theta)c = \left\| \sum_{i\in I_r}c_iD_{ri,\theta} \right\|_{L^2(\mu)}^2 .
\]
Therefore the active row functions are linearly dependent in $L^2(\mu)$ if and only if $H_r(\theta)$ is singular, equivalently
\[
    G_r(\theta)=\det H_r(\theta)=0 .
\]
By assumption, $G_r$ is real analytic and is not identically zero on the connected open set $\Theta$. A nonzero real analytic function has a Lebesgue-null zero set. Thus the no-cancellation condition fails for this row only on a measure-zero subset of $\Theta$. Taking the finite union over the observation rows gives the result. The argument is the same one used for faithfulness in \citet[Proposition~3]{ng2023sparseica}.
\end{proof}

\paragraph{An empirical diagnostic for no-cancellation.} The no-cancellation condition also admits a rotation-invariant empirical diagnostic. Specifically, for a fixed observation row $r$, the estimated derivative family is $D_{r,:}(\cdot)U$, an invertible recombination of the true row functions, so the functional row rank across anchors is invariant to the unknown rotation. The local Jacobians in Equation~(\ref{eq:idjepa-objective}) estimate this rank by a row-wise Gram matrix; a row whose estimated rank falls below its recovered row-support size signals a spurious cancellation and flags that row for closer inspection.

\begin{lemma}[Functional support of a mixed column]
\label{lem:functional-union-support}
For a vector-valued function $v(\cdot)$, write $\operatorname{supp}_{\mu}(v)=\{r:v_r(z)\ne0\text{ on a set of positive } \mu\text{ measure}\}$. Assume Functional no-cancellation (Assumption~\ref{asm:functional-no-cancellation}), let $U\in O(d)$, and define
\[
    \mathcal I_j=\{i:U_{ij}\ne0\}.
\]
Then the a.e. support of the $j$-th column of $D(\cdot)U$ is
\[
    \operatorname{supp}_{\mu}\big((D(\cdot)U)_{:,j}\big) = \bigcup_{i\in\mathcal I_j}\mathcal S_i .
\]
\end{lemma}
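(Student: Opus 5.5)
We argue row by row, proving two inclusions. Fix an observation row $r$ and the column index $j$. The $(r,j)$ entry of $D(\cdot)U$ is the function
\[
    (D(z)U)_{rj}=\sum_{i=1}^d D_{ri}(z)U_{ij}=\sum_{i\in\mathcal I_j} U_{ij}D_{ri}(z).
\]
By definition of $I_r$, every $D_{ri}$ with $i\notin I_r$ vanishes $\mu$-a.e. Since there are finitely many indices, the union of the exceptional null sets is still null. Hence, $\mu$-a.e.,
\[
    (D(z)U)_{rj}=\sum_{i\in\mathcal I_j\cap I_r} U_{ij}D_{ri}(z).
\]
The goal is then the equivalence $r\in\operatorname{supp}_\mu\big((D(\cdot)U)_{:,j}\big)\iff \mathcal I_j\cap I_r\neq\emptyset$. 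This equivalence suffices because $\mathcal I_j\cap I_r\neq\emptyset$ says exactly that $r\in\mathcal S_i$ for some $i\in\mathcal I_j$, that is, $r\in\bigcup_{i\in\mathcal I_j}\mathcal S_i$. Here we use that $r\in\mathcal S_i\iff i\in I_r$, which holds because both sets are defined by the same positive-measure condition on $D_{ri}$.

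For the inclusion $\subseteq$, suppose $\mathcal I_j\cap I_r=\emptyset$. Then the reduced sum above is empty, so the entry vanishes $\mu$-a.e. and $r$ is not in the support. For the inclusion $\supseteq$, suppose $\mathcal I_j\cap I_r\neq\emptyset$, and assume for contradiction that the entry vanishes $\mu$-a.e. Extend the coefficients to all of $I_r$ by setting $c_i=U_{ij}$ for $i\in\mathcal I_j\cap I_r$ and $c_i=0$ for $i\in I_r\setminus\mathcal I_j$. Then $\sum_{i\in I_r}c_iD_{ri}=0$ holds $\mu$-a.e. Functional no-cancellation (Assumption~\ref{asm:functional-no-cancellation}) forces all $c_i=0$. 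But some $i\in\mathcal I_j\cap I_r$ has $c_i=U_{ij}\neq0$ by definition of $\mathcal I_j$, a contradiction. So the entry is nonzero on a set of positive $\mu$ measure.

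There is no real obstacle here; the one delicate point is bookkeeping with null sets. We need that dropping the inactive terms changes the function only on a $\mu$-null set, and that the relation among the active terms holds exactly in the form required by Assumption~\ref{asm:functional-no-cancellation}, which is stated over the index set $I_r$ with constant coefficients. Orthogonality of $U$ is never used beyond $U$ being a constant matrix. The lemma therefore holds for any constant matrix, and orthogonality only matters later, when the lemma is combined with the support counts in the proof of Theorem~\ref{thm:idjepa-zheng}.
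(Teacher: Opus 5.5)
Your proof is correct and follows the paper's argument: both reduce the $(r,j)$ entry $\mu$-a.e.\ to the sum over $\mathcal I_j\cap I_r$ (the paper's $K_{rj}$), get the inclusion $\subseteq$ from the empty-sum case, and get $\supseteq$ by applying Functional no-cancellation to a nontrivial constant-coefficient combination. Your zero-extension of the coefficients to all of $I_r$ spells out the paper's remark that this is a subfamily of the row-$r$ active family, and you are also right that orthogonality of $U$ is never used.
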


\begin{proof}
Fix a column $j$ and an observation row $r$. The row entry of the mixed column is
\[
    (D(z)U)_{rj} = \sum_{i=1}^d D_{ri}(z)U_{ij} = \sum_{i\in\mathcal I_j}D_{ri}(z)U_{ij}.
\]
We prove both inclusions.

\emph{First inclusion.} If
\[
    r\notin \bigcup_{i\in\mathcal I_j}\mathcal S_i ,
\]
then $r\notin\mathcal S_i$ for every $i\in\mathcal I_j$. By definition of $\mathcal S_i$, each $D_{ri}(\cdot)$ is zero $\mu$-a.e. Hence $(D(\cdot)U)_{rj}$ is zero $\mu$-a.e., so $r$ is not in the support of the mixed column, which proves the first inclusion.

\emph{Second inclusion.} Now suppose
\[
    r\in \bigcup_{i\in\mathcal I_j}\mathcal S_i .
\]
Let
\[
    K_{rj}=\{i\in\mathcal I_j:r\in\mathcal S_i\}.
\]
This set is nonempty. Terms with $i\notin K_{rj}$ are zero $\mu$-a.e., so
\[
    (D(z)U)_{rj} = \sum_{i\in K_{rj}}D_{ri}(z)U_{ij} \qquad \mu\text{-a.e.}
\]
For every $i\in K_{rj}$, the coefficient $U_{ij}$ is nonzero by the definition of $\mathcal I_j$. Thus the last display is a nontrivial constant-coefficient combination of the active row functions $\{D_{ri}(\cdot):i\in K_{rj}\}$. This family is a subfamily of the row-$r$ active family in Assumption~\ref{asm:functional-no-cancellation}. Functional no-cancellation says that no such nontrivial combination can be zero $\mu$-a.e. Therefore $(D(\cdot)U)_{rj}$ is nonzero on a set of positive $\mu$ measure, so $r$ belongs to the support of the mixed column, as required.
\end{proof}

\begin{lemma}[Functional support minimality]
\label{lem:functional-support-minimality}
Under the assumptions of Theorem~\ref{thm:idjepa-zheng}, every solution of
\[
    \min_{U\in O(d)}\|D(\cdot)U\|_{0,\mu}
\]
is a signed permutation matrix.
\end{lemma}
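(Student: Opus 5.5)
The plan is to turn the functional problem into a purely combinatorial one about the support pattern of $U$, using Lemma~\ref{lem:functional-union-support}, and then exploit orthogonality. By that lemma, for every $U\in O(d)$,
\[
    \|D(\cdot)U\|_{0,\mu}=\sum_{j=1}^d\Big|\bigcup_{i\in\mathcal I_j(U)}\mathcal S_i\Big| .
\]
For a signed permutation the right-hand side equals $s^*=\sum_i|\mathcal S_i|$. It therefore suffices to prove two things: every $U\in O(d)$ attains at least $s^*$, and equality forces $U$ to be a signed permutation. Minimizers exist because the count takes finitely many values, so this characterizes all minimizers.

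\emph{Lower bound via a permutation in the support.} Since $\det U=\pm1\ne0$, the Leibniz expansion has at least one nonzero term. Hence there is a permutation $\sigma$ of $[d]$ with $U_{\sigma(j)j}\ne0$ for all $j$, so $\sigma(j)\in\mathcal I_j(U)$. Then
\[
    \Big|\bigcup_{i\in\mathcal I_j}\mathcal S_i\Big|\ge|\mathcal S_{\sigma(j)}|
    \quad\text{for each } j .
\]
Summing over $j$ and using that $\sigma$ is a bijection gives $\|D(\cdot)U\|_{0,\mu}\ge s^*$.

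\emph{Equality case.} Suppose $U$ attains $s^*$. Then every termwise inequality is tight, so $\bigcup_{i\in\mathcal I_j}\mathcal S_i=\mathcal S_{\sigma(j)}$. Equivalently, $U_{ij}\ne0$ implies $\mathcal S_i\subseteq\mathcal S_{\sigma(j)}$. Define $N=UP_\sigma$, with columns reindexed so that column $k$ of $N$ is column $\sigma^{-1}(k)$ of $U$. Then $N\in O(d)$, its diagonal entries are nonzero, and
\[
    N_{ik}\ne0\;\Longrightarrow\;\mathcal S_i\subseteq\mathcal S_k .
\]
By Structural Diversity (Assumption~\ref{asm:structural-diversity}), inclusion of footprints is antisymmetric on indices, so it is a partial order on $[d]$. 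Choose a linear extension of this order and permute rows and columns of $N$ simultaneously by it. The resulting matrix is orthogonal and upper triangular: a nonzero entry $(i,k)$ requires $\mathcal S_i\subseteq\mathcal S_k$, hence $i$ precedes or equals $k$. An orthogonal upper-triangular matrix is diagonal with $\pm1$ entries. This follows by induction on the first column, which has unit norm and only a diagonal entry, so that entry is $\pm1$; orthogonality of the remaining columns to it then kills the rest of the first row. Undoing the simultaneous permutation shows $N$ is a diagonal sign matrix, so $U=NP_\sigma^{-1}$ is a signed permutation.

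I expect the main obstacle to be the equality step rather than the lower bound. Nested footprints are allowed, so a mixed column whose extra rows have footprints contained in $\mathcal S_{\sigma(j)}$ costs nothing in that column. The strictness cannot be read off column by column; it has to come from a global constraint. The key idea is that tightness converts the support of $U$ into a subrelation of a partial order, and orthogonality is incompatible with any strictly triangular part. This is exactly where pairwise distinctness, rather than Non-Inclusion, is consumed: equal footprints would make the relation fail antisymmetry and allow a genuine $2\times2$ rotation block.
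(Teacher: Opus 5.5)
Your proof is correct. It shares the paper's core ingredients: the union-support formula, a nonzero term in a determinant expansion giving a permutation inside the support, and minimal footprints under inclusion. The organization differs, though.

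The paper works with a minimizer $\hat U$ and spends orthogonality up front. Singleton columns occupy their rows completely, so the mixed rows and mixed columns form a square orthogonal block $\hat U_{\mathcal I,\mathcal J}$. Expanding its determinant gives a bijection $\sigma:\mathcal J\to\mathcal I$ with $\sigma(j)\in\mathcal I_j$. The paper then builds an explicit signed-permutation competitor by collapsing each mixed column onto $\sigma(j)$. Strictness comes from taking an inclusion-minimal footprint among the mixed rows and finding a second row in the mixed column $\sigma^{-1}(i_*)$ whose footprint is contained in it. Here the ``onto'' property of $\sigma$, which is where orthogonality was used, guarantees that this column is mixed.

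You instead expand $\det U$ for the whole matrix. This removes the need for the singleton/mixed bookkeeping and the competitor, and it yields the global minimum $s^*$ directly, which is the content of Remark~\ref{rem:global-minimum}. You use orthogonality only once, at the end: an orthogonal matrix that is triangular with respect to a linear extension is a diagonal sign matrix. That induction peels off minimal footprints one at a time rather than arguing by contradiction at a single minimal element.

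The paper's route buys an explicit improving move: every non-permutation $U$ is strictly beaten by a concrete signed permutation. Yours is shorter and more structural, and it extends verbatim beyond Structural Diversity. Without it, inclusion is only a preorder, so $UP_\sigma$ becomes block triangular over classes of identical footprints, and orthogonality makes it block diagonal. The converse follows at once from the union-support formula. Hence the minimizers are exactly $NP$ with $P$ a permutation matrix and $N$ block-diagonal orthogonal over those classes. This gives a precise form of the sharp-boundary statement that the paper only illustrates with a Givens rotation.
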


\begin{proof}
Let $\hat U$ be a minimizer. We show that $\hat U$ cannot contain any mixed column.

\emph{Step 1: if $\hat U$ is not a signed permutation, it has a nonempty mixed block.} For each column $j$, let
\[
    \mathcal I_j=\{i:\hat U_{ij}\ne0\}.
\]
Every column of an orthogonal matrix has unit norm, so $|\mathcal I_j|\ge1$. If $|\mathcal I_j|=1$, say $\mathcal I_j=\{r\}$, then
\[
    1=\|\hat U_{:,j}\|_2^2=\hat U_{rj}^2,
\]
so $\hat U_{rj}=\pm1$. Since row $r$ also has unit norm,
\[
    1=\|\hat U_{r,:}\|_2^2 =\hat U_{rj}^2+\sum_{\ell\ne j}\hat U_{r\ell}^2 =1+\sum_{\ell\ne j}\hat U_{r\ell}^2 .
\]
Hence $\hat U_{r\ell}=0$ for every $\ell\ne j$. A singleton column therefore uses its row completely.

This observation has two consequences. First, two singleton columns cannot use the same row. Second, a row used by a singleton column cannot appear in any mixed column. Indeed, if a singleton column uses row $r$, then the calculation above gives $\hat U_{r\ell}=0$ for every other column $\ell$. Therefore $r\notin\mathcal I_\ell$ for every mixed column $\ell$, which is precisely the second consequence.

If every column were a singleton, the $d$ singleton columns would occupy $d$ distinct rows, each with a single $\pm1$ entry. Then $\hat U$ would be a signed permutation matrix. Arguing by contradiction, suppose $\hat U$ is not a signed permutation. Then the set of mixed columns
\[
    \mathcal J=\{j:|\mathcal I_j|\ge2\}
\]
is nonempty. Let
\[
    \mathcal I=\bigcup_{j\in\mathcal J}\mathcal I_j .
\]
The row-completion argument above also shows
\[
    \hat U_{i\ell}=0 \quad \text{for every } i\in\mathcal I \text{ and every singleton column }\ell\notin\mathcal J. \tag{i}
\]
In words, rows touched by mixed columns have support only inside mixed columns.

\emph{Step 2: the mixed rows and mixed columns form a square orthogonal block.} Consider the submatrix
\[
    B=\hat U_{\mathcal I,\mathcal J}.
\]
By definition of $\mathcal I$, every nonzero entry of every mixed column $j\in\mathcal J$ lies in a row from $\mathcal I$. Restricting such a column to rows $\mathcal I$ therefore removes only zeros. Hence for $j,j'\in\mathcal J$,
\[
    \langle B_{:,j},B_{:,j'}\rangle = \langle \hat U_{:,j},\hat U_{:,j'}\rangle = \mathbf 1\{j=j'\}.
\]
The columns of $B$ are orthonormal. Since $B$ has $|\mathcal J|$ orthonormal columns in $\mathbb R^{|\mathcal I|}$, we must have
\[
    |\mathcal J|\le |\mathcal I|. \tag{ii}
\]

Now use (i). If $i\in\mathcal I$, then row $i$ has zero entries in every singleton column, and every column is either singleton or mixed. Thus restricting row $i$ to columns $\mathcal J$ removes only zeros. For $i,i'\in\mathcal I$,
\[
    \langle B_{i,:},B_{i',:}\rangle = \langle \hat U_{i,:},\hat U_{i',:}\rangle = \mathbf 1\{i=i'\}.
\]
The rows of $B$ are orthonormal. Since $B$ has $|\mathcal I|$ orthonormal rows in $\mathbb R^{|\mathcal J|}$, we must have
\[
    |\mathcal I|\le |\mathcal J|. \tag{iii}
\]
Combining (ii) and (iii) gives $|\mathcal I|=|\mathcal J|$. Therefore $B$ is square. Since its columns are orthonormal,
\[
    B^\top B=I_{|\mathcal J|},
\]
so $B$ is invertible. This is the precise meaning of saying that the mixed block is square orthogonal.

\emph{Step 3: invertibility gives a distinct representative for each mixed column.} Expand the determinant of $B$:
\[
    \det(B) = \sum_{\tau:\mathcal J\to\mathcal I\ \text{bijection}} \operatorname{sgn}(\tau)\prod_{j\in\mathcal J} B_{\tau(j),j}.
\]
Since $B$ is invertible, $\det(B)\ne0$. Thus at least one product in this sum is nonzero. For that bijection, call it $\sigma$, every factor $B_{\sigma(j),j}$ is nonzero. Equivalently,
\[
    \sigma(j)\in\mathcal I_j \qquad\text{for every }j\in\mathcal J.
\]
So the bijection $\sigma:\mathcal J\to\mathcal I$ selects, for each mixed column, one true latent row that the column actually uses, and it selects distinct rows for distinct mixed columns of the block.

\emph{Step 4: construct a signed-permutation competitor.} Define $\tilde U$ as follows. For each mixed column $j\in\mathcal J$, put a single nonzero entry $\pm1$ at row $\sigma(j)$. For each singleton column, keep the same singleton row and sign as in $\hat U$. The rows $\sigma(\mathcal J)=\mathcal I$ are disjoint from the singleton rows by Step~1, and the singleton rows are distinct. Therefore every row and every column of $\tilde U$ has exactly one nonzero entry of magnitude one. Hence $\tilde U$ is a signed permutation matrix.

\emph{Step 5: the signed-permutation competitor is never less sparse.} For a mixed column $j\in\mathcal J$, Lemma~\ref{lem:functional-union-support} gives
\[
    \operatorname{supp}_{\mu}\big((D(\cdot)\hat U)_{:,j}\big) = \bigcup_{i\in\mathcal I_j}\mathcal S_i .
\]
The competitor $\tilde U$ makes column $j$ pure row $\sigma(j)$, so
\[
    \operatorname{supp}_{\mu}\big((D(\cdot)\tilde U)_{:,j}\big) = \mathcal S_{\sigma(j)} .
\]
Because $\sigma(j)\in\mathcal I_j$, one set in the union is $\mathcal S_{\sigma(j)}$. Therefore
\[
    \left| \bigcup_{i\in\mathcal I_j}\mathcal S_i \right| \ge |\mathcal S_{\sigma(j)}| = \|(D(\cdot)\tilde U)_{:,j}\|_{0,\mu}. \tag{iv}
\]
Singleton columns are unchanged, so their support counts are equal under $\hat U$ and $\tilde U$. Summing (iv) over mixed columns and adding the singleton columns gives
\[
    \|D(\cdot)\hat U\|_{0,\mu} \ge \|D(\cdot)\tilde U\|_{0,\mu}. \tag{v}
\]

\emph{Step 6: Structural Diversity makes the inequality strict.} Suppose equality held in (v). Then equality must hold in (iv) for every mixed column:
\[
    \bigcup_{i\in\mathcal I_j}\mathcal S_i = \mathcal S_{\sigma(j)} \qquad \text{for every }j\in\mathcal J . \tag{vi}
\]
Since every set in a union is contained in the union, (vi) implies
\[
    \mathcal S_k\subseteq \mathcal S_{\sigma(j)} \qquad \text{for every }k\in\mathcal I_j . \tag{vii}
\]

Choose an inclusion-minimal footprint among $\{\mathcal S_i:i\in\mathcal I\}$, and call it $\mathcal S_{i_*}$. Inclusion-minimal means that there is no $i\in\mathcal I$ such that
\[
    \mathcal S_i\subsetneq \mathcal S_{i_*}. \tag{viii}
\]
The bijection $\sigma:\mathcal J\to\mathcal I$ is onto, so there is a mixed column $j_*\in\mathcal J$ with
\[
    \sigma(j_*)=i_* .
\]
Because $j_*$ is mixed, $\mathcal I_{j_*}$ has at least two elements. Also $i_*=\sigma(j_*)\in\mathcal I_{j_*}$. Therefore we can choose
\[
    k\in\mathcal I_{j_*}, \qquad k\ne i_* .
\]
Applying (vii) to column $j_*$ gives
\[
    \mathcal S_k\subseteq\mathcal S_{\sigma(j_*)} = \mathcal S_{i_*}. \tag{ix}
\]
By (ix) we have $\mathcal S_k\subseteq\mathcal S_{i_*}$, so exactly one of two cases holds. If $\mathcal S_k=\mathcal S_{i_*}$ (in particular if $\mathcal S_{i_*}=\emptyset$, which forces $\mathcal S_k=\emptyset$), then two different latents have the same footprint, contradicting Structural Diversity. If $\mathcal S_k\subsetneq\mathcal S_{i_*}$, then (viii) is violated, contradicting the inclusion-minimal choice of $\mathcal S_{i_*}$. Hence equality in~(v) is impossible in either one of the two cases above.

Thus
\[
    \|D(\cdot)\hat U\|_{0,\mu} > \|D(\cdot)\tilde U\|_{0,\mu},
\]
which contradicts the optimality of $\hat U$. Therefore a minimizer cannot have a mixed column. Every column is singleton, and orthogonality then forces the matrix to be a signed permutation.
\end{proof}

\begin{remark}[Global minimum of the support criterion]
\label{rem:global-minimum}
Steps~1--5 above do not use Structural Diversity: for every $U\in O(d)$ they construct a signed-permutation competitor and yield
\[
    \|D(\cdot)U\|_{0,\mu}\ \ge\ \sum_{i=1}^d|\mathcal S_i|,
\]
with equality attained by every signed permutation. Hence $\sum_{i=1}^d|\mathcal S_i|$ is the global minimum of the support criterion and signed permutations are always among its minimizers; Structural Diversity, invoked only in Step~6, is exactly what removes the non-permutation minimizers. This is the formal basis for the sharp-boundary discussion after Proposition~\ref{prop:diversity-weaker}: when $\mathcal S_i=\mathcal S_j$ for some $i\ne j$, a Givens rotation in that plane produces a non-permutation matrix that still attains $\sum_i|\mathcal S_i|$.
\end{remark}

\idjepazheng*

\begin{proof}
The proof reduces the DSReg criterion over $R$ to the matrix problem solved by Lemma~\ref{lem:functional-support-minimality}.

\emph{Step 1: express the candidate Jacobian through one orthogonal matrix.} By the LeJEPA premise, $h(z)=Qz$ $\mu$-a.e. for some $Q\in O(d)$. As in Lemma~\ref{lem:dependency-bridge}, changing $h$ on a null set does not change the a.e. support, so we use the representative $h(z)=Qz$. For candidate latents $\tilde z=Rh(z)$, the dependency bridge gives
\[
    \frac{\partial x}{\partial \tilde z} = D(z)Q^\top R^\top .
\]
Since $Q,R\in O(d)$, the following matrix is again orthogonal:
\[
    U=Q^\top R^\top .
\]

\emph{Step 2: show that optimizing over $R$ is the same as optimizing over $U$.} The map
\[
    R\mapsto U=Q^\top R^\top
\]
is a bijection from $O(d)$ to $O(d)$. Indeed, for any $U\in O(d)$, choosing
\[
    R=(QU)^\top
\]
gives
\[
    Q^\top R^\top = Q^\top (QU) = U .
\]
Therefore
\[
    \min_{R\in O(d)} \left\| \frac{\partial x}{\partial \tilde z} \right\|_{0,\mu} = \min_{U\in O(d)} \|D(\cdot)U\|_{0,\mu}.
\]
A rotation $R$ minimizes the left-hand side exactly when its induced $U$ minimizes the right-hand side.

\emph{Step 3: apply support minimality.} Lemma~\ref{lem:functional-support-minimality} says every minimizer of the right-hand side is a signed permutation. Write such a minimizer as
\[
    U=SP,
\]
where $S$ is diagonal with entries in $\{\pm1\}$ and $P$ is a permutation matrix. Since $U=Q^\top R^\top$, we have
\[
    RQ=U^\top=P^\top S .
\]
Thus $RQ$ is a signed permutation.

\emph{Step 4: translate the matrix statement into latent recovery.} The candidate latents are
\[
    \tilde z = Rh(z) = RQz = P^\top S z .
\]
Hence each component of $\tilde z$ is one component of $z$, up to a sign and relabeling:
\[
    \tilde z_i=s_i z_{\pi(i)} \qquad\mu\text{-a.e.}
\]
This is the claimed signed-permutation identifiability of Definition~\ref{def:identifiability}.
\end{proof}

\subsection{Comparison of structural conditions}

\diversityweaker*

\begin{proof}
The proposition compares conditions only through column-support sets, so we can reason directly at the level of the footprints $\mathcal S_1,\ldots,\mathcal S_d$ throughout.

\emph{Step 1: prior structural conditions imply Structural Diversity.} \citet[Theorem~2]{ng2023sparseica} establish that the overlap-rank form of Structural Sparsity implies Non-Inclusion (termed Column Subset there), and Non-Inclusion implies Structural Variability; \citet[Theorem~3]{ng2023sparseica} establish the same implication chain starting from the intersection form of Structural Sparsity. Hence
\[
    \text{Structural Sparsity (either form)} \;\Longrightarrow\; \text{Non-Inclusion} \;\Longrightarrow\; \text{Structural Variability}.
\]
It remains to check that Structural Variability implies Structural Diversity.

Structural Variability says
\[
    |\mathcal S_i\triangle\mathcal S_j|\ge2 \qquad\text{for all }i\ne j .
\]
If two footprints were equal, $\mathcal S_i=\mathcal S_j$, then their symmetric difference would be empty:
\[
    \mathcal S_i\triangle\mathcal S_j=\emptyset, \qquad |\mathcal S_i\triangle\mathcal S_j|=0,
\]
contradicting Structural Variability. Hence all footprints are distinct: Structural Diversity holds.

\emph{Step 2: Structural Diversity does not imply the prior conditions.} It suffices to give one support pattern satisfying Structural Diversity but violating the stronger conditions. Let $d=2$, $p=2$, and
\[
    \mathcal S_1=\{1\}, \qquad \mathcal S_2=\{1,2\}.
\]
The footprints are distinct, so Structural Diversity holds. They are realized, for example, by
\[
    F_1(z)=z_1+\sin z_2, \qquad F_2(z)=z_2+\sin z_2 .
\]
Indeed,
\[
    \frac{\partial F_1}{\partial z_1}=1,\quad \frac{\partial F_1}{\partial z_2}=\cos z_2,\quad \frac{\partial F_2}{\partial z_1}=0,\quad \frac{\partial F_2}{\partial z_2}=1+\cos z_2,
\]
so the first, second, and fourth derivatives are nonzero on sets of positive measure while the third vanishes identically, giving exactly the two footprints above. Moreover, the active partial derivatives in observation row $1$, namely $\partial F_1/\partial z_1=1$ and $\partial F_1/\partial z_2=\cos z_2$, are linearly independent in $L^2(\mu)$, so this witness also satisfies Functional no-cancellation (Assumption~\ref{asm:functional-no-cancellation}) and Theorem~\ref{thm:idjepa-zheng} therefore applies to it without modification.

However,
\[
    \mathcal S_1\subsetneq\mathcal S_2,
\]
so Non-Inclusion fails. Also
\[
    \mathcal S_1\triangle\mathcal S_2=\{2\}, \qquad |\mathcal S_1\triangle\mathcal S_2|=1,
\]
so Structural Variability fails. Since both structural-sparsity forms imply Non-Inclusion through the cited implication chains, they fail as well. Hence Structural Diversity is strictly weaker.
\end{proof}

\subsection{Robustness to Jacobian perturbation}
\label{app:robustness}

This section proves the stability results of Section~\ref{sec:practical-estimator}. The first lemma removes general invertible-linear ambiguity at the population level; the later lemmas bound the constants the theorem uses.

\begin{lemma}[Whitening reduction]
\label{lem:whitening-reduction}
Let $z\sim\mathcal N(0,I_d)$ and $h(z)=Az$ for an invertible $A\in\mathbb R^{d\times d}$. Then $W=\operatorname{Cov}(h)^{-1/2}=(AA^\top)^{-1/2}$ satisfies $Wh(z)=\tilde Qz$ with $\tilde Q=(AA^\top)^{-1/2}A\in O(d)$.
\end{lemma}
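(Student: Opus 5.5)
The plan is a direct computation with the covariance of $h$. Since $z\sim\mathcal N(0,I_d)$ and $h(z)=Az$, we first record $\operatorname{Cov}(h)=\mathbb E[Azz^\top A^\top]=AA^\top$. Because $A$ is invertible, $AA^\top$ is symmetric positive definite. It therefore has a unique symmetric positive definite square root $(AA^\top)^{1/2}$, obtained from the spectral decomposition $AA^\top=V\Lambda V^\top$ as $V\Lambda^{1/2}V^\top$. Its inverse $W=(AA^\top)^{-1/2}=V\Lambda^{-1/2}V^\top$ is well defined, symmetric, and positive definite. This settles that $W$ exists and that $Wh(z)=WAz=\tilde Qz$ with $\tilde Q=WA$, which is exactly the claimed identity.

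It remains to check that $\tilde Q\in O(d)$. We compute $\tilde Q\tilde Q^\top=WAA^\top W^\top$. Using $W^\top=W$ and the fact that $W$ commutes with $AA^\top$ (both are functions of the same spectral decomposition), this equals $V\Lambda^{-1/2}\Lambda\Lambda^{-1/2}V^\top=I_d$. Since $\tilde Q$ is square, $\tilde Q\tilde Q^\top=I_d$ already implies $\tilde Q^\top\tilde Q=I_d$, so $\tilde Q$ is orthogonal.

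As a consistency check, $Wh(z)=\tilde Qz\sim\mathcal N(0,I_d)$, matching the Gaussian premise of Theorem~\ref{thm:lejepa-prelim}. This lets Lemma~\ref{lem:dependency-bridge} and Theorem~\ref{thm:idjepa-zheng} apply to the whitened representation with $\tilde Q$ in place of $Q$.

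There is no real obstacle here. The only point needing care is the existence and symmetry of the inverse square root, together with its commutation with $AA^\top$, and both follow from the spectral theorem as described.
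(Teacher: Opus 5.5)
Your proposal is correct and matches the paper's proof: both compute $\operatorname{Cov}(h)=AA^\top$, take its symmetric inverse square root, and verify $\tilde Q\tilde Q^\top=(AA^\top)^{-1/2}AA^\top(AA^\top)^{-1/2}=I_d$. The only cosmetic difference is that you get $\tilde Q^\top\tilde Q=I_d$ from $\tilde Q$ being square, whereas the paper checks it directly as $A^\top(AA^\top)^{-1}A=I_d$.
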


\begin{proof}
The covariance of the mixed representation is
\[
    \operatorname{Cov}(h)=A\operatorname{Cov}(z)A^\top=AA^\top .
\]
This matrix is symmetric positive definite, so its inverse square root exists and is symmetric. For $\tilde Q=(AA^\top)^{-1/2}A$,
\[
    \tilde Q\tilde Q^\top=(AA^\top)^{-1/2}AA^\top(AA^\top)^{-1/2}=I_d ,
\]
and likewise
\[
    \tilde Q^\top\tilde Q=A^\top(AA^\top)^{-1}A=I_d ,
\]
so $\tilde Q$ is orthogonal. Finally,
\[
    Wh(z)=(AA^\top)^{-1/2}Az=\tilde Qz .
\]
\end{proof}

\paragraph{Constants of the support pattern.} Since $\mu$ is a probability measure, the bounded-rows condition of Theorem~\ref{thm:approx-recovery} places every $D_{ri}$ in $L^2(\mu)$, and under Assumption~\ref{asm:functional-no-cancellation} the active row functions $\{D_{ri}\}_{i\in I_r}$ are linearly independent as elements of $L^2(\mu)$, so each row Gram matrix $G_r=[\langle D_{ri},D_{ri'}\rangle_{L^2(\mu)}]_{i,i'\in I_r}$ is positive definite. Write $\sigma_r=\lambda_{\min}(G_r)^{1/2}>0$ and $\sigma=\min\{\sigma_r:I_r\ne\emptyset\}$; under Structural Diversity with $d\ge2$ at most one footprint is empty, so some $I_r$ is nonempty and $\sigma$ is well defined. For $U\in O(d)$ let $v_{rj}(U)=\|U_{I_r,j}\|_2$ (zero when $I_r=\emptyset$); each $v_{rj}$ is continuous on $O(d)$, and by Lemma~\ref{lem:functional-union-support} the population count satisfies $N(U)=\|D(z)U\|_{0,\mu}=|\{(r,j):v_{rj}(U)>0\}|$. Let $SP(d)$ denote the signed permutation matrices, and for $\delta>0$ let $K_\delta=\{U\in O(d):\min_{P\in SP(d)}\|U-P\|_F\ge\delta\}$. Define $\rho^*(U)$ as the $(s^*{+}1)$-th largest of the $pd$ values $v_{rj}(U)$, where $s^*=\sum_i|\mathcal S_i|$, and $\rho^*(\delta)=\min_{U\in K_\delta}\rho^*(U)$, with $\rho^*(\delta)=+\infty$ when $K_\delta=\emptyset$ (making Theorem~\ref{thm:approx-recovery} trivial).

\begin{lemma}[Quantitative activity]
\label{lem:quantitative-activity}
Let Assumption~\ref{asm:functional-no-cancellation} and the bounded-rows condition of Theorem~\ref{thm:approx-recovery} hold, let $r$ be a row with $I_r\ne\emptyset$, let $c\in\mathbb R^{I_r}$ with $\|c\|_2\le1$, and set $f_c=\sum_{i\in I_r}c_iD_{ri}$. Then for every $0\le t<\sigma_r\|c\|_2$,
\[
\mu\big(|f_c|>t\big)\;\ge\;\frac{\sigma_r^2\|c\|_2^2-t^2}{M^2}\;>\;0 .
\]
\end{lemma}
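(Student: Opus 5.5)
The plan is a Paley--Zygmund-type argument that combines a lower bound on the second moment of $f_c$ with the uniform upper bound on $|f_c|$.

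\emph{Step 1: lower-bound the second moment.} Write $f_c=\sum_{i\in I_r}c_iD_{ri}$. By the definition of the row Gram matrix,
\[
\|f_c\|_{L^2(\mu)}^2=c^\top G_rc\ge\lambda_{\min}(G_r)\|c\|_2^2=\sigma_r^2\|c\|_2^2 .
\]
Positive definiteness of $G_r$, and hence $\sigma_r>0$, comes from Functional no-cancellation as recorded in the paragraph on constants.

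\emph{Step 2: upper-bound $f_c$ pointwise.} Cauchy--Schwarz gives $|f_c(z)|\le\|c\|_2\,\|D_{r,I_r}(z)\|_2\le\|D_{r,:}(z)\|_2\le M$ for $\mu$-a.e.\ $z$, using $\|c\|_2\le1$ and the bounded-rows condition. This is why the hypothesis $\|c\|_2\le 1$ is needed.

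\emph{Step 3: split the second moment on the event $\{|f_c|>t\}$.} On its complement $f_c^2\le t^2$, and on the event itself $f_c^2\le M^2$. Since $\mu$ is a probability measure,
\[
\sigma_r^2\|c\|_2^2\le\mathbb E_\mu[f_c^2]\le t^2\,\mu(|f_c|\le t)+M^2\mu(|f_c|>t)\le t^2+M^2\mu(|f_c|>t).
\]
Rearranging gives the stated bound. Strict positivity follows because $t<\sigma_r\|c\|_2$, so the numerator is positive.

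\emph{Step 4: check well-definedness.} Note that $M>0$: if $M=0$, every $D_{ri}$ would vanish a.e., so $I_r$ would be empty, contradicting $I_r\ne\emptyset$. There is no real obstacle; the only care needed is the a.e.\ pointwise bound in Step 2 and the Gram identity in Step 1.
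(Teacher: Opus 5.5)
Your proof is correct and follows the paper's argument essentially step for step. Both use the Gram lower bound $\|f_c\|_{L^2(\mu)}^2\ge\sigma_r^2\|c\|_2^2$, the a.e.\ Cauchy--Schwarz bound $|f_c|\le M$, and a split of the second moment over $\{|f_c|\le t\}$ and $\{|f_c|>t\}$; your extra check that $M>0$ is a small well-definedness point the paper leaves implicit.
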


\begin{proof}
\emph{Step 1: pointwise upper bound.} By Cauchy--Schwarz, for $\mu$-a.e.\ $z$,
\[
    |f_c(z)|\le\|D_{r,:}(z)\|_2\|c\|_2\le M .
\]

\emph{Step 2: second-moment lower bound.} The Gram bound gives
\[
    \|f_c\|_{L^2(\mu)}^2=c^\top G_rc\ge\sigma_r^2\|c\|_2^2 .
\]

\emph{Step 3: split the integral.} Combining the two bounds,
\[
\begin{aligned}
    \sigma_r^2\|c\|_2^2 &\le \int|f_c|^2\,d\mu \\
    &\le t^2\,\mu(|f_c|\le t)+M^2\,\mu(|f_c|>t) \\
    &\le t^2+M^2\,\mu(|f_c|>t).
\end{aligned}
\]
Rearranging the resulting inequality proves the claimed lower bound.
\end{proof}

\begin{lemma}[Uniform pattern margin]
\label{lem:pattern-margin}
Let Assumptions~\ref{asm:structural-diversity} and~\ref{asm:functional-no-cancellation} hold with $d\ge2$, and let $\delta>0$ be such that $K_\delta\ne\emptyset$. Then $\rho^*(\delta)>0$.
\end{lemma}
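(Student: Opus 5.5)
The plan is a compactness argument. Since $K_\delta$ is a closed subset of the compact group $O(d)$ (the map $U\mapsto\min_{P\in SP(d)}\|U-P\|_F$ is continuous), it is compact. It is nonempty by hypothesis. Each $v_{rj}$ is continuous on $O(d)$. The map sending a vector in $\mathbb R^{pd}$ to its $(s^*{+}1)$-th largest entry is continuous; it is $1$-Lipschitz in the sup norm, being a min–max of coordinates. Hence $U\mapsto\rho^*(U)$ is continuous on $K_\delta$, and its minimum over $K_\delta$ is attained at some $U_0\in K_\delta$. It therefore suffices to show $\rho^*(U)>0$ for every single $U\in K_\delta$. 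We also check that $s^*<pd$, so that the $(s^*{+}1)$-th largest entry exists. If $s^*=pd$, every footprint would equal $[p]$, which contradicts Structural Diversity when $d\ge2$.

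The pointwise positivity reduces to the support-minimality analysis. By Lemma~\ref{lem:functional-union-support} and the identity $N(U)=|\{(r,j):v_{rj}(U)>0\}|$ already recorded before the lemma, $\rho^*(U)>0$ is equivalent to $N(U)\ge s^*+1$: at least $s^*+1$ of the values $v_{rj}(U)$ are strictly positive. Remark~\ref{rem:global-minimum} gives $N(U)\ge s^*$ for every $U\in O(d)$, with equality attained by signed permutations. Lemma~\ref{lem:functional-support-minimality}, under Structural Diversity and Functional no-cancellation, says every minimizer of $N$ over $O(d)$ is a signed permutation. Now take $U\in K_\delta$. Since $\delta>0$, $U$ is not a signed permutation, so it is not a minimizer. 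Hence $N(U)>s^*$, that is, $N(U)\ge s^*+1$ because $N$ is integer-valued. This gives $\rho^*(U)>0$, and in particular $\rho^*(\delta)=\rho^*(U_0)>0$.

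The only delicate step is the continuity of the order-statistic map, together with the interplay between a strict count and a continuous margin. The count $N$ itself is only lower semicontinuous, so the argument must minimize the continuous function $\rho^*(\cdot)$ rather than $N$. Positivity at the attained minimizer $U_0$ then comes from the discrete count. For the continuity, I would write the $k$-th largest entry as $\max_{|T|=k}\min_{t\in T}w_t$, a finite max–min of continuous functions. I expect no further obstacle, since the combinatorial content is entirely supplied by Lemma~\ref{lem:functional-support-minimality}.
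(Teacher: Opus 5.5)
Your proposal is correct and follows essentially the same route as the paper: you reduce to pointwise positivity via continuity of the order statistic on the compact set $K_\delta$, obtain $N(U)\ge s^*+1$ from Remark~\ref{rem:global-minimum} and Lemma~\ref{lem:functional-support-minimality}, convert this to $\rho^*(U)>0$ through the union-support identity, and check that $s^*+1\le pd$ under Structural Diversity with $d\ge2$. Your explicit max--min formula for the order statistic and your closedness argument for $K_\delta$ simply spell out details that the paper asserts without proof.
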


\begin{proof}
\emph{Step 1: reduce to pointwise positivity.} Each $v_{rj}$ is continuous on $O(d)$, so the order statistic $\rho^*(U)$ (the $(s^*{+}1)$-th largest of finitely many continuous functions) is continuous, and $K_\delta$ is a closed subset of the compact group $O(d)$, hence compact. A function that is positive and continuous on a compact set attains a strictly positive minimum over that whole set, so it suffices to show $\rho^*(U)>0$ for every $U\in K_\delta$.

\emph{Step 2: the population count exceeds the minimum.} Fix $U\in K_\delta$; it is not a signed permutation. By Remark~\ref{rem:global-minimum}, $\min_{U'}N(U')=s^*$ and every signed permutation attains it, and by Lemma~\ref{lem:functional-support-minimality} any $U$ with $N(U)=s^*$ is a signed permutation; both statements hold under Assumptions~\ref{asm:structural-diversity} and~\ref{asm:functional-no-cancellation} alone. Since the count is integer-valued,
\[
    N(U)\ge s^*+1 .
\]

\emph{Step 3: convert the count into a margin.} By the union-support identity,
\[
    N(U)=|\{(r,j):v_{rj}(U)>0\}| ,
\]
at least $s^*+1$ of the values $v_{rj}(U)$ are strictly positive, i.e.\ $\rho^*(U)>0$. Under Structural Diversity with $d\ge2$ not every footprint equals $[p]$, so $s^*+1\le pd$ and $\rho^*(U)$ is well defined. By Step~1, $\rho^*(\delta)>0$ as claimed.
\end{proof}

\begin{remark}[The constants on the witness]
\label{rem:constants-witness}
It is worth noting that the constants are not vacuous. For the two-dimensional witness of Proposition~\ref{prop:diversity-weaker} ($F_1=z_1+\sin z_2$, $F_2=z_2+\sin z_2$), a direct computation gives $\sigma\approx0.374$ and $\rho^*(0.4)\approx0.28$, so the recovery condition $\tau+\varepsilon<\sigma\rho^*(\delta)$ of Theorem~\ref{thm:approx-recovery} is satisfiable at nontrivial noise levels on an example that meets every assumption.
\end{remark}

\robusttheorem*

\begin{proof}
\emph{Step 1: change of variables.} Throughout write $U=(RQ)^\top$. By Lemma~\ref{lem:dependency-bridge},
\[
    \frac{\partial x}{\partial\tilde z}=D(z)U .
\]
The map $R\mapsto U$ is a bijection of $O(d)$, and
\[
    \min_P\|RQ-P\|_F=\min_P\|U-P\|_F
\]
because transposition is an isometry and $SP(d)$ is closed under transposition. Moreover, $\widehat N_\tau$ is integer-valued with finitely many attainable values, so its minimum over $O(d)$ is attained.

\emph{Step 2: uniform bound on the perturbation.} For any fixed $U$ and $\mu$-a.e.\ $z$, Cauchy--Schwarz with unit columns gives
\[
    |(E(z)U)_{rj}|\le\|E_{r,:}(z)\|_2\le\varepsilon \qquad\text{for all }(r,j).
\]
The exceptional null set may depend on $U$, and each step below fixes one $U$, so the $U$-dependence of the null set costs nothing.

\emph{Step 3: value at signed permutations.} Choose $R_0=P_0Q^\top$ for any $P_0\in SP(d)$, so $U_0=(R_0Q)^\top\in SP(d)$. If entry $(r,j)$ is counted by $\widehat N_\tau(R_0)$, then
\[
    |(\widehat D(z)U_0)_{rj}|>\tau
\]
on a set of positive measure. Since $|(E(z)U_0)_{rj}|\le\varepsilon$ a.e., on that set also
\[
    |(D(z)U_0)_{rj}|>\tau-\varepsilon>0 ,
\]
hence $(r,j)$ is active for $D(z)U_0$. Therefore
\[
    \widehat N_\tau(R_0)\le N(U_0)=s^* ,
\]
using Remark~\ref{rem:global-minimum} for the last equality.

\emph{Step 4: value far from signed permutations.} Let $R$ satisfy $\min_P\|RQ-P\|_F\ge\delta$, so $U=(RQ)^\top\in K_\delta$. By the definition of $\rho^*(\delta)$, and Lemma~\ref{lem:pattern-margin} for its positivity, at least $s^*+1$ entries $(r,j)$ have
\[
    v_{rj}(U)\ge\rho^*(\delta) .
\]
Fix such an entry and let $c=U_{I_r,j}$. The inactive terms $\sum_{i\notin I_r}U_{ij}D_{ri}(z)$ vanish $\mu$-a.e., so for $\mu$-a.e.\ $z$,
\[
    (D(z)U)_{rj}=f_c(z) ,
\]
with $\|c\|_2=v_{rj}(U)\ge\rho^*(\delta)$ and $\|c\|_2\le1$ as a sub-vector of a unit column. Applying Lemma~\ref{lem:quantitative-activity} with $t=\tau+\varepsilon<\sigma\rho^*(\delta)\le\sigma_r\|c\|_2$ gives
\[
    \mu(|(D(z)U)_{rj}|>\tau+\varepsilon)>0 .
\]
On that set,
\[
    |(\widehat D(z)U)_{rj}|\ge|(D(z)U)_{rj}|-\varepsilon>\tau ,
\]
so $(r,j)$ is counted by $\widehat N_\tau(R)$. Hence $\widehat N_\tau(R)\ge s^*+1$ holds for every rotation $R$ in this set.

\emph{Step 5: combine.} For every such $R$,
\[
    \widehat N_\tau(R)\ge s^*+1>s^*\ge\widehat N_\tau(R_0) ,
\]
so no such $R$ can minimize $\widehat N_\tau$.
\end{proof}

\FloatBarrier
\section{Additional discussions}
\label{app:related}
Two literatures meet for our goal. Methods that recover individual latents anchor them to observations through a decoder or a likelihood, through auxiliary variables or interventions, or through an objective built around the true positive-pair conditional. In contrast, methods that dispense with all of these identify the latent state at most up to an orthogonal transformation, leaving the individual latents mixed. The three groups below trace each side and the premise that connects them.

\paragraph{Nonlinear ICA and self-supervised identifiability.} Identifiability has been studied most extensively in nonlinear ICA, where latent sources cannot be recovered from i.i.d.\ observations alone. One family of results restores identifiability through auxiliary structure: nonstationary segments \citep{hyvarinen2016unsupervised}, temporal dependence \citep{hyvarinen2017pcl,klindt2021slowvae}, conditioning on a general auxiliary variable \citep{hyvarinen2019nonlinear}, and conditional priors in deep generative models \citep{khemakhem2020ivae}. A second family analyzes self-supervised objectives directly. Contrastive learning provably inverts the data-generating process when the positive-pair conditional matches the contrastive similarity \citep{zimmermann2021contrastive}, and augmentation-based learning block-identifies the invariant content variables \citep{von2021self}. These contrastive results separate components only when the assumed positive-pair conditional breaks rotational symmetry. With Gaussian latents the distribution carries no component-wise signal, and separation must come from outside both the conditional and the distribution. A third family keeps i.i.d.\ Gaussian sources and instead restricts the mixing to structured function classes such as conformal maps \citep{buchholz2022function} or volume-preserving, orthogonal-column mixings \citep{gresele2021independent}. Thus, in every case identifiability rests on structure from outside the representation objective, whether an auxiliary variable, a latent conditional built into the likelihood or the loss, or a restriction on the mixing function. Prediction in representation space alone, the setting of the present paper, supplies none of the three, and that is the gap the present theory sets out to fill.

\paragraph{Identifiability from sparse structure.} A separate line obtains identifiability from sparse latent-to-observation structure. Structural sparsity of the mixing Jacobian yields permutation identifiability in nonlinear ICA \citep{zheng2022nonlinearica}, later extended to undercomplete, partially sparse, and grouped settings \citep{zheng2023generalizing}; in the linear Gaussian case, sparsity replaces non-Gaussianity \citep{zhang2009ica,palsson2014sparse,Abrahamsen2018sparse} under the Structural Variability condition \citep{ng2023sparseica}, rooted in support-equivalence characterizations \citep{ghassami2020characterizing}. Sparse mechanisms \citep{lachapelle2021disentanglement}, sparse decoders \citep{moran2022sparsevae}, and additive decoders \citep{lachapelle2024additive} carry the principle to latent dynamics and deep generative models. Each of these anchors recovery through a decoder or an observation likelihood. DSReg needs neither: Structural Diversity plus a faithfulness/no-cancellation condition yields signed-permutation identifiability with no decoder and no likelihood, and the support-pattern requirement itself is strictly weaker than the structural conditions above (Proposition~\ref{prop:diversity-weaker}). Besides, the disentanglement literature pursues individual latent recovery with regularized generative models \citep{higgins2017beta,kim2018disentangling,chen2018isolating,peebles2020hessian} and quantitative evaluation protocols \citep{eastwood2018framework}. The impossibility result of \citet{locatello2019challenging} shows that some signal beyond the marginal distribution is necessary; here that signal is the dependency structure itself rather than any further generative assumption about how the observations were produced.

\paragraph{JEPA world models and the linear premise.} DSReg takes its linear-identifiability premise from the JEPA line of world-model learning. I-JEPA established prediction in representation space as a scalable alternative to pixel reconstruction \citep{assran2023ijepa}, LeJEPA grounded it in an isotropic-Gaussian embedding objective, SIGReg, that is provably optimal for downstream prediction \citep{balestriero2025lejepa}, and JEPA-style world models support planning and control directly in latent space \citep{dinowm2024,leworldmodel2026}. \citet{klindt2026worldmodel} proved that in a Gaussian latent world this recipe identifies the latent state up to an orthogonal transformation. Within that world, the orthogonal class is where identifiability without reconstruction previously stopped. The state is recovered while the individual latents stay mixed by an unknown rotation. DSReg starts from the theorem and resolves the rotation, with dependency sparsity selecting the individual latents and with no reconstruction or likelihood entering at any point of the procedure. 

\paragraph{Sparse codes in JEPAs.} A parallel JEPA line pursues sparsity in the code itself. Rectified LpJEPA replaces the isotropic Gaussian target with a rectified generalized Gaussian, so that representations become sparse and non-negative, with most coordinates zero for a given sample \citep{kuang2026lpjepa}. The two designs answer the same rotational symmetry in complementary ways, and they sparsify different objects. Changing the target law breaks the symmetry distributionally and sparsifies the code; DSReg keeps the Gaussian target and breaks the symmetry structurally, sparsifying the support of the dependency Jacobian $\partial x/\partial\tilde z$, and a dense code can carry a very sparse dependency structure. Sparsity enters DSReg only through what it selects for, never as an assumption about the world: the guarantee asks only that the true footprints be pairwise distinct, which allows them to be almost fully dense in the observed variables while remaining pairwise distinguishable.

\paragraph{Independent mechanism analysis.} Independent mechanism analysis (IMA) \citep{gresele2021independent} also constrains a latent representation through the Jacobian of the observation map. IMA asks the columns of the observation Jacobian $\partial x/\partial z$ to be orthogonal, so that its Gram matrix is diagonal, whereas DSReg asks the columns to have distinct functional supports (Assumption~\ref{asm:structural-diversity}) that do not cancel (Assumption~\ref{asm:functional-no-cancellation}). Neither condition implies the other, since orthogonal columns may share every row and columns with distinct supports need not be orthogonal. Under the premise $h=Qz$, IMA also suggests a post hoc baseline, jointly diagonalizing the estimated Gram matrices $B_a^\top B_a$ across anchors, which pins down the rotation when the diagonal profiles satisfy the uniqueness condition of joint diagonalization, namely that no two of them coincide across the anchors. The distinction between the two approaches is therefore support structure against orthogonality, under a shared linear identifiability premise and a shared estimate of local Jacobians that trains no decoder. The same premise also admits regularizers built on IMA rather than on supports. The IMA contrast of \citet{gresele2021independent} measures how far the Jacobian columns are from orthogonal and is differentiable, so it could take the place of Equation~\eqref{eq:idjepa-objective} on the same local Jacobians. A term penalizing both non-orthogonality and shared support would let the two conditions cover for each other where either holds only approximately. Which of these recovers individual latents, and under what condition on the observation map, is an open question, which we leave to future work, along with the joint diagonalization baseline sketched earlier in this paragraph.

\paragraph{Estimation in practice.} The identifiability statements use exact a.e. dependency supports, with Theorem~\ref{thm:approx-recovery} covering their thresholded perturbation. Equation~\eqref{eq:idjepa-objective} replaces the exact support count by local Jacobian estimates and an anchor-averaged $\ell_1$ relaxation; the surrogate and noise experiments of Appendix~\ref{sec:app-synthetic} (Figures~\ref{fig:optimizer-ablation} and~\ref{fig:structdiv-noise-robustness}) evaluate how faithfully this practical objective tracks its population target.

\FloatBarrier
\section{Additional experiments}
\label{app:experiments}
This appendix expands the empirical part of the paper. Section~\ref{sec:app-metric-defs} fixes the metrics and the protocol shared by every benchmark, and Section~\ref{sec:app-scaling} then opens the experiments with the study readers of Section~\ref{sec:practical-estimator} most often ask for, scaling DSReg far beyond the paper's benchmarks, to $d=8192$ on a single GPU with samples to $10^6$ and observed dimensions to $2^{17}$. Section~\ref{sec:app-synthetic} verifies the regime map and stress-tests the criterion from every side, Section~\ref{sec:app-sparse-use-details} specifies the sparse-use environments and probes, the next three sections cover the learned visual encoders and the two external renderers, and Section~\ref{sec:app-baselines} closes by asking whether any latent-only rotation, or any baseline reading the same signal, could have matched the dependency criterion from the same inputs.

\subsection{Metrics and protocol}
\label{sec:app-metric-defs}

Every benchmark in the paper shares one measurement vocabulary and one experimental protocol. This section fixes both once, so the sections that follow only state what changes.

\paragraph{Research questions.} Every experiment in the paper asks one of three questions: is the latent state present in the representation at all, has it been resolved into individual variables, and can we act through those variables for downstream tasks? One metric family answers each, defined below. Keeping the three separate lets the experiments show that DSReg changes the answers to the second and third while leaving the first answer exactly where LeJEPA had already put it.

\paragraph{Recovery metrics.} Let $Z,\widehat Z\in\mathbb R^{n\times d}$ collect the ground-truth and the estimated latents over $n$ evaluation samples, write $Z_{:,i}$ for a column, and let $\rho(u,v)$ denote the Pearson correlation. The first question is answered by the dense readout
\begin{equation}
\label{eq:metric-dense-r2}
    R^2(h\to z) \;=\; 1-\frac{\min_{W,b}\big\|Z-(\widehat ZW+\mathbf 1b^\top)\big\|_F^2}{\big\|Z-\mathbf 1\bar z^\top\big\|_F^2},
\end{equation}
the coefficient of determination of the best linear map from the whole estimated vector onto all ground-truth latents. Any invertible $A$ acting as $\widehat Z\mapsto\widehat ZA$ leaves it unchanged. A high value therefore settles the first question and is silent on the second.

The second question is answered by the latent mean correlation coefficient, which scores the best one-to-one matching of estimated to true coordinates,
\begin{equation}
\label{eq:metric-mcc}
    \mathrm{MCC} \;=\; \max_{\pi\in S_d}\;\frac1d\sum_{i=1}^{d}\Big|\rho\big(Z_{:,i},\widehat Z_{:,\pi(i)}\big)\Big|,
\end{equation}
maximized over permutations by the Hungarian algorithm for $d\le128$ and by a greedy assignment above it. Besides being sensitive to mixing, which~\eqref{eq:metric-dense-r2} is not, this quantity reaches one precisely when the representation recovers the individual world latents in the sense of Definition~\ref{def:identifiability}, which is why we report it as the main individual-latent recovery score on every benchmark below.

\paragraph{Disentanglement scores on the external renderers.} The renderer benchmarks also report the standard scores, so that the comparison with $\beta$-VAE and $\beta$-TCVAE is made in their own vocabulary. Write $v_1,\ldots,v_K$ for the ground-truth factors and $c_1,\ldots,c_d$ for the estimated coordinates. DCI \citep{eastwood2018framework} fits gradient-boosted trees predicting each factor from all coordinates and collects the absolute feature importances into $R\in\mathbb R^{d\times K}$. It then measures how far each coordinate concentrates on a single factor,
\begin{equation}
\label{eq:metric-dci}
    \mathrm{DCI}=\sum_{i=1}^{d}\rho_i\Big(1-H_K\big(P_{i,:}\big)\Big),\qquad P_{ij}=\frac{R_{ij}}{\sum_{k}R_{ik}},\qquad \rho_i=\frac{\sum_{j}R_{ij}}{\sum_{i',j}R_{i'j}},
\end{equation}
with $H_K$ the entropy in base $K$. The informativeness column of Table~\ref{tab:shapes3d-gaussian} is the held-out $R^2$ of those same regressions. MIG \citep{chen2018isolating} discretizes coordinates and factors into twenty uniform-width bins, then averages the normalized gap between the two most informative coordinates for each factor. SAP \citep{kumar2017variational} replaces mutual information by the squared correlation of the univariate fit,
\begin{equation}
\label{eq:metric-mig-sap}
    \mathrm{MIG}=\frac1K\sum_{k=1}^{K}\frac{I(v_k;c_{(1)})-I(v_k;c_{(2)})}{H(v_k)},\qquad \mathrm{SAP}=\frac1K\sum_{k=1}^{K}\Big(\rho^2\big(v_k,c_{(1)}\big)-\rho^2\big(v_k,c_{(2)}\big)\Big),
\end{equation}
where $c_{(1)}$ and $c_{(2)}$ are the two coordinates ranked highest for $v_k$ by the score each metric uses.

\paragraph{Sparse-use probes.} The third question has no closed form. What matters is whether a downstream module restricted to a few coordinates can do its job, so each probe is a task with its own success criterion. The four are visual editing, sparse model predictive control, five-step rollout prediction, and the detection of physically implausible next states. Section~\ref{sec:app-sparse-use-details} gives the environments and the exact construction of each, and the readout probe that recurs across benchmarks is described next.

\paragraph{Design principle.} One principle governs every experiment. Ground-truth latents define benchmarks and evaluation scores, never the method. The dependency rotation is always fit from estimated latents and observations, or a fixed observation preprocessing chosen before fitting; labels enter only through benchmark construction, small-label evaluation probes, and metrics. The benchmarks differ only in where the representation comes from: exact orbits $h=Qz$ isolate the rotation step under known structure, encoders trained from pixels test whether the gain survives learning, and external renderers remove the observation map from our control. Each step takes one more piece of our own design out of the loop, and the gain survives each removal, as the sections below elaborate.

\paragraph{Few-shot single-latent readout.} This recurring probe appears on several benchmarks and deserves a procedural description. For each ground-truth factor, a handful of labeled examples select the single estimated coordinate that correlates best with the factor and fit a scalar affine map from that coordinate alone; the probe is forbidden to mix coordinates, so its score is the price of naming the variables. On a representation whose coordinates are individual factors, a few labels suffice and the readout approaches the dense ceiling; on a mixture no scalar map works at any label budget, which is why this probe separates the two representations more sharply than any dense score.

\paragraph{Optimization.} The dependency step regresses local Jacobians over nearest neighbors (ridge $10^{-3}$; $k=32$ on the small estimated-Jacobian benchmarks, raised where the latent dimension requires it, since the neighborhood must exceed $d$) and optimizes the rotation by Adam on the matrix-exponential parameterization of $O(N)$; at larger dimension the exponential gives way to the Cayley chart of Appendix~\ref{sec:app-scaling}. Anchor counts, step budgets, and restarts grow with the dimension of the benchmark. These choices carry no load. Figure~\ref{fig:optimizer-ablation} varies the sparsity surrogate and the anchor budget directly and finds recovery insensitive to both, so we report defaults rather than tuned values.

\subsection{Scaling the dependency criterion}
\label{sec:app-scaling}

Section~\ref{sec:practical-estimator} claims that nothing in DSReg is bound to the small worlds of the benchmarks. This section backs the claim: the factorized algebra first, then one axis at a time.

\paragraph{The Jacobians are never materialized.} The DSReg criterion of Section~\ref{sec:practical-estimator} averages the rotated local Jacobians over $m$ anchors, and written as Equation~\eqref{eq:idjepa-objective} it appears to require $m$ matrices of size $p\times d$, which is the binding cost once the observed dimension $p$ or the latent dimension $d$ grows. That reading is pessimistic. The ridge estimate at anchor $a$ is built from the $k$ neighbors of $h_a$, and if $\Delta H_a\in\mathbb R^{k\times d}$ and $\Delta X_a\in\mathbb R^{k\times p}$ stack the neighbor differences in representation and observation space, then
\begin{equation}
\begin{aligned}
B_a &\;=\; \Delta X_a^\top \Delta H_a\left(\Delta H_a^\top \Delta H_a + \lambda I\right)^{-1} \;=\; \Delta X_a^\top P_a , \\
P_a &\;=\; \Delta H_a\left(\Delta H_a^\top \Delta H_a + \lambda I\right)^{-1} \;\in\; \mathbb R^{k\times d} .
\end{aligned}
\label{eq:dsreg-factorization}
\end{equation}
Two consequences follow immediately. First, $B_a$ factors through the $k$ dimensional neighborhood, so $\operatorname{rank}(B_a)\le\min(k,d)$, and since the estimator needs $k>d$ the rank is at most $d$: whenever $p$ exceeds the latent dimension, the $p\times d$ array encodes an object with far fewer degrees of freedom than its size suggests. Second, $B_a$ is determined by the raw observations, which are already in memory as the dataset, together with the neighbor indices and the factor $P_a$. Storing $\{(a,\mathcal N_a,P_a)\}_{a=1}^m$, which is Algorithm~\ref{alg:dsreg-scale}(a), costs $O(mkd)$ and replaces the $O(mpd)$ of the materialized form, a reduction by $p/k$, which is close to two orders of magnitude at the largest observed dimension we run.

\paragraph{The exact objective streams in two nested blocks.} Nothing about the factorization is an approximation, and the criterion can be evaluated from it exactly, as Algorithm~\ref{alg:dsreg-scale}(c) does. The only requirement is to associate the products in the right order: forming $W_a=P_aR^\top$ first, at cost $O(kd^2)$, and then $\Delta X_a^\top W_a$, at cost $O(kpd)$, avoids ever building $B_a$, whereas the natural left-to-right order pays $O(pd^2)$ per anchor and allocates the $p\times d$ result. The subgradient comes out of the same quantities. Writing $M_a=B_aR^\top$ for the rotated Jacobian,
\begin{equation}
\nabla_R\,\mathcal L_{\mathrm{DSReg}}(R) \;=\; \frac{1}{m}\sum_{a=1}^{m}\operatorname{sign}(M_a)^\top B_a \;=\; \frac{1}{m}\sum_{a=1}^{m}\Big(\operatorname{sign}(M_a)^\top \Delta X_a^\top\Big) P_a ,
\label{eq:dsreg-factored-grad}
\end{equation}
where the parenthesized factor is $d\times k$ and the product with $P_a$ is $d\times d$, so the gradient is assembled at the same cost as the value and again never touches a $p\times d$ array. In the implementation the anchors are traversed in chunks and, inside each chunk, the observed coordinates in blocks, so peak memory is set by the two block sizes rather than by $m$ or $p$, and both can be tuned freely to the device at hand without changing the value that the pass computes in exact arithmetic.

\paragraph{An unbiased estimator makes the cost independent of $p$ and $m$.} Streaming keeps memory bounded but still reads every anchor and every observed coordinate at each step. Because the criterion is an average, it also admits an unbiased stochastic estimate, which is Algorithm~\ref{alg:dsreg-scale}(d). Normalizing Equation~\eqref{eq:idjepa-objective} by the number of entries, the objective is the expectation
\begin{equation}
\mathcal L_{\mathrm{DSReg}}(R) \;\propto\; \mathbb E_{(a,r)}\!\left[\tfrac{1}{d}\big\|R\,b_{a,r}\big\|_1\right],
\qquad b_{a,r} \;=\; P_a^\top \Delta X_a[:,r] \;\in\; \mathbb R^{d},
\label{eq:dsreg-sampled}
\end{equation}
over a uniformly drawn anchor-coordinate pair $(a,r)$, since $b_{a,r}$ is exactly the $r$-th row of $B_a$ and $(B_aR^\top)_{r,:}=R\,b_{a,r}$. Averaging Equation~\eqref{eq:dsreg-sampled} over $s$ independent draws is therefore unbiased for the full criterion, and its subgradient $\operatorname{sign}(Rb_{a,r})\,b_{a,r}^\top$ is rank one in $R$. A step touches $O(sd)$ memory for the sampled rows and reads one anchor's factor at a time, so per-step cost no longer depends on $p$ or $m$ at all. Drawing the pairs grouped by anchor matters in practice: a flat gather of the factor tensor would materialize an $s\times k\times d$ copy, which at large $k$ and $d$ is tens of gigabytes per step.

\begin{algorithm}[t]
\footnotesize
\caption{\textbf{DSReg at scale.} All three evaluation modes consume the same factors from part~(a) and return the same objective, differing only in what they hold in memory. Annotations on the right give the shape of the array formed on that line, and its cost where that is the point. Here $n$ is the sample count, $p$ the observed dimension, $d$ the latent dimension, $m$ the anchor count and $k>d$ the neighborhood size, and all three modes return the value together with its subgradient.}
\label{alg:dsreg-scale}
\vspace{2pt}
\hrule
\vspace{3pt}
{\raggedright\textbf{(a) Neighborhood factors.} Run once, shared by (b), (c) and (d).\par}
\begin{algorithmic}[1]
\REQUIRE $h\in\mathbb R^{n\times d}$, $x\in\mathbb R^{n\times p}$, anchors $m$, neighbors $k>d$, ridge $\lambda$
\STATE sample anchor indices $\mathcal A\subset[n]$ with $|\mathcal A|=m$
\FOR{$a\in\mathcal A$}
\STATE $\mathcal N_a\leftarrow$ the $k$ nearest neighbors of $h_a$ in $h$; \quad $\Delta H_a\leftarrow h[\mathcal N_a]-h_a$ \hfill shape $k\times d$
\STATE $P_a\leftarrow \Delta H_a(\Delta H_a^\top\Delta H_a+\lambda I)^{-1}$ \hfill shape $k\times d$
\ENDFOR
\RETURN $\{(a,\mathcal N_a,P_a)\}_{a\in\mathcal A}$ \hfill storage $O(mkd)$, not $O(mpd)$
\end{algorithmic}
\vspace{3pt}
\hrule
\vspace{3pt}
{\raggedright\textbf{(b) Materialized.} Fastest while it fits; forms the $p\times d$ array explicitly.\par}
\begin{algorithmic}[1]
\REQUIRE factors, $x$, rotation $R$; \quad $L\leftarrow0$, $G\leftarrow0_{d\times d}$
\FOR{each anchor $a$}
\STATE $\Delta X_a\leftarrow x[\mathcal N_a]-x_a$ \hfill shape $k\times p$
\STATE $B_a\leftarrow \Delta X_a^\top P_a$ \hfill shape $p\times d$; (c) and (d) avoid this
\STATE $M_a\leftarrow B_aR^\top$; \quad $L\mathrel{+}=\textstyle\sum_{r,j}|(M_a)_{rj}|$; \quad $G\mathrel{+}=\operatorname{sign}(M_a)^\top B_a$ \hfill cost $O(pd^2)$
\ENDFOR
\RETURN $L/(mpd)$, \; $G/(mpd)$
\end{algorithmic}
\vspace{3pt}
\hrule
\vspace{3pt}
{\raggedright\textbf{(c) Factored.} Same value as (b), about half the memory, more time.\par}
\begin{algorithmic}[1]
\REQUIRE factors, $x$, rotation $R$, anchor chunk size, feature block size; \quad $L\leftarrow0$, $G\leftarrow0_{d\times d}$
\FOR{each chunk of anchors}
\STATE $W_a\leftarrow P_aR^\top$ for $a$ in the chunk \hfill shape $k\times d$, cost $O(kd^2)$
\FOR{each block $F\subset[p]$ of observed coordinates}
\STATE $\Delta X_a^{F}\leftarrow x[\mathcal N_a,F]-x_a[F]$ \hfill shape $k\times|F|$, recomputed each pass
\STATE $M\leftarrow (\Delta X_a^{F})^\top W_a$ \hfill shape $|F|\times d$, cost $O(k|F|d)$
\STATE $L\mathrel{+}=\sum|M|$; \quad $G\mathrel{+}=\big(\operatorname{sign}(M)^\top(\Delta X_a^{F})^\top\big)P_a$ \hfill inner factor $d\times k$
\ENDFOR
\ENDFOR
\RETURN $L/(mpd)$, \; $G/(mpd)$
\end{algorithmic}
\vspace{3pt}
\hrule
\vspace{3pt}
{\raggedright\textbf{(d) Row-sampled.} Unbiased; per-step cost independent of $p$ and $m$.\par}
\begin{algorithmic}[1]
\REQUIRE factors, $x$, rotation $R$, rows per step $s$
\STATE draw $(a_1,r_1),\ldots,(a_s,r_s)$ independently and uniformly from $[m]\times[p]$
\STATE group the draws by anchor \hfill a flat gather would allocate $s\times k\times d$
\FOR{each distinct anchor $a$ in the draw, with its sampled columns $F_a$}
\STATE $\Delta X_a^{F_a}\leftarrow x[\mathcal N_a,F_a]-x_a[F_a]$; \quad $b_{a,r}\leftarrow P_a^\top\Delta X_a[:,r]$ for $r\in F_a$ \hfill rows of $B_a$, shape $d$
\ENDFOR
\RETURN $\widehat L=\frac{1}{sd}\sum_{t}\|Rb_{a_t,r_t}\|_1$, \; $\widehat G=\frac{1}{s}\sum_{t}\operatorname{sign}(Rb_{a_t,r_t})\,b_{a_t,r_t}^\top$ \hfill $\mathbb E[\widehat L]\propto\mathcal L_{\mathrm{DSReg}}(R)$
\end{algorithmic}
\end{algorithm}

\paragraph{At large $d$ the rotation chart matters more than the criterion.} Once the criterion is cheap, the remaining cost is the orthogonality constraint. Optimizing over $O(d)$ with a gradient method requires writing $R$ in terms of unconstrained parameters. We use the matrix exponential $R=\exp(A-A^\top)R_0$ throughout the paper, which is the standard choice and is well behaved up to a few thousand dimensions. Its autograd workspace is a $2d\times2d$ block exponential, and beyond $d\approx16000$ that workspace, rather than the Jacobians, becomes the binding memory cost. The Cayley transform $R=(I+S)^{-1}(I-S)R_0$ with $S=A-A^\top$ is an alternative chart on the same manifold, orthogonal for every $A$ because $S$ is skew-symmetric, and it needs one linear solve in place of the exponential. It is what we use at the largest dimension; low-rank orthogonal updates \citep{kiani2022projunn} are the natural next step for anyone who needs to go beyond the dimensions we report.

\paragraph{Which mode to use.} Parts~(b),~(c) and~(d) of Algorithm~\ref{alg:dsreg-scale} compute the same objective from the same factors and differ only in what they hold in memory, so the choice among them is purely one of cost. Materialize while the Jacobians still fit on the device, since below $d\approx512$ the dense product is also the fastest of the three modes (Table~\ref{tab:app-scaling-dim}). Use the factored form when they stop fitting but the objective must stay exact. It is the memory-frugal mode rather than the fast one: it roughly halves peak memory at $d=2048$ and pays for that in step time, because the neighbor differences are recomputed on every pass instead of being held as a $p\times d$ array. Use row sampling when $p$ or the anchor set is large, since it is the only mode whose per-step cost is flat in both and the only one that still runs once the materialized route no longer fits the card, and accept that the objective is then estimated rather than computed exactly at every step. One requirement survives every mode and is not negotiable: the neighborhood must satisfy $k>d$, since $k\le d$ makes $\Delta H_a$ rank deficient and the recovered rotation degenerates, with $k=1.5d$ the default we recommend and $k=1.25d$ the smallest ratio that still worked at every scale we tested (Figure~\ref{fig:dsreg-scaling}b), a requirement no mode can rescue.

\paragraph{Setup.}\looseness=-1 The testbed is a banded synthetic world with known footprints: each observed coordinate depends on two adjacent latents through distinct nonlinear components, footprints are length-four circular windows at distinct offsets with $p=2d$, and the representation is a random orthogonal mixing of the true latents, so recovery is scored by MCC at any scale. Unless stated otherwise the full estimated procedure runs with $k=1.5d$ neighbors, $256$ anchors, and three seeds on one $48$\,GB GPU. Table~\ref{tab:app-scaling-dim} scales the latent dimension, and Figure~\ref{fig:dsreg-scaling} summarizes recovery and cost across every axis of the study.

\begin{table}[t]
\centering
\small \caption{\textbf{Scaling across latent dimension.} The full estimated procedure, three seeds each. At $d\geq 1024$ the sampled columns use a compute-matched budget whose wall-clock stays at or below the materialized one; at smaller $d$ the standard budget already suffices, and sampling holds no advantage there since its per-step cost is nearly $d$-independent while the materialized cost is what grows. All three modes optimize the same objective, and the materialized and factored columns agree on MCC to within $0.002$ everywhere, which is a live check that the factorization is exact. The factored mode trades time for memory, roughly halving peak memory at $d=2048$ at about $2.8\times$ the step time, since it recomputes the neighbor differences on each pass. Axes without a panel of their own, each measured on its own sweep with its own baseline rather than on the rows above: raising the sample count to $10^6$ leaves recovery and step time flat to within seed noise ($0.826\to0.833$ at $d=1024$, $49$\,ms throughout; $0.822$ at $d=2048$), and growing the observed dimension to $2^{17}$ holds the sampled step at $47$\,ms while MCC improves from $0.806$ to $0.884$ across the full doubling sweep.}
\label{tab:app-scaling-dim}
\setlength{\tabcolsep}{4pt}
\begin{tabular}{lccccccccccc}
\toprule
& & \multicolumn{3}{c}{MCC} & \multicolumn{3}{c}{step time (ms)} & \multicolumn{3}{c}{peak memory (GB)} \\
\cmidrule(lr){3-5}\cmidrule(lr){6-8}\cmidrule(lr){9-11}
$d$ & $n$ & mat. & fact. & samp. & mat. & fact. & samp. & mat. & fact. & samp. \\
\midrule
128  & $20$k   & $0.853$ & $0.853$ & $0.850$ & $7.5$   & $4.9$    & $42.6$ & $0.14$  & $0.11$  & $0.11$ \\
256  & $20$k   & $0.843$ & $0.844$ & $0.840$ & $7.4$   & $6.8$    & $43.9$ & $0.47$  & $0.31$  & $0.30$ \\
512  & $24.6$k & $0.833$ & $0.831$ & $0.829$ & $13.1$  & $34.0$   & $43.8$ & $1.76$  & $1.07$  & $1.02$ \\
1024 & $49.2$k & $0.827$ & $0.828$ & $0.823$ & $101.6$ & $231.6$  & $53.5$ & $6.97$  & $4.20$  & $3.26$ \\
2048 & $98.3$k & $0.821$ & $0.820$ & $0.809$ & $699.8$ & $1933.2$ & $92.5$ & $27.76$ & $13.73$ & $11.42$ \\
\bottomrule
\end{tabular}
\end{table}

\begin{figure}[t]
    \centering
    \includegraphics[width=\linewidth]{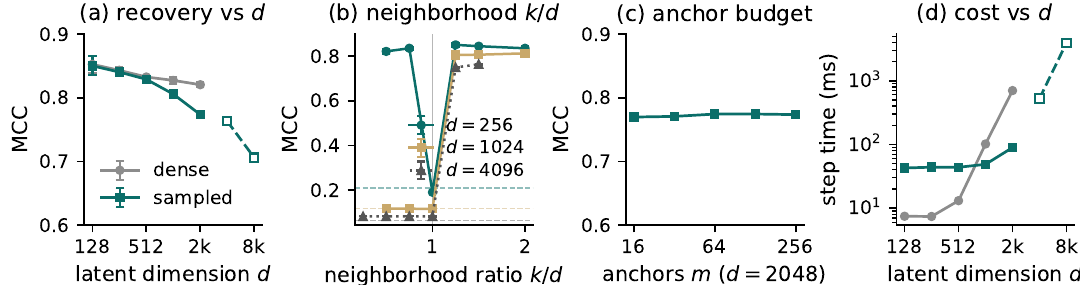}
    \caption{\textbf{Scaling the full estimated procedure.} (a)~Recovery declines gently with $d$; open markers are the single-GPU frontier, where anchor count shrinks to fit memory. (b)~Recovery against the neighborhood ratio $k/d$ at three scales; dashed lines mark the analytic score of an unrotated mixing, $\sqrt{2\ln d/d}$. (c)~Sixteen anchors already match $256$ at $d=2048$. (d)~The sampled step cost is nearly flat in $d$ while the materialized cost grows two orders of magnitude, with the curves crossing between $d=512$ and $d=1024$.}
    \label{fig:dsreg-scaling}
\end{figure}

\paragraph{One GPU carries the full method to $d=8192$.}\looseness=-1 Recovery declines gently with $d$ while the sampled step cost stays nearly flat against a steep rise for the materialized evaluation, at consistently lower memory (Table~\ref{tab:app-scaling-dim}). Past the point where the materialized route stops fitting on the card, row sampling is the only remaining option, and it carries recovery to $d=8192$ within the card's memory.

\paragraph{No other axis is the bottleneck.}\looseness=-1 The remaining cost axes are flat in the way the algebra predicts (Figure~\ref{fig:dsreg-scaling}). Raising the sample count to $10^6$ leaves recovery and per-step cost unchanged, and doubling the observed dimension up to $2^{17}$ leaves the sampled step cost flat while recovery improves, since each latent leaves its footprint on proportionally more observed coordinates. Moreover, the criterion is anchor-efficient: a small fraction of the default anchor budget already recovers as well as the full budget, which is what makes the frontier's reduced anchor counts benign, and a protocol-matched control confirms the leaner protocol costs almost nothing. Meanwhile, the decline with $d$ is independent of the footprint family, with nested and random-overlap constructions tracking the banded family at every dimension tested, with no family-specific tuning.

\paragraph{The neighborhood must span the latent dimension.}\looseness=-1 Scaling the estimator exposes one requirement of its own, with a margin (Figure~\ref{fig:dsreg-scaling}b). Neighborhoods with $k<d$ make $B_a$ rank-deficient by construction; at small $d$ this only dents recovery, but the surviving anisotropy washes out with dimension, and at larger $d$ the criterion loses the rotation signal entirely, with the optimizer returning its starting point and the measured score matching an unrotated random orthogonal mixing. Meanwhile, square neighborhoods $k=d$ fail at every scale for a different reason, since the neighborhood Gram matrix then sits at the hard edge of its spectrum and the ridge solve amplifies noise. Recovery is restored from $k=1.25d$ at every scale tested, which motivates the $k=1.5d$ default and its margin. The practical frontier on one $48$\,GB card is therefore $d=8192$, set by the memory a well-posed local regression needs rather than by the optimizer. The next rung has a measured price: at $d=16384$ the well-posed construction needs an $80$\,GB device (with a Cayley parameterization replacing the matrix exponential, whose autograd workspace dominates at this scale), and there the binding constraint shifts from memory to the compute budget available to the rotation search.

\paragraph{Shortcuts that discard structure fail.}\looseness=-1 Two natural approximations fail, for reasons the theory predicts. Orthogonally invariant sketches of the Jacobian, such as Hutchinson-style estimates of $\lVert B_a R^\top\rVert_F$, are blind to $R$ by construction, so nothing remains to optimize. Dense random projections of the observed coordinates destroy the footprints the criterion reads, since each projected observed coordinate mixes rows across different footprints; only locality-preserving reductions such as the row subsampling used above keep the footprints that the criterion needs intact.
\subsection{Verification and robustness}
\label{sec:app-synthetic}

This subsection verifies the synthetic evidence behind Section~\ref{sec:exp-synthetic} and then stress-tests it from every side: the regime construction itself, the end-to-end procedure, the optimizer and the training schedule, corrupted dependency signals, the stability guarantee of Theorem~\ref{thm:approx-recovery}, and the boundary of Assumption~\ref{asm:functional-no-cancellation}. Each block below states its conclusion in its own heading.

\paragraph{Recovery follows Structural Diversity at every dimension.} The regime map of Figure~\ref{fig:structdiv-regimes} carries weight only if each panel realizes the regime the theory names, so the sweep begins by verifying the conditions on the support masks themselves: the diverse pattern satisfies the prior certificates, the nested pattern violates Non-Inclusion and both structural-sparsity certificates, the minimal-difference pattern also violates Structural Variability, and the identical pattern violates Structural Diversity itself. Figure~\ref{fig:regime-scaling} then reports the full sweep from $N=8$ to $N=128$, five runs per point, on analytic orbits with nonlinear observed variables and exact anchor-wise dependency Jacobians, so estimation error plays no role in this comparison. Recovery stays at ceiling in every regime that satisfies Structural Diversity while the unrotated baseline decays steadily with dimension. Meanwhile, the identical regime sits at its shared-subspace level, which is the best that support information allows, and the pair span is still recovered. Each orbit draws a generator whose Jacobian support equals the prescribed footprint mask, with random nonlinear components on the active entries, so each panel's regime is enforced by construction and verified on the support masks before any run is scored.

\paragraph{The end-to-end procedure uses no oracle information.} The learned-encoder benchmark of Figure~\ref{fig:synthetic-mcc} runs the whole method end to end on the same family. An encoder is trained from scratch on consecutive-state pairs of the MLP world, its representation is whitened and frozen, and the rotation is fit from observations and estimated latents with the default settings, with recovery scored on held-out data. Nothing about the world, the footprints, or the latents enters training, so this benchmark measures the procedure a practitioner would run on new data.

\begin{figure}[tbp]
    \centering
    \includegraphics[width=0.52\linewidth]{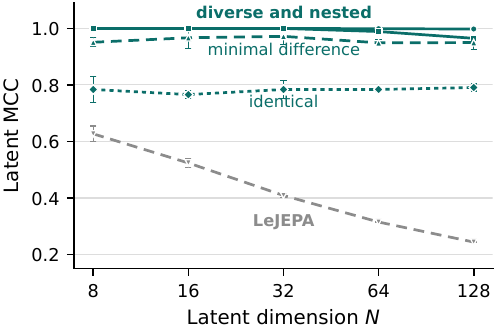}
    \caption{\textbf{Footprint regimes across dimension.} Recovery follows Structural Diversity at every dimension: diverse, nested, and minimal-difference footprints stay at ceiling while identical footprints sit at the shared-subspace level, far above the decaying baseline of the unrotated representation; the identical pair's span is still recovered, with CCA at least $0.998$ at every dimension of the full regime sweep.}
    \label{fig:regime-scaling}
\end{figure}

\paragraph{Optimizer choices carry no load.} Figure~\ref{fig:optimizer-ablation} swaps the sparsity surrogate and varies the anchor budget on the $N=16$ orbits, five runs each; each panel changes one choice while the other stays at the default ($\ell_1$, $128$ anchors), and every surrogate starts from the same six rotations. Recovery is insensitive to the surrogate on diverse footprints, and larger anchor budgets help exactly where the theory says the evidence is scarcest. In the minimal-difference regime a single observation row separates two footprints, and finding it takes more anchors. The two rightmost bars test the relaxation against the criterion it relaxes. A hard support count, thresholded at $\tau=0.05$ as in Theorem~\ref{thm:approx-recovery} and minimized by coordinate search over Givens rotations, recovers at least as well as the $\ell_1$ default at roughly twice the cost, and where a gap to exact recovery remains, as in the minimal-difference regime, refining the $\ell_1$ solution by the same count closes it within seconds.

\begin{figure}[tbp]
    \centering
    \includegraphics[width=0.78\linewidth]{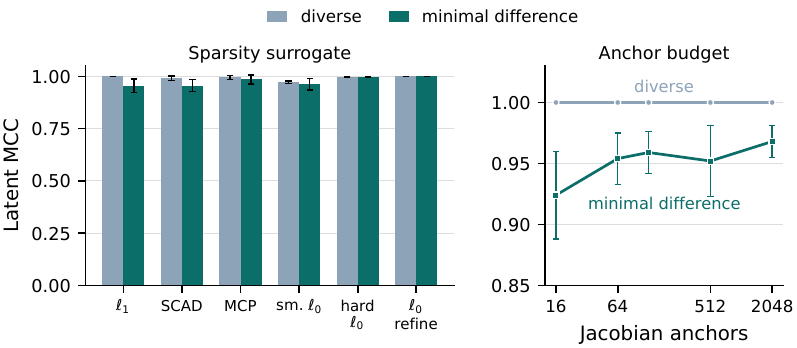}
    \caption{\textbf{Recovery is insensitive to the optimizer's choices.} Recovery is insensitive to the sparsity surrogate and to the anchor budget; only the minimal-difference regime rewards more anchors, exactly where the evidence is scarcest. The two rightmost bars replace the relaxation by a hard support count at $\tau=0.05$, minimized by coordinate search from the same six starts, and by that count used to refine the $\ell_1$ solution afterwards. Both series are DSReg selections, with color denoting the underlying footprint regime.}
    \label{fig:optimizer-ablation}
\end{figure}

\paragraph{The training schedule is immaterial.} Table~\ref{tab:joint-ablation} compares fitting the rotation after training (the paper's default), jointly with it, and jointly followed by the decoupled fit, on Gaussian 3DShapes with twenty runs per arm on a shared encoder trajectory. The joint head tracks the decoupled solution and the final fit closes the remaining gap, so the schedule is purely a matter of convenience.

\begin{table}[H]
\centering
\small \caption{\textbf{Training-schedule ablation on Gaussian 3DShapes.} Latent MCC (mean~$\pm$~std), twenty runs per arm on a shared encoder trajectory; the rotation head never feeds back into the encoder.}
\label{tab:joint-ablation}
\begin{tabular}{lc}
\toprule
Schedule & Latent MCC \\
\midrule
LeJEPA (no rotation) & $0.640\pm0.041$ \\
Joint & $0.906\pm0.014$ \\
Joint, then decoupled fit & $0.922\pm0.005$ \\
Decoupled (default) & $0.921\pm0.005$ \\
\bottomrule
\end{tabular}
\end{table}

\paragraph{Recovery degrades gracefully under corrupted Jacobians.} Figure~\ref{fig:structdiv-noise-robustness} corrupts the dependency Jacobian consumed by the optimizer with Gaussian noise, on fresh analytic orbits at $N=16$ with $h=Qz$ and five runs per level. The observation rows carry the same nonlinear channels as the rest of the appendix, so Assumption~\ref{asm:functional-no-cancellation} holds and the noiseless cell is a genuine signed permutation rather than a point on the excluded stratum of Figure~\ref{fig:linear-stratum}. Recovery stays at ceiling across the whole noise range, and the identical-footprint control stays at its shared-subspace level throughout, so the corruption degrades the two regimes by the same small amount without moving either off its own ceiling. The behavior is what one expects from an estimator of a stable population criterion.

\begin{figure}[tbp]
    \centering
    \includegraphics[width=0.52\linewidth]{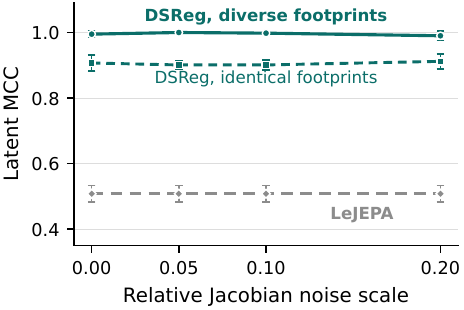}
    \caption{\textbf{Corrupted Jacobians leave recovery at ceiling.} Diverse footprints stay above $0.99$ through noise $0.20$, and identical footprints stay at their predicted shared-subspace level. Nonlinear observation rows, so Assumption~\ref{asm:functional-no-cancellation} holds; $N=16$, five seeds. LeJEPA applies no rotation, so its curve is flat by construction.}
    \label{fig:structdiv-noise-robustness}
\end{figure}

\paragraph{Dense readouts are unchanged by the rotation.} A further check returns to the MLP benchmark of Section~\ref{sec:exp-synthetic}. Figure~\ref{fig:mlp-control} compares the two methods at $N=8$. Dense recovery is identical for the two methods, while individual-latent recovery, few-shot readout from eight labels, and sparse edit error move decisively in DSReg's favor. Notably, the few-shot readout swings from below zero, worse than predicting the mean, to within a hair of the dense readout ceiling.

\begin{figure}[tbp]
    \centering
    \includegraphics[width=0.72\linewidth]{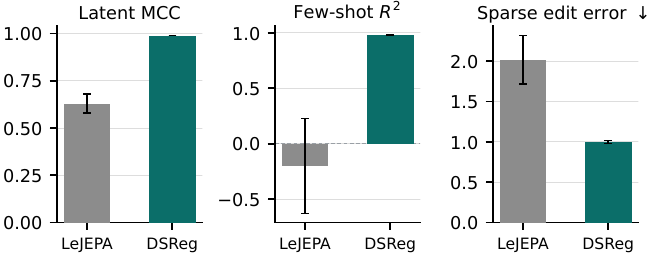}
    \caption{\textbf{Sparse access improves while dense content is unchanged.} At identical dense recovery ($R^2=0.988$ for both methods), the rotation transforms individual-latent recovery, few-shot readout from eight labels, and sparse editing, each moving decisively in DSReg's favor on the MLP benchmark at $N=8$.}
    \label{fig:mlp-control}
\end{figure}

\paragraph{The stability guarantee holds with room to spare.} Theorem~\ref{thm:approx-recovery} guarantees recovery whenever $\tau+\varepsilon<\sigma\rho^*(\delta)$, and a dedicated sweep instantiates every quantity in that inequality on analytic orbits with nonlinear observation rows and verified distinct footprints ($N\in\{4,8,16\}$, five seeds), measuring $\sigma$ and $\rho^*(\delta)$ directly from the data, corrupting the Jacobians at magnitude $\varepsilon$, and running both the thresholded criterion and the $\ell_1$ implementation over a grid of $(\tau,\varepsilon)$ cells. Every cell satisfying the guarantee succeeds, for both estimators, at every dimension, and success extends well beyond the guaranteed region, so the bound is sufficient rather than tight (Figure~\ref{fig:theorem3-phasemap}). A control with linear observation rows trips the Assumption~\ref{asm:functional-no-cancellation} check, confirming that the verification is not vacuous.

\begin{figure}[tbp]
    \centering
    \includegraphics[width=\linewidth]{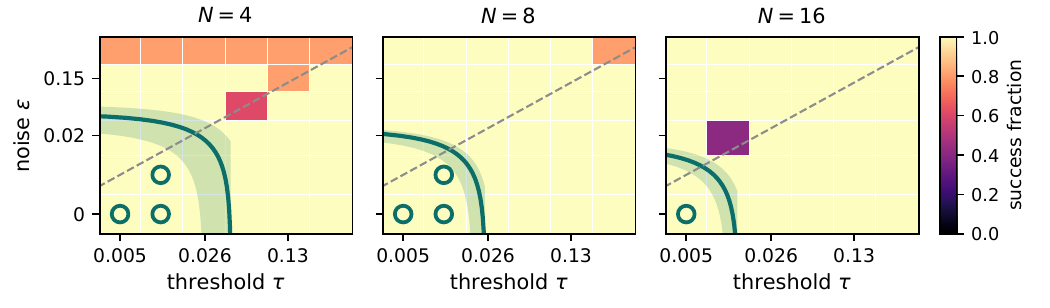}
    \caption{\textbf{Success strictly contains the guarantee.} Success fraction of the thresholded criterion over the $(\tau,\varepsilon)$ grid, five seeds. Solid curve: the guarantee boundary $\tau+\varepsilon=\sigma\rho^*(\delta^*)$ of Theorem~\ref{thm:approx-recovery}, with seed-range shading; circled cells satisfy the guarantee for every seed, and all of them succeed. The $\ell_1$ rotation used throughout the paper reproduces this map on all but four of the $108$ cells, again succeeding on every guaranteed cell.}
    \label{fig:theorem3-phasemap}
\end{figure}

\paragraph{A small nonlinear component already restores recovery.} Assumption~\ref{asm:functional-no-cancellation} excludes, by design, worlds whose observation rows are all exactly linear in their active latents, and a second sweep measures how thin that excluded boundary is. Each row interpolates between a linear and a nonlinear function, $x_r=(1-\gamma)\,\ell_r+\gamma f_r$, so $\gamma=0$ places every row on the excluded linear stratum; the measured margin $\sigma$ grows from exactly $0$ with $\gamma$, and recovery follows it. On the stratum the criterion returns a mixed rotation rather than a signed permutation, and recovery is restored from $\gamma^*\!\approx\!0.1$ at every dimension tested ($N\in\{8,16,32\}$, ten seeds, exact and estimated Jacobians in close agreement), identically in the diverse and nested regimes (Figure~\ref{fig:linear-stratum}).

\begin{figure}[tbp]
    \centering
    \includegraphics[width=\linewidth]{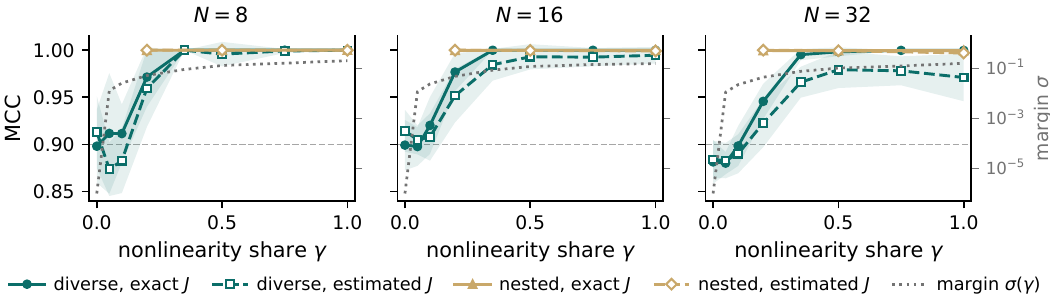}
    \caption{\textbf{The excluded linear stratum is thin.} MCC against the nonlinearity share $\gamma$ with the measured no-cancellation margin $\sigma$ overlaid (dotted, right axis). Where Assumption~\ref{asm:functional-no-cancellation} fails ($\gamma=0$, $\sigma=0$) the criterion returns a mixed rotation; recovery crosses the $0.9$ level (dashed) at $\gamma^*\approx0.1$ and reaches the ceiling by $\gamma\approx0.35$ at every dimension, in both regimes, with exact and estimated Jacobians in close agreement.}
    \label{fig:linear-stratum}
\end{figure}

\subsection{Sparse-use probes: environments and procedures}
\label{sec:app-sparse-use-details}

This section specifies the environments and probe constructions behind Figure~\ref{fig:sparse-use-summary}, in enough detail to reproduce them and to check that no probe smuggles in oracle information.

\paragraph{Every probe sees the full state through a random rotation.} The probes of Section~\ref{sec:exp-sparse-use} run in three environments, TwoRoom (navigation between two rooms), PushT (block pushing), and FetchSlide (robot-arm puck sliding); a scripted waypoint policy with small action noise collects trajectories rendered at $64\times64$. The representation is the analytic orbit $h=Qz$ of Figure~\ref{fig:sparse-use-summary}, the standardized environment state mixed by a random orthogonal $Q$ drawn per run. The dependency rotation is fit as everywhere else in the paper: a fixed pooled-pixel feature map chosen before fitting, nearest-neighbor ridge Jacobians at anchor points, and the anchor-averaged $\ell_1$ criterion of Section~\ref{sec:practical-estimator} under the default dependency settings of Section~\ref{sec:app-metric-defs}, with no benchmark-specific tuning.

\paragraph{Editing succeeds only when one coordinate moves one factor.} Start and goal frames are chosen so that the goal moves one ground-truth factor substantially while leaving the others nearly fixed (two factors in the compositional PushT variant). The planner selects the estimated coordinate with the largest start-to-goal change, interpolates only that coordinate (the two largest in the compositional variant), and retrieves the nearest dataset frame at each step. An edit succeeds when the retrieved endpoint reaches the goal in ground-truth state space while the off-target factors stay close to their start values. A mixed representation fails structurally: one coordinate drags several factors.

\paragraph{A capped sparse model is well specified only in the physical basis.} The control, rollout, and surprise probes share one protocol. Train and test episodes are disjoint, and from $64$ logged training transitions the module fits a linear model of the representation change: the candidate inputs are the estimated latents and the actions, each output coordinate keeps only its top few inputs by absolute correlation (two for control and monitoring, one for rollout) for a ridge regression on those alone, and a separate linear readout maps representations to states for scoring goals and endpoints in state terms. In the physical basis the dynamics are approximately sparse and the capped model is close to well specified; in a rotated basis every input and output mixes all physical variables and the same capped model becomes systematically misspecified rather than merely noisy under the same cap.

\paragraph{Control skill lies entirely in ranking real action sequences.} Each query pairs a start frame with the state actually reached eight steps later, far enough to make planning nontrivial. The planner, given no policy and no gradients, ranks eight-step action subsequences replayed from the training episodes, each rolled through the fitted model from the start representation; the one whose predicted endpoint lands closest to the goal is executed in the simulator from the true start state, and success means reaching the goal within a fixed relative error. Ranking is what a misspecified model cannot do, and the simulator scores the executed plan honestly either way, so no model error can hide.

\paragraph{Open-loop iteration compounds model error.} With one input feature per output coordinate, the model is iterated open loop for five steps from held-out starts, feeding the logged actions so that only dynamics prediction is tested, and the final representation is decoded to a state and scored by $R^2$ against the state truly reached. On the unrotated representation the probe does worse than predicting the dataset mean, while the identical procedure on the rotated representation recovers a usable five-step model from the same budget of $64$ logged transitions.

\paragraph{A misspecified monitor hides violations inside its own error.} A held-out transition is corrupted by replacing one factor of the true next state with that factor's value from a different transition, chosen so that the jump is large and physically implausible. Clean and corrupted transitions are scored by the sparse model's prediction error, and the probe reports the separation as AUROC. Orthogonal maps preserve norms, so the corruption is exactly as large in the mixed representation as in the recovered one; what differs is the monitor's baseline, since a well-specified sparse model predicts clean transitions tightly while a misspecified one carries a large and variable clean error inside which the same violation can hide and slip past the detector without raising surprise.

\paragraph{No probe uses oracle information.} None of the probes needs the permutation or the signs that Theorem~\ref{thm:idjepa-zheng} leaves free: the editing probes select their coordinate from the start-to-goal difference and the model-based probes select their features by correlation, both name-free procedures, so every probe is invariant to exactly the ambiguity the theory does not resolve. All thresholds and budgets are fixed once and shared by both representations, with the same data, model class, and planner; the fitted rotation is the only difference between the two series in each panel of Figure~\ref{fig:sparse-use-summary}.

\subsection{Learned visual encoders}
\label{sec:app-learned-details}

The experiments of Section~\ref{sec:exp-visual-encoders} replace the analytic orbit with encoders trained from pixels. This section details those encoders, their probes, and the protocol behind them.

\paragraph{Setup.} Both visual suites render $N=8$ independent Gaussian factors at $64\times64$: in the visual-factor suite each factor sets the intensity of one localized colored part of the image, and in the object-scene suite each factor moves one colored object along its own lane. Consecutive frames follow the stationary transition of Section~\ref{sec:preliminaries}. The encoder is a small strided convolutional network with a linear head, trained with the alignment objective and LeJEPA's Gaussianity regularizer; nothing in the architecture or the objective is specific to DSReg. The supervised Procrustes oracle reported alongside solves the orthogonal Procrustes problem from the whitened representation onto the ground-truth latents, so it uses labels and marks the ceiling available to any method that can only rotate. The dependency rotation itself is fit exactly as on the analytic benchmarks, from pooled-pixel features and local ridge Jacobians. Labels enter only through benchmark construction and the evaluation metrics.

\paragraph{The gain survives learned encoders.} Learning replaces the exact orbit with an imperfect one, and these experiments ask whether the rotation's gain survives that replacement. The protocol guards the answer against overfitting. The rotation is fit on one split of estimated latents and fixed pooled-pixel features, and every recovery and sparse-use score in Table~\ref{tab:trained-visual-scaling} is computed on the held-out split. The ten-run aggregate shows the same signature as the analytic benchmarks, with dense $R^2$ unchanged while latent MCC, few-shot single-latent readout, and sparse-use scores all improve (Figure~\ref{fig:trained-visual-encoder-metrics}). Moreover, the probe families of Section~\ref{sec:exp-sparse-use} carry over to the same pixel-trained encoders, with control error falling, rollout $R^2$ rising, surprise detection improving, and DSReg ahead in every run of all three probes from pixels (Figure~\ref{fig:trained-visual-encoder-metrics}). The adaptation to pixels is minimal: transitions come from actuated episodes whose actions are recorded, the budgets and feature caps of Section~\ref{sec:app-sparse-use-details} are kept, control is scored by relative endpoint error under the benchmark's mean transition, and the surprise violations are rendered and re-encoded rather than transformed analytically, so the monitor sees the corruption only through the encoder, exactly as it would in deployment.

\begin{table}[tbp]
\centering
\caption{\textbf{Split-audited learned visual encoders.} The analytic signature survives learning: dense recovery is unchanged while individual-latent recovery, few-shot readout, and sparse use improve, with every score computed on a held-out split (the rotation is fit on a separate split of estimated latents and fixed pooled-pixel features). Arrows are ten-run means for LeJEPA $\to$ DSReg; sparse error is lower better.}
\label{tab:trained-visual-scaling}
\setlength{\tabcolsep}{4.5pt}
\begin{tabular}{@{}llcccc@{}}
\toprule
Benchmark & $N$ & Dense $R^2$ & MCC & Few-shot $R^2$ & Sparse error $\downarrow$ \\
\midrule
Visual factors & 8 & $0.918\pm0.002$ & $0.625\to0.958$ & $-0.081\to0.882$ & $2.13\to1.35$ \\
Object scenes & 8 & $0.924\pm0.002$ & $0.591\to0.961$ & $-0.143\to0.899$ & $2.29\to1.41$ \\
\bottomrule
\end{tabular}
\end{table}

\begin{figure}[H]
    \centering
    \includegraphics[width=0.86\linewidth]{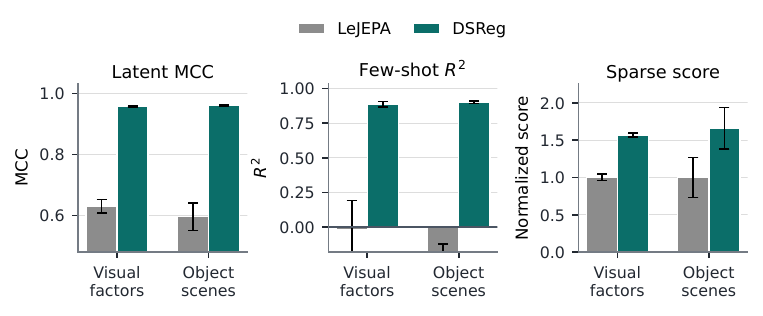}
    \caption{\textbf{The analytic signature survives pixel training.} The gain concentrates where individual latents matter: component-wise recovery and single-latent readout improve sharply, and sparse use follows. A negative few-shot value underperforms the held-out mean; the sparse-use score is normalized so that LeJEPA sits at one. On the pixel probes, sparse control error drops from $0.63$ to $0.52$, five-step sparse rollout $R^2$ rises from $0.74$ to $0.81$, and transition-surprise AUROC from $0.995$ to $0.999$ (LeJEPA\,/\,DSReg, ten runs).}
    \label{fig:trained-visual-encoder-metrics}
\end{figure}

\paragraph{Width mismatch costs nothing.} Figure~\ref{fig:learned-width} in the main text removes the one remaining piece of oracle knowledge in the encoder design, the assumption that representation width matches the latent dimension. With width $M\in\{16,32\}$ at true $N=8$, the top-$N$ right-singular subspace of the estimated dependency Jacobians selects the directions the observed features depend on. It is worth noting that variance could not make this selection, since every direction has unit variance under the Gaussian constraint. Within that subspace the dense span is recovered and DSReg again matches the supervised oracle to within $0.001$ at every width: the dependency structure finds the latent subspace inside a larger representation on its own, with no other use of the true dimension.

\paragraph{Training imbalance is tolerated.} Real training data is rarely balanced, so Figure~\ref{fig:distribution-stress} stress-tests the same protocol under imbalanced training distributions: long-tail settings make some factors rare during training, correlated settings make pairs of factors co-vary, and evaluation returns to a balanced held-out distribution in both cases, with the rotation fit on an unlabeled dependency split from the same training distribution (five runs). Dense $R^2$ stays between $0.80$ and $0.93$, unchanged by the rotation, and DSReg continues to improve individual-latent recovery in every setting, so the rotation does not depend on the training distribution matching the isotropic ideal of the theory, only on the dependency structure itself, which imbalance leaves intact by construction.

\begin{figure}[H]
    \centering
    \includegraphics[width=0.62\linewidth]{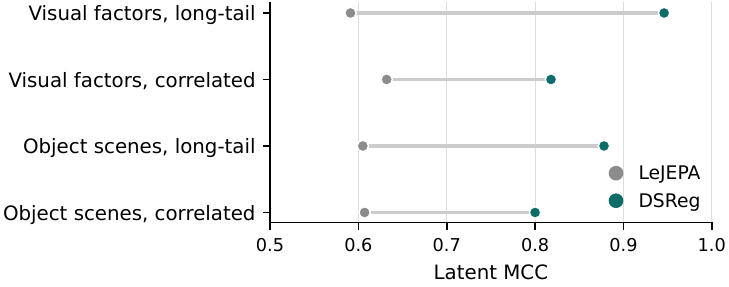}
    \caption{\textbf{Training imbalance does not break the rotation.} Long-tail and correlated training distributions leave the recovery gain intact, with the dense span fully preserved throughout.}
    \label{fig:distribution-stress}
\end{figure}

\FloatBarrier
\subsection{Gaussian 3DShapes}
\label{sec:app-shapes3d}

The first external-renderer benchmark tests DSReg on images whose generator we did not build, so the footprint structure the criterion reads is entirely the renderer's own.

\paragraph{The recovery carries over to an external renderer.} The 3DShapes benchmark uses the DeepMind renderer \citep{kim2018disentangling}: $z\sim \mathcal N(0,I_6)$ with OU pairing ($\rho=0.95$), each latent CDF-mapped onto one factor grid (floor hue, wall hue, object hue, scale, shape, orientation), and the pre-rendered RGB image as the observation, so the latent-to-pixel footprint structure is not of our design. Table~\ref{tab:shapes3d-gaussian} reports the aggregate scores; its caption explains why the low dense $R^2$ of the external $\beta$-VAE and $\beta$-TCVAE references measures axis-alignment rather than lost factor information.

\paragraph{The dependency criterion recovers the factors that move few pixels.} Figure~\ref{fig:shapes3d-perfactor} splits the scores per factor. The three hue factors paint large pixel regions and every method captures them to some degree, while scale, shape, and orientation move few pixels, and there the VAE baselines collapse outright. Reconstruction-driven objectives weight factors by the pixels they explain; the dependency criterion asks only which features a factor touches, so pixel-poor and pixel-rich factors fare alike.

\begin{table}[H]
\centering
\caption{\textbf{DSReg improves latent MCC on every run of Gaussian 3DShapes while matching LeJEPA's dense recovery.} Scores are latent MCC, DCI, MIG, SAP, and informativeness (test $R^2$ of a boosted readout), twenty runs, on an external renderer. The LeJEPA row is the same frozen representation DSReg rotates, which is why the two share dense $R^2$; the $\beta$-VAE/$\beta$-TCVAE rows are external references with the matched trunk, trained on i.i.d.\ images at an untuned $\beta{=}4$; their low dense $R^2$ reflects a linear readout only, with nonlinear informativeness staying high, so the gap measures axis alignment rather than lost information.}
\label{tab:shapes3d-gaussian}
\small
\setlength{\tabcolsep}{3pt}
\begin{tabular}{@{}lcccccc@{}}
\toprule
Method & Latent MCC & DCI & MIG & SAP & Info.\ $R^2$ & Dense $R^2$ \\
\midrule
LeJEPA & $0.640\pm0.041$ & $0.298\pm0.044$ & $0.069\pm0.026$ & $0.212\pm0.074$ & $0.854\pm0.003$ & $0.861\pm0.002$ \\
$\beta$-VAE & $0.272\pm0.025$ & $0.318\pm0.086$ & $0.080\pm0.038$ & $0.102\pm0.030$ & $0.738\pm0.076$ & $0.191\pm0.010$ \\
$\beta$-TCVAE & $0.277\pm0.043$ & $0.345\pm0.117$ & $0.102\pm0.067$ & $0.109\pm0.052$ & $0.751\pm0.087$ & $0.199\pm0.027$ \\
DSReg & $0.921\pm0.005$ & $0.901\pm0.017$ & $0.419\pm0.013$ & $0.842\pm0.016$ & $0.868\pm0.003$ & $0.861\pm0.002$ \\
\bottomrule
\end{tabular}
\end{table}

\begin{figure}[tbp]
    \centering
    \includegraphics[width=0.8\linewidth]{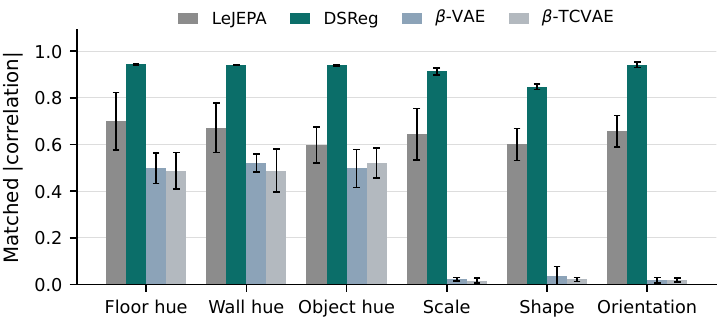}
    \caption{\textbf{Per-factor recovery on Gaussian 3DShapes.} DSReg improves every factor and alone recovers scale, shape, and orientation, where the VAE baselines collapse almost entirely.}
    \label{fig:shapes3d-perfactor}
\end{figure}

\paragraph{The recovery is visible without a decoder.} Sweeping one factor with the others held fixed selects real rendered images from the exhaustive factor grid, and the learned latents are evaluated on exactly those frames; factor labels enter only in this evaluation and never during learning. In Figure~\ref{fig:shapes3d-interventions} each sweep moves a single rotated latent while the others stay flat and the unrotated LeJEPA coordinates respond weakly and jointly. The matched correlation matrices of Figure~\ref{fig:shapes3d-corr} carry the same contrast in aggregate over the evaluation set, with no decoder anywhere in the loop.

\begin{figure}[tbp]
    \centering
    \includegraphics[width=\linewidth]{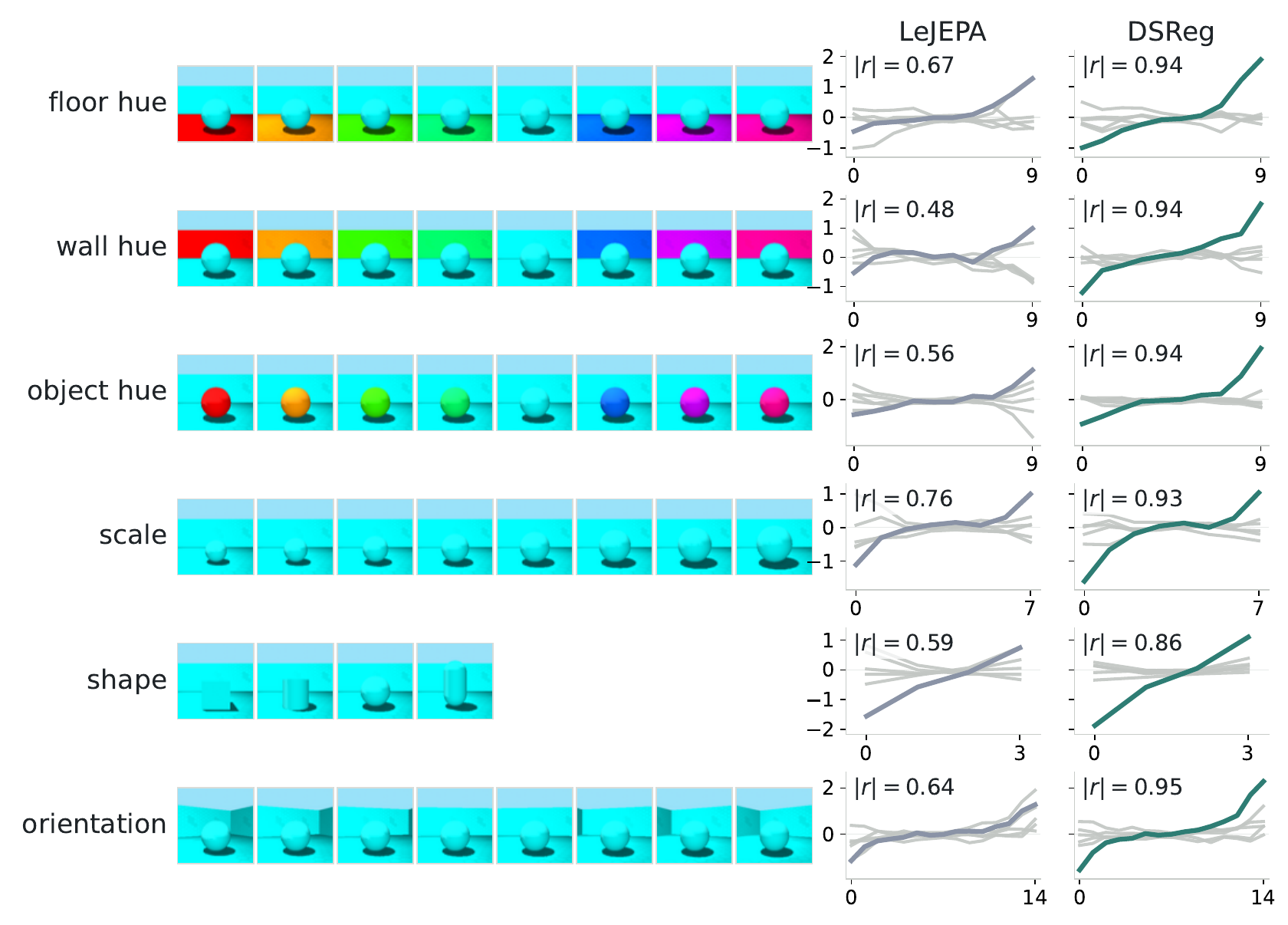}
    \caption{\textbf{The rotation turns entangled responses into one selective response per factor.} One run of Gaussian 3DShapes. Each row sweeps one factor with all others fixed (rendered frames on the left); the right panels show every learned latent's standardized response in LeJEPA and in DSReg coordinates, on a shared scale per row. The flat gray curves carry the no-mixing claim across all six factors.}
    \label{fig:shapes3d-interventions}
\end{figure}

\begin{figure}[tbp]
    \centering
    \includegraphics[width=0.62\linewidth]{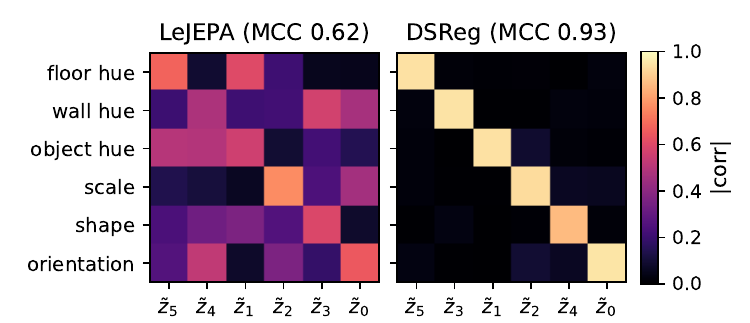}
    \caption{\textbf{Matched correlation heatmaps.} For the run of Figure~\ref{fig:shapes3d-interventions}, columns are permuted by each method's own best assignment. The same encoder yields a smeared matrix in the unrotated coordinates and a near-diagonal one after rotation, in agreement with the per-factor responses of the figure above.}
    \label{fig:shapes3d-corr}
\end{figure}

\FloatBarrier
\subsection{Quarter-orientation dSprites}
\label{sec:app-dsprites}

The second external renderer is harder by construction: a symmetry of the sprites breaks the linear premise unless the factor grid is restricted, making it a test under an imperfect linear stage.

\paragraph{Setup.} The dSprites benchmark \citep{matthey2017dsprites} follows the same protocol as 3DShapes: $z\sim\mathcal N(0,I_5)$ with OU pairing ($\rho=0.95$), each latent CDF-mapped onto one factor grid (shape, scale, orientation, position~$x$, position~$y$), and the rendered image used as the observation. Orientation is restricted to a quarter turn because the sprites are rotationally symmetric, so orientation over a full turn is not a function of the image; without the restriction the linear-identifiability premise fails for every method. What remains is a benchmark on which the premise is only partially attained, a stress test of the rotation under an imperfect linear stage rather than another clean win. The dense $R^2$ is also the binding ceiling: it is invariant to any rotation, so no selection criterion can recover more of the factors than the encoder linearly contains, and the scores should be read against that ceiling.

\paragraph{The premise, once attained, is the only bottleneck.} Figure~\ref{fig:dsprites-quarter} shows the outcome over twenty runs, with each VAE at its best $\beta$ from the sweep $\{1,2,4,8,16\}$ and twenty runs per $\beta$. DSReg improves over LeJEPA and exceeds both VAE baselines (one-sided Mann--Whitney $p=8\times10^{-5}$ and $p=4\times10^{-5}$), which fits the pattern seen throughout the paper, where the premise rather than the rotation step is what binds recovery on every benchmark in the paper.


\begin{figure}[H]
    \centering
    \includegraphics[width=0.46\linewidth]{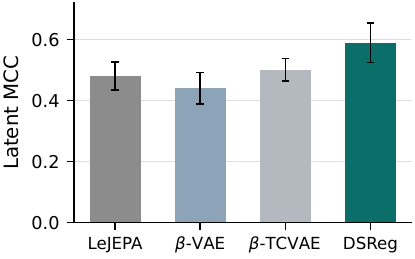}
    \caption{\textbf{DSReg beats both VAE baselines under a partially attained premise.} Quarter-orientation dSprites, twenty runs, dense $R^2\approx0.53$ as the rotation-invariant ceiling; each VAE at its best $\beta$. One-sided Mann--Whitney tests are against $\beta$-VAE and $\beta$-TCVAE at their own best $\beta$; higher MCC is better.}
    \label{fig:dsprites-quarter}
\end{figure}

\FloatBarrier
\subsection{Rotation baselines and dense-use sanity checks}
\label{sec:app-baselines}

The last question is whether anything simpler could have done the work. The baselines get two chances: rotations that never see observations, and a family reading the same signal.

\paragraph{Latent-only rotations cannot break the rotational symmetry.} The latents of the exact analytic orbit are isotropic Gaussian, so PCA, Varimax, and FastICA face the very symmetry that motivates DSReg and stay at the unrotated level alongside LeJEPA and a random orbit, while DSReg from exact Jacobians approaches the oracle and drives the dependency objective to the oracle level (Table~\ref{tab:rotation-baselines}). The residual gap reflects the $\ell_1$ relaxation rather than estimation error: on a fraction of random worlds the relaxation's global minimizer is a slightly mixed rotation even though the support criterion of Theorem~\ref{thm:idjepa-zheng} still identifies the permutation. Each baseline, with Varimax and FastICA applied after whitening, returns an orthogonal transformation of the same estimated latents DSReg receives.

\begin{table}[tbp]
\centering
\caption{\textbf{Latent-only rotation baselines.} The observation-side signal rather than the span is what does the work: all methods start from the same linearly identifiable representation, but PCA, Varimax, and FastICA see only estimated-latent samples and cannot break the rotational symmetry, while the dependency criterion $\partial x/\partial(Rh)$ can, because it also sees how the observed variables respond to the applied rotation.}
\label{tab:rotation-baselines}
\fitwidth{
\begin{tabular}{lccc}
\toprule
Method & Latent MCC & $R^2(h\to z)$ & Dependency sparsity $\downarrow$ \\
\midrule
LeJEPA & $0.65\pm0.03$ & $1$ & $0.32\pm0.02$ \\
Random orbit & $0.64\pm0.03$ & $1$ & -- \\
PCA & $0.62\pm0.04$ & $1$ & -- \\
Varimax & $0.64\pm0.03$ & $1$ & -- \\
FastICA & $0.64\pm0.05$ & $1$ & -- \\
DSReg & $0.90\pm0.05$ & $1$ & $0.18\pm0.02$ \\
Oracle & $1$ & $1$ & $0.18\pm0.02$ \\
\bottomrule
\end{tabular}}
\end{table}

\paragraph{Freed from the symmetry, the baselines still leave the latents mixed.} On the learned, non-isotropic representations the baselines have a real opening, yet Figure~\ref{fig:learned-baselines} in the main text shows the latents still mixed while DSReg nearly matches the supervised Procrustes oracle of Section~\ref{sec:app-learned-details}. Whatever asymmetry learning leaves in the representation does not point toward the world's variables; the observation-side dependency signal does, on both learned visual suites.

\paragraph{Dense readouts and retrieval do not move.} Table~\ref{tab:dense-sparse-sanity} checks the other side: dense readouts and retrieval are unchanged to two decimals on every task, so everything DSReg gains over these baselines, it gains in sparse access rather than in the content the readouts measure.

\begin{table}[tbp]
\centering
\caption{\textbf{Dense versus sparse-use sanity check.} Everything the rotation gains, it gains in sparse access: dense readout and ordinary retrieval are unchanged, with dense $R^2$ at $1/1$ (LeJEPA\,/\,DSReg) for every task, while sparse-control error drops whenever only a few estimated latents may be used.}
\label{tab:dense-sparse-sanity}
{\small
\setlength{\tabcolsep}{4pt}
\begin{tabular}{lcc}
\toprule
Task & Retrieval cost LeJEPA / DSReg & Sparse err.\ LeJEPA / DSReg \\
\midrule
TwoRoom & $1.31 / 1.31$ & $0.81 / 0.58$ \\
PushT & $1.45 / 1.45$ & $1.45 / 1.10$ \\
FetchSlide & $1.43 / 1.43$ & $1.24 / 0.85$ \\
\bottomrule
\end{tabular}}
\end{table}

\paragraph{Access to the dependency signal alone does not close the gap.} A stronger baseline consumes the signal itself. On ten fresh runs of each renderer benchmark we optimize the scale-invariant column-support sparse-ICA criterion of \citet{ng2023sparseica} on the same estimated anchor Jacobians that DSReg consumes, with the same orthogonal parameterization and restarts, and it recovers part of the structure while DSReg stays ahead on both benchmarks (Table~\ref{tab:sparse-ica-baseline}). The advantage lies in the criterion itself rather than in mere access to the signal that both criteria consume.

\begin{table}[H]
\centering
\small \caption{\textbf{Sparse ICA on the same dependency signal recovers less.} Latent MCC, mean~$\pm$~std over ten fresh runs per renderer benchmark; both criteria consume the same estimated anchor Jacobians, orthogonal parameterization, and restarts. Both rows share every setting except the criterion.}
\label{tab:sparse-ica-baseline}
\begin{tabular}{lcc}
\toprule
Criterion & Gaussian 3DShapes & Quarter-orientation dSprites \\
\midrule
Column-support sparse ICA & $0.76\pm0.06$ & $0.57\pm0.05$ \\
DSReg & $0.92\pm0.01$ & $0.60\pm0.08$ \\
\bottomrule
\end{tabular}
\end{table}

\FloatBarrier

\end{document}